\def\eqref#1{equation~\ref{#1}}
\def\1{\bm{1}}
\def\vb{{\bm{b}}}
\def\vc{{\bm{c}}}
\def\vd{{\bm{d}}}
\def\ve{{\bm{e}}}
\def\vf{{\bm{f}}}
\def\vg{{\bm{g}}}
\def\vn{{\bm{n}}}
\def\vp{{\bm{p}}}
\def\vq{{\bm{q}}}
\def\vr{{\bm{r}}}
\def\vs{{\bm{s}}}
\def\vu{{\bm{u}}}
\def\vv{{\bm{v}}}
\def\vw{{\bm{w}}}
\def\vx{{\bm{x}}}
\def\vy{{\bm{y}}}
\def\vz{{\bm{z}}}
\def\mA{{\bm{A}}}
\def\mB{{\bm{B}}}
\def\mD{{\bm{D}}}
\def\mE{{\bm{E}}}
\def\mF{{\bm{F}}}
\def\mI{{\bm{I}}}
\def\mJ{{\bm{J}}}
\def\mK{{\bm{K}}}
\def\mM{{\bm{M}}}
\def\mP{{\bm{P}}}
\def\mR{{\bm{R}}}
\def\mS{{\bm{S}}}
\def\mU{{\bm{U}}}
\def\mV{{\bm{V}}}
\def\mW{{\bm{W}}}
\def\mX{{\bm{X}}}
\DeclareMathAlphabet{\mathsfit}{\encodingdefault}{\sfdefault}{m}{sl}
\SetMathAlphabet{\mathsfit}{bold}{\encodingdefault}{\sfdefault}{bx}{n}
\newcommand{\R}{\mathbb{R}}
\newcommand{\ie}{i.e., }
\newcommand{\eg}{e.g., }
\newcommand{\pdvinline}[2]{{\partial #1}/{\partial #2}}
\pgfplotsset{compat=1.18}
\newacronym{igns}{IGNS}{Information-preserving Graph Neural Simulators}
\newcommand{\model}{\acrshort{igns}}
\newcommand{\fullmodel}{\acrfull{igns}}
\newcommand{\longmodel}{\acrlong{igns}}
\newcommand{\modelinvariant}{$\text{\acrshort{igns}}_{\textbf{ti}}$}
\newtheorem{theorem}{Theorem}
\newtheorem{lemma}{Lemma}
\newtheorem*{theorem*}{Theorem}
\newtheorem{remark}{Remark}
\definecolor{rankone}{RGB}{255,183,77}  
\definecolor{ranktwo}{RGB}{0,121,107}   
\definecolor{rankthree}{RGB}{66,165,245} 
\newcommand{\best}[1]{\cellcolor{rankone!20}\textbf{#1}}   
\newcommand{\second}[1]{\cellcolor{ranktwo!25}\textbf{#1}} 
\newcommand{\third}[1]{#1}   
\newcommand{\rebuttal}[1]{#1}  
\title{Improving Long-Range Interactions in Graph Neural Simulators via Hamiltonian Dynamics}
\renewcommand*{\@fnsymbol}[1]{%
  \ifcase#1\or \dagger\or \ddagger\or
    \mathsection\or \mathparagraph\or \|\or **\or \dagger\dagger \or \ddagger\ddagger \else\@ctrerr\fi}
\author{Tai Hoang\thanks{Correspondence to \texttt{tai.hoang@kit.edu}. \quad $^*$Equal contribution.}\textsuperscript{\,\,$,1$} \ 
    Alessandro Trenta\textsuperscript{$*,2$} \ Alessio Gravina\textsuperscript{$*,2$} \\ 
    \textbf{Niklas Freymuth}\textsuperscript{$1$} \ \textbf{Philipp Becker\textsuperscript{$1$} \ Davide Bacciu\textsuperscript{$2$} \ Gerhard Neumann\textsuperscript{$1$}} \vspace{0.2cm} \\
    $^{1}$Karlsruhe Institute of Technology \ 
    $^{2}$University of Pisa \
}
\DeclareMathOperator{\diag}{diag}
\begin{document}

\maketitle

\begin{abstract}
Learning to simulate complex physical systems from data has emerged as a promising way to overcome the limitations of traditional numerical solvers, which often require prohibitive computational costs for high-fidelity solutions. 
Recent Graph Neural Simulators (GNSs) accelerate simulations by learning dynamics on graph-structured data, yet often struggle to capture long-range interactions and suffer from error accumulation under autoregressive rollouts. 
To address these challenges, we propose \fullmodel, a graph-based neural simulator built on the principles of Hamiltonian dynamics. 
This structure guarantees preservation of information across the graph, while extending to port-Hamiltonian systems allows the model to capture a broader class of dynamics, including non-conservative effects. 
\model\ further incorporates a warmup phase to initialize global context, geometric encoding to handle irregular meshes, and a multi-step training objective that facilitates PDE matching, where the trajectory produced by integrating the port-Hamiltonian core aligns with the ground-truth trajectory, thereby reducing rollout error. 
To evaluate these properties systematically, we introduce new benchmarks that target long-range dependencies and challenging external forcing scenarios. 
Across all tasks, \model\ consistently outperforms state-of-the-art GNSs, achieving higher accuracy and stability under challenging and complex dynamical systems. Our project page:  \href{https://thobotics.github.io/neural_pde_matching}{\small \texttt{\textcolor{blue}{https://thobotics.github.io/neural\_pde\_matching}}}.
\end{abstract}

\section{Introduction}

Simulating complex physical systems is central to many scientific domains. 
These systems are typically governed by partial differential equations (PDEs), which are rarely solvable in closed form. 
To approximate the solution, classical numerical solvers~\citep{dormand1996numerical,Ascher2008,Evans2010-PDEs} discretize the domain, obtaining a mesh in the case of the finite element method~\citep{brenner2008mathematical}. 
However, most solvers are tied to a specific kind of discretization, and sufficiently accurate simulations require fine-grained meshes, which quickly become prohibitively expensive.

In contrast, neural simulators directly learn system dynamics from data~\citep{10.1145/2939672.2939738,Bhatnagar2019,li2018learning,pfaff2021learning}, enabling simulations that can be several orders of magnitude faster than numerical methods. 
In particular, Graph Neural Simulators (GNSs)~\citep{pfaff2021learning,linkerhagner2023grounding,yu2024hcmt} leverage Graph Neural Networks (GNNs)~\citep{GNNsurvey,gravina_dynamic_survey} to capture interactions between entities, such as mesh elements, through message passing \citep{MPNN}. 
By encoding local geometry in graph edges, they achieve accurate and efficient simulations that generalize across meshes in applications ranging from fluid to elastic dynamics~\citep{10.5555/3524938.3525722,pfaff2021learning,yu2025piorf}.

GNSs are commonly trained with a one-step loss to predict the next state from the current observation, and then rolled out autoregressively to generate long trajectories during inference. 
While being effective over short horizons, even small global errors accumulate rapidly. 
Implicit~\citep{pfaff2021learning,brandstetter2022message} and explicit~\citep{wurth2025diffusion} denoising objectives alleviate this to some extent by reducing local noise, but the loss still fails to capture low-frequency drift that emerges only after several steps. 
Multi-step objectives have been proposed to overcome the shortcomings of this one-step training~\citep{lienen2022learning}. 
In practice, however, they only work over short horizons, since message passing quickly becomes unstable and loses information after a few steps~\citep{gravina_gcn-ssm}, making direct long-horizon training infeasible. 
This limitation is fundamental to existing GNSs, as their GNN architectures struggle with long-range dependencies due to over-smoothing~\citep{cai2020note,rusch2023survey}, over-squashing~\citep{alon2021oversquashing,diGiovanniOversquashing}, and more generally gradient degradation~\citep{gravina_gcn-ssm}.

We propose \fullmodel, a neural simulator that learns continuous dynamics in a latent space and enables effective information propagation across the graph for accurate predictions. A schematic of \model\ is shown in \cref{fig:fig1}; unlike existing approaches with dissipative updates that rapidly degrade information as message-passing depth increases, \model\ structures information flow through port-Hamiltonian dynamics~\citep{van2000l2,galimberti2021-unified_hdgn,Lanthaler2023-NeuralOscillators,gravina_phdgn}, better preserving long-range dependencies between distant nodes.
\rebuttal{By training with a multi-step objective, our setup explicitly learns the underlying PDE by matching the solution of the parameterized port-Hamiltonian dynamics to ground-truth trajectories over a time window $T$, enforcing states to remain on-trajectory and thereby reducing error accumulation~\citep{radler2025paintparallelintimeneuraltwins}. Each integration step in \model{} thus mirrors a forward update of a classical PDE solver, where signals naturally propagate in space as time evolves.}

In addition, \model\ incorporates a warmup phase to initiate global context before rollout and a geometric encoding module to capture local spatial relations on irregular meshes. 
Together, these components improve stability and generalization, enabling accurate simulation across diverse physical systems while maintaining a physically consistent inductive bias. 
On benchmarks targeting long-range and complex dynamics, \model{} achieves lower error and outperforms strong GNS baselines \rebuttal{by staying on-trajectory over long horizons with high accuracy up to $T{=}200$. By contrast, prior work either (i) uses rather short supervision windows ($T{\le}20$) at train and test~\citep{lienen2022learning,shi2023robocook} or trains on short-windows and then rolls out autoregressively~\citep{janny2023eagle}, or (ii) applies multi-step losses only in a reduced latent space rather than on the original graph~\citep{han2022predicting}.}

To summarize, we \textbf{(i)} introduce \fullmodel, a graph neural simulator built on the port-Hamiltonian formalism that supports information-preserving rollouts with high accuracy; \textbf{(ii)} analyze its theoretical properties, showing that \model{} maintains information flow across the spatial domain while capturing a broad class of dynamics; \textbf{(iii)} aim to mimic classical PDE solvers by matching, through a multi-step objective, the trajectory obtained by integrating the port-Hamiltonian core of \model{} with the ground-truth trajectory; and \textbf{(iv)} propose new tasks including Plate Deformation, Sphere Cloth, and Wave Balls, specifically designed to evaluate long-range propagation and oscillatory dynamics under external forcing.

\begin{figure}[t]
    \centering
    \includegraphics[width=\linewidth]{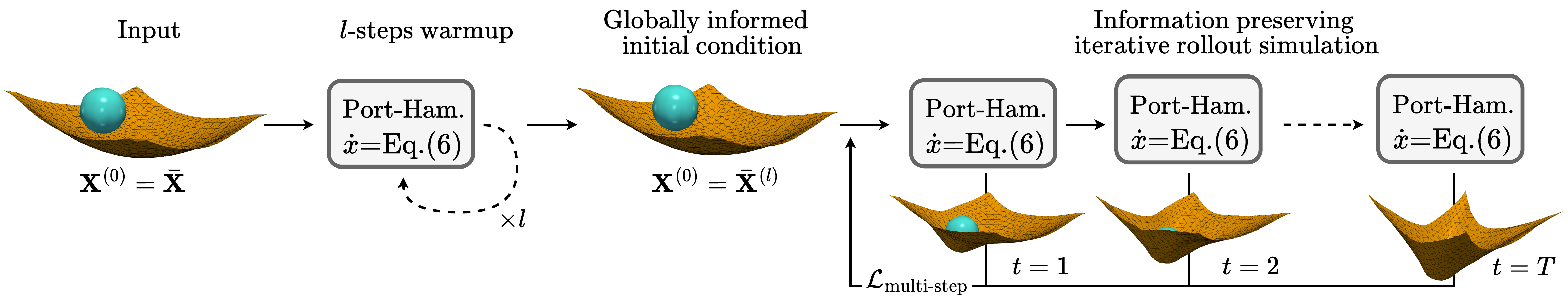}
    \caption{{A high-level overview of the proposed \model. The model takes as input the initial node state $\bar{\mX}$ and performs an $l$-step warmup phase (left), enabling each node to incorporate broader spatial context before the rollout begins. The enriched state is then used to initialize the rollout $\mX^{(0)} = \bar{\mX}^{(l)}$ (middle). During the simulation phase (right), the system evolves according to the port-Hamiltonian dynamics of \cref{eq:port_hamiltonian}, while the multi-step loss $\mathcal{L}_{\text{multi-step}}$ supervises all intermediate predictions, ensuring stable and accurate simulations.}}
    \label{fig:fig1}
\end{figure}

\section{Background}
\label{sec:background}
In this section, we provide an overview of PDEs for modeling physical systems, followed by a brief discussion of message-passing-based GNSs for learning such systems. An in-depth discussion of related works on GNSs,  effective propagation, and Hamiltonian-inspired neural networks is provided in Appendix~\ref{sec:related_work}.

\subsection{Partial Differential Equations for Physical Modeling}
Many physical phenomena can be described by 
PDEs, which relate the rates
of change of a quantity with respect to both space and time. A general form of a time-dependent PDE is given by
\begin{equation}
   \label{eq:pde_general}
   \partial_t u = F(u, \nabla u, \nabla^2 u, \ldots; \vx, t),\qquad (\vx,t)\in\Omega\times(0,T],
\end{equation}
where $u:\Omega\times[0,T]\to\mathbb{R}$ is the unknown function (without loss of generality, assumed to be a scalar field), $\Omega\subseteq\mathbb{R}^d$ is the spatial domain, and $F$ is a (possibly nonlinear) function of $u$ and its spatial derivatives.

To solve \cref{eq:pde_general} numerically, one typically discretizes the spatial domain using methods such as finite differences or finite elements, resulting in a system of ordinary differential equations (ODEs) that can be integrated over time using various time-stepping schemes \citep{igel2016fem}. This is often referred to as the method of lines, where spatial derivatives are approximated at discrete points, leading to a system of ODEs of the form
\begin{equation}
   \label{eq:ode_system}
   \dot{\vu}_i(t) = \vf_i(t, \vu(t)), \quad i=1,\ldots,n, \quad \vu(0) = \bar{\vu},
\end{equation}
where $\vu(t) = [\vu_1(t), \ldots, \vu_n(t)]^\top$ represents the state of the system at discrete spatial points, and $\vf_i$ encapsulates the spatial discretization of $F$ at point $i$. The initial condition $\bar{\vu}$ is derived from the initial state of the PDE.

\textbf{Remark.} While first-order systems as in \cref{eq:pde_general} are common, many physical phenomena are better captured by second-order dynamical systems \citep{Evans2010-PDEs}, which naturally arise in the modeling of oscillatory behavior like propagation of waves and vibrations. A general second-order PDE with damping and external forcing can be written as
\begin{equation*}
\partial_{tt} u + \mathcal{B}\,\partial_t u + \mathcal{L}\,u = g(\vx,t),\qquad (\vx,t)\in\Omega\times(0,T],
\end{equation*}
where $\mathcal{L}$ and $\mathcal{B}$ are linear operators on $L^2(\Omega)$, and $g$ is a prescribed forcing term. A detailed derivation of applying the method of lines to this equation is provided in Appendix~\ref{app:pde}, leading to a system of second-order ODEs.

\subsection{Message Passing for Model Learning}
Under the model learning settings, the goal is to learn the underlying dynamics (\ie the RHS of \cref{eq:pde_general}) from data. 
Similar to traditional numerical methods, we first need to discretize the spatial domain into a set of points or elements. This discretization can be naturally represented as a graph, $G=(V,E)$ where $V$ is the set of $n$ nodes (\ie the mesh entities) and $E \subseteq V\times V$ is the set of $m$ edges capturing their interactions. Then, following the method of lines, we obtain the following system of ODEs:
\begin{equation}
   \label{eq:ode_system_graph}
   \dot{\vx}_i^{(t)} = \vf_{i}(t, \mX^{(t)}; \mE; G), \quad i=1,\ldots,n, \quad \mX^{(0)} = \bar{\mX},
\end{equation}
where $\vx_i^{(t)}\in\mathbb{R}^d$ is the state vector of node $i$ at time $t$, $\mX(t) = [\vx_1^{(t)}, \ldots, \vx_n^{(t)}]^\top\in\mathbb{R}^{n\times d}$ is the node-state matrix, $\mE = [\ve_{ij}, \ldots]^T \in\mathbb{R}^{m\times c}$ is the edge-feature matrix, and $\vf_{i}$ is the unknown function that captures the interactions between nodes based on the graph structure $G$. Here, we also assume the initial condition $\bar{\mX}$ is given.

Solving this requires integrating \cref{eq:ode_system_graph} over time, which can be done using standard ODE solvers \citep{hairer1993solving}. However, standard Graph Neural Simulators (GNSs) simplify this process by further discretizing time using a fixed step size $\Delta t$ and applying 
the message-passing paradigm to compute the node states at the next time step $\vx^{(t+1)}$, leading to the following update rule
\begin{equation}
   \label{eq:time_integration}
   \vx^{(t+1)}_i = \vx^{(t)}_i + \Delta t\,\vg^L_\theta(\mX^{(t)};\mE; G).
\end{equation}
Here, $\vg^L_\theta$ denotes the function obtained by stacking $L$ message-passing layers, which aggregate information from neighboring nodes to update the state of each node, defined as
\begin{equation}
   \label{eq:mp_layer}
   \vx'_i = \phi_{\text{node};}\biggl(\vx_i, \sum_{j \in \mathcal{N}_i} \ve'_{ij}\biggr), \quad \ve'_{ij} = \phi_{\text{edge}}(\vx_i, \vx_j, \ve_{ij}).
\end{equation}
Here, $\phi_{\mathrm{edge}}$ is the message function, $\phi_{\mathrm{node}}$ is the node update function, and $\mathcal{N}_i$ is the set of neighbors of node $i$. Both $\phi_{\mathrm{edge}}$ and $\phi_{\mathrm{node}}$ are often implemented as MLPs. 
We can see that under this setting, $\vf_{i}$ in \cref{eq:ode_system_graph} is approximated by a stack of $L$ message-passing layers, i.e., $\vf_{i}(t, \mX^{(t)}; \mE; G) \approx \vg^L_\theta(\mX^{(t)};\mE; G)$. In practice, one can obtain the whole trajectory by iteratively applying \cref{eq:time_integration} for $T$ time steps, starting from the initial condition $\bar{\mX}$. However, this approach often suffers from error accumulation over long time horizons, limiting long-term accuracy~\citep{brandstetter2022message}.

In this work, guided by the theory of dynamical systems, we propose a novel 
GNS that directly models the underlying continuous dynamics in \cref{eq:ode_system_graph}, built upon the port-Hamiltonian formalism. Under this framework, information is maintained more effectively across the entire space. Combined with the multi-step loss, the learned model demonstrates that it can mitigate the error accumulation issue and achieve greater accuracy on long rollouts.
We discuss them in detail in the next section.
\section{The \longmodel}
\label{sec:method}

\textbf{Problem Statement.}
Given a time series of node states $\{\mX^{(t)}\}_{t=0}^T$ on an irregular mesh or graph $G=(V,E)$, our goal is to learn and simulate the evolution of each node $\vx_i^{(t)}$ over time. To address this, we propose the \fullmodel, a GNN-based simulator that uses port-Hamiltonian dynamics as its latent evolution core. \cref{fig:fig1} shows the high-level overview of our method. \model\ is built on four key components:
\begin{itemize}[topsep=0mm,parsep=0mm,leftmargin=1.5em]
  \item \emph{Port-Hamiltonian formalism}: by parameterizing a Hamiltonian $H$ and adding damping and external forcing terms, \model\ captures both energy-conserving and non-conservative behavior. 
  \item \emph{Warmup}: the conservation property is further exploited through the use of a \emph{warmup phase} at initialization, enabling the model to obtain a more global context from the start of the rollout.
  \item \emph{Geometric encoding}: by embedding geometric features directly into the edge attributes, \model\ can operate on irregular meshes and exploit spatial structure for more accurate local interactions.
  \item 
  \emph{Multi-step objective}: 
  it supervises the entire rollout windows, reducing error accumulation, improving data efficiency, and better aligning training with long-horizon prediction.
\end{itemize}

\subsection{Port-Hamiltonian Formalism}
We start by expressing the dynamics on the RHS of the \cref{eq:ode_system_graph} using the port-Hamiltonian formalism \citep{8187102,PhysRevE.104.034312,gravina_phdgn}, which is a generalization of Hamiltonian systems that integrate both conservative and non-conservative dynamics based on energy principles.
Specifically, we parameterize the Hamiltonian $H_\theta(t, \mX)$ and evolve the joint state according to
\begin{align}
   \label{eq:port_hamiltonian}
   \dot\vx_i = \mJ\nabla_{\vx_i} H_\theta(t, \mX) - \underbrace{\begin{bmatrix}0\\ \mD_\theta \nabla_{\vp_i} H_\theta(t, \mX)\end{bmatrix}}_{\text{damping}}
   + \underbrace{\begin{bmatrix}0\\ \vr_\theta(t, \mX)\end{bmatrix}}_{\text{forcing \& residual}},
   \quad
   \mJ=\begin{bmatrix}0&\mI\\ -\mI&0\end{bmatrix}.
\end{align}
Here, we omit the time index $t$ for clarity, and denote $\vx_i=[\vq;\vp]^T_i$ with $\vq\in\mathbb{R}^{n\times\frac{d}{2}}$ and $\vp\in\mathbb{R}^{n\times\frac{d}{2}}$ to represent the generalized coordinates and momenta, respectively. 
Without forcing and damping, \cref{eq:port_hamiltonian} reduces to the classical Hamiltonian dynamics, which conserves the energy $H_\theta$ over time.
However, most real-world physical systems are non-conservative, where energy can be dissipated (e.g., due to friction) or injected (e.g., due to external forces). To account for this, 
two additional terms are introduced: $\mD_\theta \nabla_{\vp_i} H_\theta(t, \mX)$ and $\vr_\theta(t, \mX)$, for damping and external forcing, respectively.
Next, we parameterize the Hamiltonian $H_\theta$ as
\begin{align}
    \label{eq:hamiltonian}
   H_\theta(t, \mX)
   = \sum_{i \in \mathcal{V}} \tilde{\sigma} \bigg ( \mW(t) \vx_i + \sum_{j \in \mathcal{N}_i} \mV(t) \vx_{j} \bigg )^T \cdot \mathbf{1}_d,
\end{align}
where $\tilde{\sigma}$ is an anti-derivative of a non-linear activation function $\sigma$, $\mathbf{1}_d$ is a row vector of ones of dimension $d$, and $\mW(t)$ and $\mV(t)$ are block diagonal matrices (to ensure the separation into the $\vq$ and $\vp$ components), \ie  
$\mW(t) = \diag([\mW_{\vq}(t), \mW_{\vp}(t)])$ and $\mV(t) = \diag([\mV_{\vq}(t), \mV_{\vp}(t)])$. The matrices $\mW(t)$ and $\mV(t)$ can be either shared across time (\ie $\mW(t) = \mW$ and $\mV(t) = \mV$ $\forall\,t$) or time dependent, as further discussed in Appendix~\ref{app:Time-varying_Weight_Matrices}.

To justify this choice of dynamics, we provide a more theoretical discussion in \cref{sec:theoretical_properties}, showing that \cref{eq:port_hamiltonian} is universal, and via energy conservation, it allows stable long-range information propagation across space. Hence, it is well-suited for modeling complex physical systems.

\textbf{Symplectic Integrator.} To ensure the energy-conserving property of the Hamiltonian dynamics, we employ a symplectic integrator to numerically solve \cref{eq:port_hamiltonian}. Specifically, we choose the symplectic Euler method \citep{hairer2006geometric}, which updates the state as follows:
\begin{align}
   \label{eq:symplectic_euler}
   \begin{split}
   \vp_i^{(t+1)} &= \vp_i^{(t)} + \Delta t \big( -\nabla_{\vq_i} H_\theta(t, \vq^{(t)}, \vp^{(t)}) - \mD_\theta \nabla_{\vp_i} H_\theta(t, \vq^{(t)}, \vp^{(t)}) + \vr_\theta(t, \mX^{(t)}) \big), \\
   \vq_i^{(t+1)} &= \vq_i^{(t)} + \Delta t \nabla_{\vp_i} H_\theta(t, \vq^{(t)}, \vp^{(t+1)}).
   \end{split}
\end{align}
Here, 
the gradients of the Hamiltonian are computed with respect to the generalized coordinates and momenta.


\subsection{Warmup phase}
\label{sec:warmup_phase}
Under the message-passing framework, a single update propagates information only to direct neighbors. This locality is a fundamental limitation when simulating physical systems, where long-range interactions often play a crucial role from the very first timestep. For example, in mesh-based simulations of fluids or elastic materials, the evolution of a node may depend not only on its immediate surroundings but also on distant regions of the mesh, such as boundary conditions or sources of external forces. If the model starts from a completely local representation, it must accumulate long-range information gradually over several rollout steps, which delays the emergence of globally consistent dynamics and may degrade accuracy in the early stages of the simulation.

To alleviate this issue, we introduce a \emph{warmup phase} applied once at initialization. Starting from the initial state, we perform $l$ additional rounds of message passing without advancing time, as visually summarized on the left of \cref{fig:fig1}. More formally, the initial condition of the simulation is set to $\mX^{(0)}=\bar{\mX}^{(l)}$, where $\bar{\mX}^{(l)}$ denotes the state after $l$ warmup iterations. This strategy broadcasts information across the graph up to a radius $l$ from each node, so that for sufficiently large $l$, each node effectively receives signals from increasingly larger neighborhoods. Thanks to the energy-conserving core of \model, this globally informed latent state is preserved throughout the rollout, rather than being dissipated. The effect of $l$ is further analyzed in \cref{sec:results}.

\subsection{Geometrical Information Encoding}
\label{sec:geometric_encoding}
Accurately encoding geometry is essential for modeling physical systems on irregular meshes, where interactions depend on relative positions (\ie in the mesh space) rather than absolute states (\ie in the physical world space). 
To capture this relationship, we model the external force to drive the system’s evolution based on geometrical information, thereby allowing the dynamics to be influenced by the spatial structure. We design the external force following an edge-level message-passing scheme inspired by \cite{pfaff2021learning}, where each edge feature $\ve_{ij}$ encodes both the displacement vectors ($\vs_{ij}=\text{pos}_j - \text{pos}_i$) and distances ($\vd_{ij}=\|\text{pos}_j - \text{pos}_i\|_2$), with $\text{pos}_i$ indicating the position of node $i$ in the mesh space. More formally, we formulate the external force as $\vr_\theta(t,\mX) = \vg_\theta^L(\cdot)$ where each message-passing step is defined as
\begin{equation}
 \phi_{\text{node}}(\cdot) = \sigma \biggl( \mW_\text{node} \vq_i + \sum_{j \in \mathcal{N}_i} \ve'_{ij} \biggr), \quad \ve'_{ij} = \phi_{\text{edge}}(\cdot) =  \mW_\text{edge} (\vq_j - \vq_i) + \ve_{ij}
\end{equation}
with $\ve_{ij}$ as the concatenation between displacement vectors and distance values, \ie $\ve_{ij} = [\vs_{ij},\vd_{ij}]$. 
This formulation explicitly separates the edge in the world space 
($\vq_j - \vq_i$) and the edge in the mesh space ($\ve_{ij}$) in the edge update. Unlike \cite{pfaff2021learning}, we do not update edge messages at every time step. Combined with the multi-step loss (discussed below), the edge information serves instead as a static prior that weighs neighbor messages. This makes the model less dependent on explicit geometry, helps prevent overfitting to specific meshes, and still provides sufficiently useful context. We empirically study this hypothesis in Appendix~\ref{app:additional_results}. 

\subsection{Multi-step loss}
We train our models through a multi-step objective.
Given a window $(\vq^{(t)},\ldots,\vq^{(t+K)})$, we roll out from $\vq^{(t)}$ and optimize both the predicted  $\widehat{\vq}^{(t+\tau)}$ and $\widehat{\vp}^{(t+\tau)}$ to match its corresponding ground truth fields $\vq^{(t+\tau)}$ and velocities $\vp^{(t+\tau)}$, $\forall \tau \in [1, K]$,
\begin{equation}
\label{eq:multi_step_loss}
\mathcal{L}_{\text{multi-step}}
=\sum_{\tau=1}^{K}\Big(
\|\widehat{\vq}^{(t+\tau)}-\vq^{(t+\tau)}\|_2^2
+ \,\|\widehat{\vp}^{(t+\tau)}-\vp^{(t+\tau)}\|_2^2
\Big).
\end{equation}
This multi-step loss enhances the consistency between the prediction and ground-truth data on a trajectory level. In contrast, one-step training followed by autoregressive rollout is prone to suffering from the error accumulation problem, especially when the time-series data exhibits a strong temporal correlation, which we analyze in \cref{sec:results}. The multi-step loss also fits \model\ well because the non-dissipative core preserves signals across time; therefore, the gradients from distant steps do not vanish, and the model can make use of the whole trajectory. Standard first-order message passing instead tends to oversmooth, gradients decay over many steps, and the benefit of a long horizon objective is limited. Finally, including $\vp$ helps improve performance further; when momenta are not explicit in the state, we approximate them by finite differences on the ground truth $\vq^{(t)}$ and $\vq^{(t+\tau)}$. A visual exemplification of this multi-step loss strategy is summarized on the right of \cref{fig:fig1}. 
\section{Theoretical Properties}\label{sec:theoretical_properties}
In this section, we discuss the theoretical benefits of our \model{}, focusing on universality and effective information propagation during the simulation, underpinning its ability to effectively simulate complex physical systems. A more in-depth discussion is provided in Appendix~\ref{app:proofs}.

\textbf{Universality.} We start by showing that our \model{}, with state updates in \cref{eq:port_hamiltonian} and Hamiltonian in \cref{eq:hamiltonian} yields a universal model. Therefore,  \model\ can approximate any given functional $\Phi:\mathbb{R}^{n\times d}\rightarrow \mathbb{R}^{n \times d}$ that maps the initial condition to the solution at time $\tau$. 

\begin{theorem}\label{thm:universality}
    \rebuttal{Let $\Psi_{\theta}$ be the functional that maps the initial condition $\vx_0 = [\vq_0, \vp_0]$ to the solution at time $\tau$ to the ODE defined \cref{eq:port_hamiltonian} and \cref{eq:hamiltonian}. In other words, $\Psi_\theta$ represents \model{}. Then, for any $F:\mathbb{R}^{n\times d}\rightarrow\mathbb{R}^{n\times d}$ with compact support, which represents the evolution of the system after time $\tau$, there exist a set of parameters $\theta$ such that $\|\Psi_{\theta}(\vx_0) - F(\vx_0)\|\leq \epsilon$.}
\end{theorem}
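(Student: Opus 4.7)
The plan is to reduce the claim to two standard ingredients: universal approximation for the parameterized objects $H_\theta$ and $\vr_\theta$, and universality of port-Hamiltonian flow maps as established in the references cited in \cref{sec:method}. The forcing term $\vr_\theta$ alone supplies an unrestricted additive drift on the $\vp$-component, while $H_\theta$ couples $\vq$ and $\vp$, and together they can realize a dense family of vector fields on the joint $(\vq,\vp)$ state.

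First I would verify the universal-approximation-level ingredients. The ansatz in \cref{eq:hamiltonian} is a single-hidden-layer functional with activation $\tilde{\sigma}$ summed over nodes, so by the chain rule its gradient $\nabla_{\vx_i} H_\theta$ is a message-passing expression whose nonlinearity is $\sigma$ and is Hornik-universal on compact sets; likewise the stacked message-passing network defining $\vr_\theta$ from \cref{sec:geometric_encoding} is a universal approximator of continuous node-wise functions of $(t,\mX)$. Setting $\mD_\theta=0$, the RHS of \cref{eq:port_hamiltonian} can then match any target vector field compatible with the port-Hamiltonian template arbitrarily well on any compact $K$.

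Next I would invoke the port-Hamiltonian universality result of Galimberti et al., or equivalently the augmented neural-ODE constructions of Zhang et al.\ and Teshima et al.: because the effective state $(\vq,\vp)$ doubles the ambient dimension of the domain of $F$, the time-$\tau$ flow of a suitable port-Hamiltonian field is dense in continuous maps $F$ on compact $K$. Pulling back through the two approximation layers and applying Gronwall's lemma to transfer uniform approximation of the vector field into uniform approximation of the flow yields parameters $\theta$ with $\|\Psi_\theta(\vx_0,\dot{\vx}_0)-F(\vx_0,\dot{\vx}_0)\|\leq\epsilon$.

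The hard part will be the topological gap between first-order ODE flows, which are always homeomorphisms of their state space, and the theorem's ambition to approximate an \emph{arbitrary} continuous $F$. The port-Hamiltonian split $\vx=(\vq,\vp)$ supplies exactly the augmentation needed to close this gap, but the proof must make the augmentation explicit and restrict attention to a compact domain with sufficiently regular $F$. A secondary subtlety is confirming that the restricted template \cref{eq:hamiltonian} — a single-hidden-layer rather than deep functional — is already rich enough at the level of its gradient vector field; this follows by applying Hornik's theorem directly to $\sigma$ after differentiating the sum defining $H_\theta$, so that no loss of expressivity occurs when moving from the functional $H_\theta$ to the dynamics $\mJ\nabla H_\theta$ actually appearing in \cref{eq:port_hamiltonian}.
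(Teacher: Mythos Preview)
Your route differs substantially from the paper's, and it has a real gap in the place you yourself flag as ``the hard part.''

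The paper does \emph{not} go through Hornik-type approximation of the vector field followed by Gronwall and augmented-neural-ODE density. Instead it reduces \model{} directly to the neural-oscillator framework of \citet{Lanthaler2023-NeuralOscillators} and invokes their universality theorem. Concretely, the paper (i) rewrites the port-Hamiltonian dynamics as a second-order equation in $\vq$ using the mass--spring--damper derivation, (ii) substitutes the specific Hamiltonian \cref{eq:hamiltonian} so that the per-node equation takes the form $\ddot{\vq}_i = -\mM^{-1}\bigl(\tilde{\mW}\sigma(\tilde{\mW}\vq + \tilde{\vb}) + \tilde{\vr}\bigr)$, (iii) stacks nodes into a block-matrix system, and (iv) introduces an auxiliary trajectory $\bar{\vq}$ to shift the initial condition to zero so that Lanthaler's definition applies verbatim. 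Universality then comes wholesale from Lanthaler's Theorem~3.1 (and its Appendix~A extension to maps between vector spaces), not from a two-stage field-then-flow approximation.

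Your argument breaks at the augmentation step. The split $\vx=(\vq,\vp)$ is a \emph{partition} of the existing state in $\mathbb{R}^{n\times d}$, not an augmentation beyond it: the time-$\tau$ map of any smooth autonomous ODE on this space is a diffeomorphism, so the flow alone cannot approximate a non-injective $F$ on any open set, regardless of how well you match vector fields via Hornik and Gronwall. The augmented-neural-ODE results you cite (Zhang, Teshima) require latent dimensions \emph{in addition to} the input/output dimension, together with a projection; Lanthaler's framework supplies exactly this through an arbitrary hidden width and the linear readout $\vz = \mD\vy + \ve$, which is why the paper's reduction works. Separately, the ``port-Hamiltonian universality result of Galimberti et al.'' you invoke does not exist in the cited papers---those establish gradient-norm lower bounds and sensitivity properties, not density of flow maps.
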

The proof is reported in Appendix~\ref{app:proofs/universality}, along with an in-depth discussion and interpretation for physical simulations. As a consequence of the theorem and its proof, \model{} is capable of approximating any functional \rebuttal{with compact support} that maps the time-dependent signal of external forces and geometrical information into the dynamics of the physical system, making it suitable for accurately modeling complex interactions and dynamics in arbitrary graph-structured systems.
\rebuttal{
\begin{remark}
    Compactness in the support of the functional is a common requirement for Universality theorems. However, this is not restrictive for physical systems, as it is sufficient to have a limited domain and upper bounds on the physical quantities and energy in the system to satisfy this assumption. All these conditions are perfectly reasonable in contained physical simulations. See Appendix~\ref{app:proofs/universality} for a more in-depth discussion on this topic.
\end{remark}
}

\textbf{Long-Range Information Propagation.} 
Our next focus is on the information propagation capability of \model, a critical aspect for physical simulations, 
since reliable modeling hinges on the ability to capture long-range interactions in both time and space. 

In the absence of damping and external forces, our \model\ reduces to a pure Hamiltonian systems and thus adheres to the corresponding 
conservation laws \citep{galimberti2021-unified_hdgn,galimberti2023-hdgn,gravina_phdgn}. 
Consequently, the vector field induced by the conservative \model\ is divergence-free, ensuring information preservation throughout 
propagation. In other words, under this setting, the system dynamics can be interpreted as purely rotational, with no information dissipation.

Following \cite{gravina_gcn-ssm}, which demonstrates that the inability to perform effective propagation is closely tied to the vanishing 
gradient problem, we investigate the behavior of \model\ with the Hamiltonian proposed in~\cref{eq:hamiltonian} through a
sensitivity analysis. Thus, we study the quantity $\left\|\pdvinline{\vx(t)}{\vx(s)} \right\|$, as formalized in the following theorem\rebuttal{, which follows the sensitivity arguments developed in \citet{gravina_phdgn}.}

\begin{theorem}\label{thm:phdgn_sensitivity}
    If $\vx(t)$ is the solution to the ODE $\dot\vx_i = \mJ\nabla_{\vx_i} H_\theta(t, \mX)$, that is, the Hamiltonian dynamic defined in \cref{eq:port_hamiltonian} without dampening and external forcing, 
    then  $\left\|\pdv{\vx(t)}{\vx(s)} \right\|\geq 1$ for any $0 \leq s \leq t$.
\end{theorem}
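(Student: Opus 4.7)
The plan is to exploit the fact that the flow generated by a Hamiltonian vector field is a symplectomorphism, so its Jacobian has unit absolute determinant, and this alone forces the operator norm to be at least $1$. Writing the global state as a vector in $\mathbb{R}^{nd}$ by stacking the $\vx_i$'s and letting $\mJ$ denote the corresponding block-diagonal canonical symplectic form, let $J(t,s)=\pdv{\vx(t)}{\vx(s)}$ be the Jacobian of the flow from time $s$ to time $t$.

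The first step is to show that $J(t,s)$ is symplectic, i.e., $J(t,s)^{\top}\mJ\,J(t,s)=\mJ$. By the variational equation, $J$ satisfies $\dot J(t,s)=A(t)\,J(t,s)$ with $A(t)=\mJ\nabla^{2}H_{\theta}(t,\vx(t))$. Since $\nabla^{2}H_{\theta}$ is symmetric, $\mJ^{\top}=-\mJ$, and $\mJ^{2}=-I$, a short calculation yields $A^{\top}\mJ+\mJ A=0$, i.e., $A(t)$ is infinitesimally symplectic. Differentiating $J^{\top}\mJ J$ then gives $\dv{}{t}(J^{\top}\mJ J)=J^{\top}(A^{\top}\mJ+\mJ A)J=0$, and the initial condition $J(s,s)=I$ delivers $J(s,s)^{\top}\mJ J(s,s)=\mJ$, so that $J(t,s)^{\top}\mJ J(t,s)=\mJ$ for all $t\geq s$.

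Taking determinants and using $\det\mJ\neq 0$ gives $(\det J)^{2}=1$, whence $|\det J|=1$. If $\sigma_{1}\geq\sigma_{2}\geq\cdots\geq\sigma_{nd}>0$ denote the singular values of $J(t,s)$, then $\prod_{i}\sigma_{i}=|\det J|=1$, which forces $\sigma_{1}\geq 1$. Since the operator (spectral) norm equals $\sigma_{1}$, this yields $\|J(t,s)\|\geq 1$ as required.

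The main obstacle is checking cleanly that the Hamiltonian defined in \cref{eq:hamiltonian}, with its graph-structured and possibly time-dependent block-diagonal weight matrices $\mW(t),\mV(t)$, really defines a $C^{2}$ scalar function on the stacked state so that the symmetry and symplectic identities above apply. This amounts to verifying smoothness of the activation $\tilde{\sigma}$ and noting that the block structure $\mW=\diag([\mW_{\vq},\mW_{\vp}])$ is precisely what ensures $\mJ\nabla^{2}H_{\theta}$ is a Hamiltonian matrix with respect to the canonical $(\vq,\vp)$ decomposition on each node; the remainder is the classical Liouville-type argument applied to a graph-indexed state.
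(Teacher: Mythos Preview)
Your argument is correct and is essentially the classical symplectic-flow argument that underlies the paper's treatment: the paper does not give a self-contained proof but points to Theorem~2.3 of \citet{gravina_phdgn} and the divergence-free/volume-preserving nature of Hamiltonian flows, which is exactly what your derivation unpacks (variational equation, infinitesimal symplecticity, $\lvert\det J\rvert=1$, hence $\sigma_{1}\geq 1$).

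One minor remark on your closing paragraph: the block-diagonal structure $\mW=\diag([\mW_{\vq},\mW_{\vp}])$ is \emph{not} what guarantees that $\mJ\nabla^{2}H_{\theta}$ is infinitesimally symplectic. That property follows automatically from $\nabla^{2}H_{\theta}$ being symmetric, which holds for any $C^{2}$ scalar $H_{\theta}$ regardless of separability; the block structure is relevant only for the separable symplectic-Euler scheme in \cref{eq:symplectic_euler}, not for the continuous-time bound here. The only genuine regularity check you need is that $\tilde\sigma$ is $C^{2}$ so that the Hessian exists and is symmetric, and that the stacked $\tilde{\mJ}=\diag(\mJ,\dots,\mJ)$ is itself a canonical symplectic form on $\mathbb{R}^{nd}$, which it is.
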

The proof is discussed in Appendix~\ref{app:proofs/long_range}. This theorem shows that the sensitivity matrix is lower bounded by a constant, ensuring non-vanishing gradients and thereby enabling effective 
long-range information propagation; unlike GCNs \citep{GCN}, which suffer from an exponential decay in propagation rate over time \citep{gravina_swan}. This non-dissipative behavior also enables \model{} to propagate and preserve a broader context across nodes during the warmup
phase (see \cref{sec:warmup_phase}). 

Although Hamiltonian dynamics provide theoretical guarantees for propagation behavior, 
additional terms, such as dissipation and external forcing, may counteract this effect. 
To this end, we recall the discussion in \citet{graphcon}; there, they showed that the 
oscillatory behavior of the system allows the gradients not to decay exponentially even in the presence of these additional terms. 
More specifically, in the limit of small $\Delta t$, the gradient of the loss function with respect to any learnable weight parameter in the forcing term is independent of the number of updates. \rebuttal{We formalize this with a statement in Appendix~\ref{app:proofs/vanishing_graphcon}, inspired from Proposition 3.6 of \citet{graphcon}}.
Additionally, as long as the dependence between $\Delta t$ and the number of iterations $N$ is polynomial $\Delta t \sim N^{-s}$, 
gradients do not decay exponentially in $N$, alleviating the vanishing gradient problem.

\section{Experimental Setup}\label{sec:experiments}

\begin{figure}[t]
  \centering
  \begin{subfigure}[t]{0.24\textwidth}
    \centering\includegraphics[width=\linewidth]{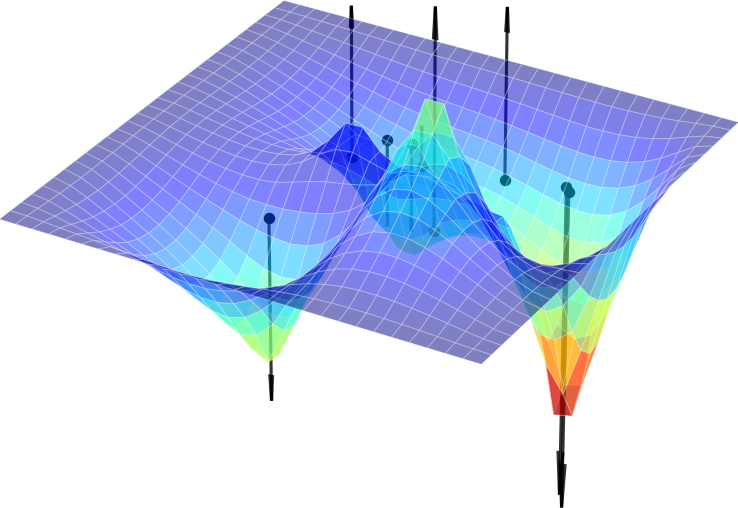}
    \subcaption{Plate Deformation}
  \end{subfigure}\hfill
  \begin{subfigure}[t]{0.29\textwidth}
    \centering\includegraphics[width=\linewidth]{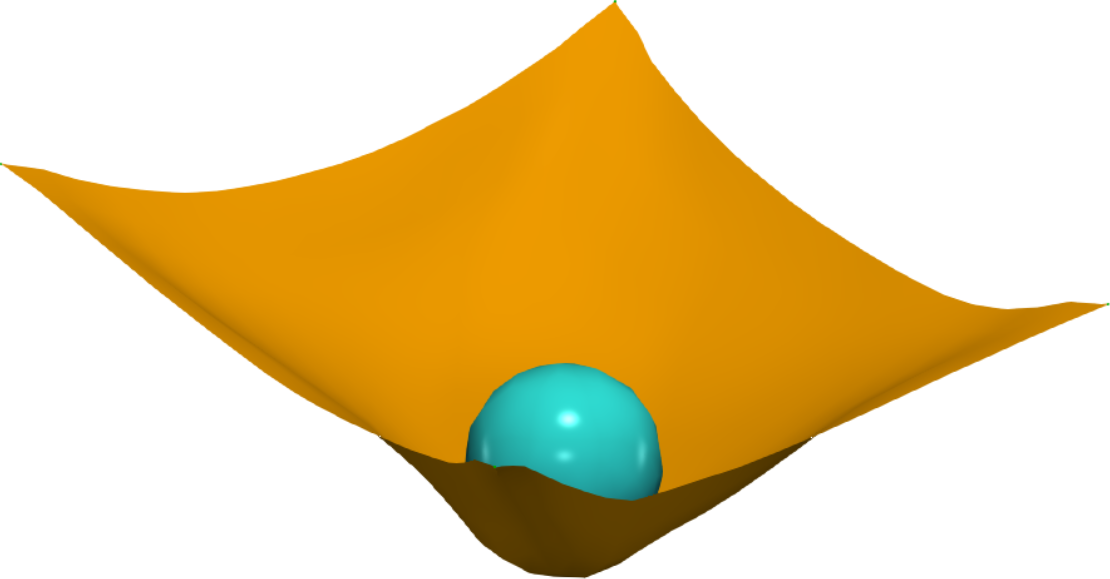}
    \subcaption{Sphere Cloth}
  \end{subfigure}\hfill
  \begin{subfigure}[t]{0.29\textwidth}
    \centering\includegraphics[width=\linewidth]{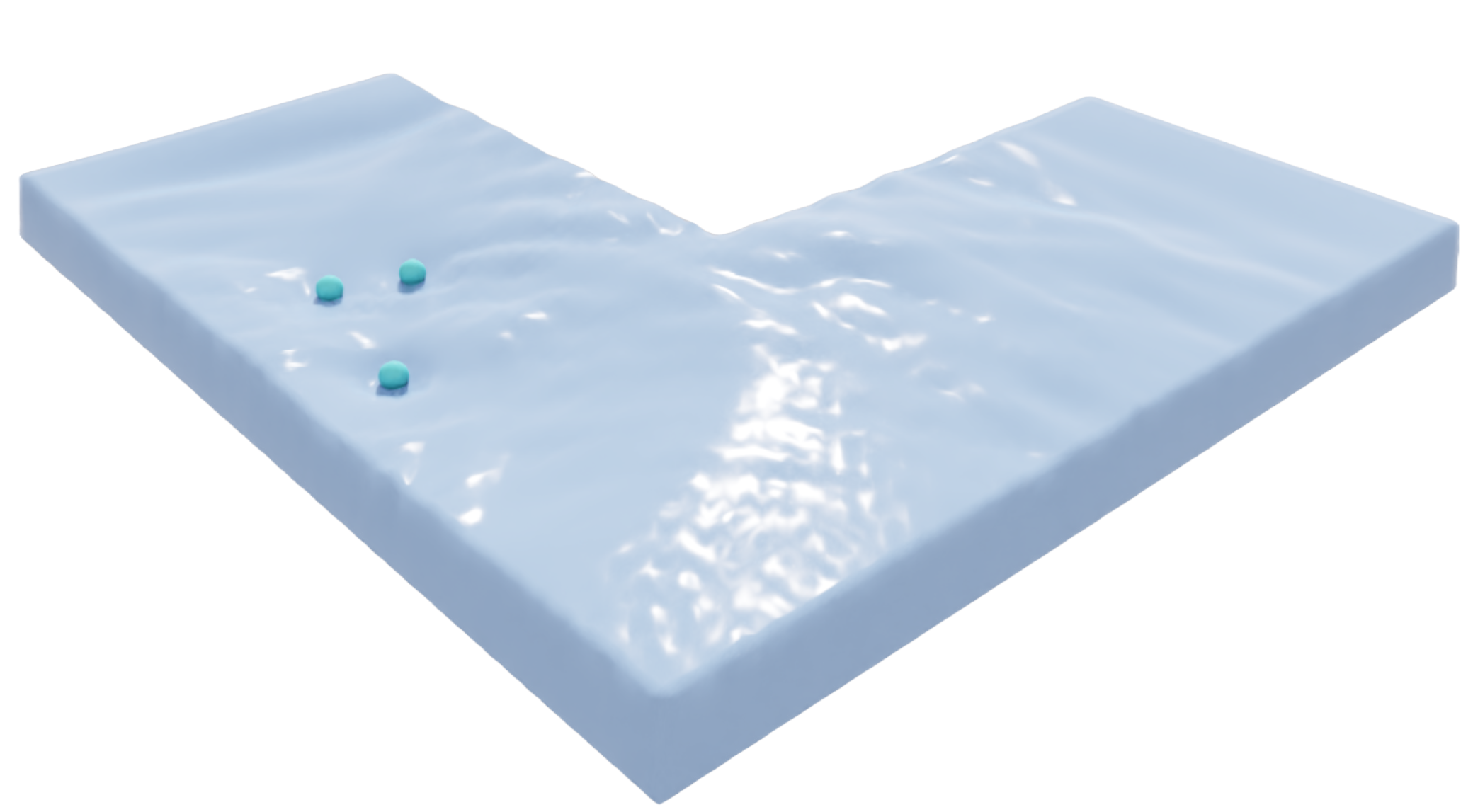}
    \subcaption{Wave Balls}
  \end{subfigure}
  \caption{Three novel long-range simulation tasks. 
    \emph{(a)} \textbf{Plate Deformation}: a flat plate deformed subject to varying numbers and magnitudes of point forces. 
    \emph{(b)} \textbf{Sphere Cloth}: a cloth mesh impacted by a falling sphere, producing elastic deformation with contact dynamics. 
    \emph{(c)} \textbf{Wave Balls}: surface waves on water generated by three balls moving linearly from different initial positions. 
    }
  \label{fig:dataset_novel}
  \vspace{-10pt}
\end{figure}

\textbf{Datasets.}
We evaluate our methods on six benchmark datasets in two categories: Lagrangian systems (\emph{Plate Deformation}, \emph{Impact Plate}, \emph{Sphere Cloth}) and Eulerian systems (\emph{Wave Balls}, \emph{Cylinder Flow}, \emph{Kuramoto-Sivashinsky}). 
First, we introduce \textbf{\emph{Plate Deformation}} (Plate Def.) task to evaluate long-range propagation effects on spatial domain, ignoring the evolution over time.
Here, given only force positions and magnitudes, the model must learn to predict the final deformed state of an initially flat sheet. 
Next, the \emph{Impact Plate} task, proposed in HCMT~\citep{yu2024hcmt}, tests the model’s ability to capture rapid stress propagation across a large mesh over time. 
We then introduce two additional new tasks, \textbf{\emph{Sphere Cloth}} and \textbf{\emph{Wave Balls}}, both designed to evaluate oscillatory behavior under external forcing. 
Finally, we include two complex dynamics tasks, \emph{Cylinder Flow}~\citep{pfaff2021learning} and the fourth-order \emph{Kuramoto-Sivashinsky} (KS) equation, each slightly modified to probe error accumulation under autoregressive training. 
\cref{fig:dataset_novel} highlights the \emph{three novel tasks} we introduced
.
Full dataset specifications and simulator details are in Appendix~\ref{app:datasets} (\cref{tab:datasets}).

\textbf{Evaluating Methods.}
We evaluate a broad set of methods, grouped into standard graph-convolution (Graph Conv.), effective propagation (Eff. Propagation), and our proposed port-Hamiltonian (Port-Ham.) models.
\textbf{Graph Conv.:} Our main competitor, \emph{MGN} (MeshGraphNets) \citep{pfaff2021learning}, is a standard message passing GNN trained autoregressively with explicit Euler integration. 
For the remaining baselines, we use the multi-step loss (\cref{eq:multi_step_loss}). \emph{MP-ODE}~\citep{iakovlev2021learning,lienen2022learning} applies neural ODE integration each step; here, we choose the RK4 integrator (instead of \texttt{dopri5}) to ensure a fair training budget compared to other baselines; additionally, we define the message function on nodal points instead of cell elements as in~\citet{lienen2022learning}. 
Next, \emph{GCN+LN} is a graph convolutional network with LayerNorm, a common technique to mitigate oversmoothing in deep message passing~\citep{luo2024classic}. 
\textbf{Eff. Propagation}: \emph{ADGN}~\citep{gravina_adgn} adopted an anti-symmetric matrix in the graph convolution to avoid dissipative behavior, \emph{GraphCON}~\citep{graphcon} is an oscillatory graph network, originally proposed to avoid oversmoothing effects. \textbf{Port-Ham. (Ours)}: \emph{\modelinvariant{}} (time independent) and \emph{\model{}} (time varying) are port-Hamiltonian-based models that follow the state update in~\cref{eq:port_hamiltonian}. For fairness, all baselines, except \emph{MGN} which updates edge features at each message-passing step, use the same proposed geometric encoding (\cref{sec:geometric_encoding}). Appendix~\ref{app:hyperparams} provides details on the choice of hyperparameters and training budget.

\rebuttal{Our experimental setup highlights the importance of being on-trajectory during rollout via the use of multi-step loss, \ie to avoid the over-reliance on autoregressive state carry-over  \citep{brandstetter2022message,radler2025paintparallelintimeneuraltwins}. Therefore, a single window $T$ is fixed for both training and evaluation.}

\begin{table*}[t]
\centering
\caption{Per–task test MSE (mean $\pm$ std, lower is better). Methods are grouped by operator class, \emph{(I)} Graph Conv., \emph{(II)} Eff. Propagation and \emph{(III)} \textbf{Port-Ham. (Ours)}. Results are averaged over 4 seeds. Best and second best results are highlighted in \textbf{\textcolor{rankone}{orange}} and \textbf{\textcolor{ranktwo}{teal}}, respectively.
}
\label{tab:all_tasks}
\small
\resizebox{\linewidth}{!}{%
\begin{tabular}{@{}l l c c c c c c c@{}}
\toprule
 &  & \textbf{Plate Def.} & \textbf{Impact Plate} & \textbf{Sphere Cloth} & \textbf{Wave Balls} & \textbf{Cylinder Flow} & \textbf{KS} \\
 & \textbf{Method} & MSE & MSE & MSE ($\times 10^{-3}$) & MSE ($\times 10^{-3}$) & MSE ($\times 10^{-3}$) & MSE ($\times 10^{-3}$) \\
\midrule
\multirow{3}{*}{\emph{(I)}}
& GCN+LN & $3.98\pm0.66$ & $3997.98\pm199.83$ & \third{$26.45\pm0.23$} & \rebuttal{$1.95\pm0.14$} & $11.71\pm1.03$ & $3.10\pm0.78$ \\
& MGN & \second{$1.27\pm0.06$} & $3095.75\pm908.58$ & $32.07\pm2.45$ & \third{\rebuttal{$1.78\pm0.14$}} & $12.08\pm2.60$ & $10.76\pm9.16$ \\
& MP-ODE & $-$ & $-$ & \second{$25.75\pm1.32$} & \rebuttal{$97.04\pm0.70$} & $18.17\pm1.15$ & $59.55\pm14.82$ \\
\addlinespace[2pt]
\midrule
\multirow{2}{*}{\emph{(II)}}
& A-DGN & $1.99\pm0.76$ & $3974.56\pm309.98$ & $30.99\pm0.80$ & \rebuttal{$0.70\pm0.05$} & $14.72\pm0.94$ & \third{$0.97\pm0.89$} \\
& GraphCON & \best{$1.20\pm0.02$} & $2279.09\pm720.25$ & $29.00\pm0.44$ & \rebuttal{$1.62\pm0.03$} & \best{$7.32\pm0.05$} & \best{$0.49\pm0.02$} \\
\addlinespace[2pt]
\midrule
\multirow{2}{*}{\emph{(III)}}
& \textbf{\modelinvariant{}} & $1.43\pm0.04$ & \second{$75.99\pm17.17$} & $28.20\pm0.35$ & \second{\rebuttal{$0.45\pm0.01$}} & \third{$9.12\pm0.40$} & \second{$0.86\pm0.12$} \\
& \textbf{\model{}} & \third{$1.34\pm0.05$} & \best{$67.86\pm14.75$} & \best{$24.16\pm0.67$} & \best{\rebuttal{$0.37\pm0.01$}} & \second{$8.21\pm0.38$} & $1.20\pm0.18$ \\
\bottomrule
\end{tabular}%
}
\vspace{-10pt}
\end{table*}

\section{Results and Discussions}
\label{sec:results}

\cref{tab:all_tasks} summarizes the main results across all six tasks, reporting mean and standard deviation of test MSE over $4$ random seeds.
Overall, \modelinvariant{} and \model\ consistently outperform the main competitor MGN, as well as other baselines, across all tasks.
Notably, GraphCON with the geometric encoding also performs strongly on static graph tasks (Plate Def., Cylinder Flow, KS) yet fails severely on the remaining ones. This suggests that while the oscillatory bias in GraphCON improves over standard graph convolution methods, its formulation limits its expressiveness for more complex dynamics.
To gain further insight, we provide an analysis of the relationship between GraphCON and our models in Appendix~\ref{app:derivations}.
Furthermore, introducing time-variation in the weight matrices (\model) produces further gains, suggesting that flexible, time-dependent parameterizations better capture complex, nonstationary dynamics. We now analyze per-task behavior below.

\begin{table}[h]
\centering
\setlength{\tabcolsep}{4pt}
\caption{Comparisons with additional strong baselines on Plate Deformation, Impact Plate and Kuramoto-Sivashinsky tasks. Best results are highlighted in \textbf{\textcolor{rankone}{orange}}.}
\label{tab:additional_baselines}
\begin{minipage}[t]{0.32\textwidth}
\centering
\small
\label{tab:pd_small}
\begin{tabular}{lc}
\toprule
Method & MSE \\
\midrule
MGN-rewiring & $3.72\pm0.15$ \\
MGN-shared   & $8.21\pm1.98$ \\
MGN & \best{$1.27\pm0.06$} \\
\midrule
\textbf{\modelinvariant{}} & $1.43\pm0.04$ \\
\textbf{\model{}} & $1.34\pm0.05$ \\
\bottomrule
\end{tabular}
\vspace{4pt}
\caption*{(a) Plate Deformation}
\end{minipage}
\hfill
\begin{minipage}[t]{0.32\textwidth}
\centering
\small
\label{tab:ip_small}
\begin{tabular}{lc}
\toprule
Method & MSE \\
\midrule
HCMT & $646.81\pm27.87$ \\
MGN & $3095.75\pm908.58$ \\
\midrule
\textbf{\modelinvariant{}} & $75.99\pm17.17$ \\
\textbf{\model{}}    & \best{$67.86\pm14.75$} \\
\bottomrule
& \\
\end{tabular}
\vspace{4pt}
\caption*{(b) Impact Plate}
\end{minipage}
\hfill
\begin{minipage}[t]{0.32\textwidth}
\centering
\small
\label{tab:ks_small}
\begin{tabular}{lc}
\toprule
Method & MSE ($\times10^{-3}$) \\
\midrule
FNO-RNN & $6.69\pm0.46$ \\
MGN &  $10.76\pm9.16$ \\
\midrule
\textbf{\modelinvariant{}} & \best{$0.86\pm0.12$} \\
\textbf{\model{}}    & $1.20\pm0.18$ \\
\bottomrule
 & \\
\end{tabular}
\vspace{4pt}
\caption*{(c) Kuramoto-Sivashinsky}
\end{minipage}
\vspace{-10pt}
\end{table}

\textbf{Long-range propagation.} 
In Plate Deformation (final-state prediction only), the port-Hamiltonian-based models (\modelinvariant{}, \model) and GraphCON achieve low error, while A-DGN and GCN+LN perform substantially worse. 
Interestingly, MGN also performs well on this task. To better understand this result, we further evaluate two MGN variants: MGN-shared, which uses a shared processor across steps, and MGN-rewiring, which connects each force node to every second node in the mesh. As shown in~\cref{tab:additional_baselines} (a), both variants yield higher error than standard MGN, suggesting that MGN's strong performance is primarily due to its non-shared processor. Although this design increases the parameter count significantly ($\times  \text{\#MP\_Layers}$), it allows the model to overfit to geometric details. 
Additional visualizations supporting this observation are provided in Appendix~\ref{app:additional_results}.

For Impact Plate, standard MGN fails by a large margin, whereas \model\ attains the lowest error, outperforming HCMT by nearly an order of magnitude.
This result highlights that the energy-preserving property of Hamiltonian dynamics enables effective long-range transmission of elastic energy between distant nodes, without requiring specialized hierarchical architectures.
Notably, we find that GraphCON's loss consistently diverges after a certain number of training iterations across all seeds, indicating that its fixed oscillatory bias may be too restrictive to capture the dynamics present in this task.
Note that we omit MP-ODE results on these two tasks, as the method was not proposed for static graphs, and for Impact Plate, it fails to converge during training.

\textbf{Complex dynamics analysis.} 
We now analyze performance across the remaining tasks in \cref{tab:all_tasks}, that involve complex dynamics and oscillatory behavior.
For Sphere Cloth, \model\ achieves the lowest error, while the other models yield errors of similar magnitude, with MGN exhibiting the largest error. 
In Wave Balls, our proposed models (\model, \modelinvariant{}) significantly outperform all baselines, suggesting that the port-Hamiltonian structure, viewed as a generalization of wave equations (Appendix~\ref{app:derivations}), is particularly well-suited for this type of wave dynamics. 

For Cylinder Flow, both port-Hamiltonian-based and GraphCON models perform comparably, while A-DGN, GCN+LN, and MP-ODE show larger errors under the causal-only setup used here. Notably, in the original paper~\citep{pfaff2021learning}, MGN performs better due to longer autonomous portions in the dataset; our causal-focused split reduces that advantage (Appendix~\ref{app:datasets}). Finally, for Kuramoto-Sivashinsky, the chaotic dynamics heavily penalize autoregressive MGN (resulting in large error), while GraphCON, \modelinvariant{}/\model, and A-DGN remain comparatively strong; GCN+LN struggles to represent the emergence of high-frequency modes towards the end of the evolution. Although Fourier Neural Operators (FNO-RNN)~\cite{li2021fourier} often perform well on this type of dynamic, they underperform significantly compared to graph-based models, as shown in \cref{tab:ks_small} (c). This is likely because, unlike the usual setup with uniformly random initial states, here we initialize with three Gaussian sources randomly placed in the grid to encourage local propagation. Further discussion and visualizations for all tasks are provided in Appendix~\ref{app:additional_results}.

\begin{figure}[t]
\centering
\pgfplotsset{
  smallbars/.style={
    ybar, ymin=0,
    width=\linewidth, height=0.80\linewidth,
    xtick=data,
    xticklabel style={font=\scriptsize},
    yticklabel style={font=\scriptsize},
    xlabel style={font=\scriptsize, yshift=2pt},
    ylabel style={font=\scriptsize},
    scaled y ticks=false,
    ymajorgrids, grid style={dashed,gray!30},
    enlarge x limits=0.18
  }
}

\begin{subfigure}[b]{0.32\textwidth}
  \centering
  \begin{tikzpicture}
    \begin{axis}[
      smallbars,
      symbolic x coords={100,200,400,800},
      xlabel={Train Size}, ylabel={MSE},
      legend style={at={(0.5,1.02)},anchor=south,draw=none,fill=none,font=\scriptsize},
      legend columns=2
    ]
      \addplot+[
        bar width=5pt, fill=teal!60,
        error bars/.cd, y dir=both, y explicit
      ] coordinates {
        (100,3.1774) +- (0,0.1900)
        (200,2.5595) +- (0,0.6714)
        (400,1.6396) +- (0,0.0917)
        (800,1.34) +- (0,0.07)
      };
      \addplot+[
        bar width=5pt, fill=orange!60,
        error bars/.cd, y dir=both, y explicit
      ] coordinates {
        (100,6.5824) +- (0,0.7351)
        (200,3.4810) +- (0,0.2362)
        (400,2.5307) +- (0,0.3222)
        (800,1.2670) +- (0,0.06)
      };
      \legend{\model, MGN}
    \end{axis}
  \end{tikzpicture}
  \caption{Data efficiency.\label{fig:ablations_a}}
\end{subfigure}
\hfill
\begin{subfigure}[b]{0.32\textwidth}
  \centering
  \begin{tikzpicture}
    \begin{axis}[
      smallbars,
      symbolic x coords={1,5,10,30,50,100},
      xlabel={Warmup Steps},
      ylabel={},
      legend style={at={(0.5,1.02)},anchor=south,draw=none,fill=none,font=\scriptsize},
      legend columns=2
    ]
      \addplot+[
        bar width=3.5pt, fill=teal!60,
        error bars/.cd, y dir=both, y explicit
      ] coordinates {
        (1,1.4227)  +- (0,0.0554)
        (5,0.4823)  +- (0,0.0365)
        (10,0.3738) +- (0,0.0448)
        (30,0.2968) +- (0,0.0292)
        (50,0.2727) +- (0,0.0132)
        (100,0.2410)+- (0,0.0212)
      };
      \addplot+[
        bar width=3.5pt, fill=orange!60,
        error bars/.cd, y dir=both, y explicit
      ] coordinates {
        (1,1.0193)  +- (0,0.0353)
        (5,0.7820)  +- (0,0.0515)
        (10,0.7681) +- (0,0.0653)
        (30,0.7631) +- (0,0.0504)
        (50,0.7659) +- (0,0.0293)
        (100,0.7293)+- (0,0.0303)
      };
      \legend{MSE-10, MSE-full}
    \end{axis}
  \end{tikzpicture}
  \caption{Varying warmup steps.\label{fig:ablations_b}}
\end{subfigure}
\hfill
\begin{subfigure}[b]{0.32\textwidth}
  \centering
  \begin{tikzpicture}
    \begin{axis}[
      smallbars,
      ymin=0.0,
      yticklabel style={
         font=\scriptsize,
         /pgf/number format/fixed,
         /pgf/number format/precision=2
      },
      symbolic x coords={SC,WB-1,WB-2,CF},
      xlabel={Task},
      ylabel={},
      legend style={at={(0.5,1.02)},anchor=south,draw=none,fill=none,font=\scriptsize},
      legend columns=2
    ]
      \addplot+[
        bar width=5pt, fill=teal!60,
        error bars/.cd, y dir=both, y explicit
      ] coordinates {
        (SC,0.0965) +- (0,0.0037)
        (WB-1,0.0548809272)   +- (0,0.001414869737)
        (WB-2,0.0402147576)   +- (0,0.003720109879)
        (CF,0.09265593486)   +- (0,0.002908393627)
      };
      \addplot+[
        bar width=5pt, fill=orange!60,
        error bars/.cd, y dir=both, y explicit
      ] coordinates {
        (SC,0.1092) +- (0,0.0083)
        (WB-1,0.06748204469)   +- (0,0.001378834045)
        (WB-2,0.04980022713)   +- (0,0.002152054226)
        (CF,0.09775963146)   +- (0,0.005798376532)
      };
      \legend{\model, \modelinvariant{}}
    \end{axis}
  \end{tikzpicture}
  \caption{\rebuttal{Longer horizon.\label{fig:ablations_c}}}
\end{subfigure}

\caption{Ablation studies. \emph{(a)} Data efficiency on the \textbf{final-state-only} Plate Deformation task. 
\emph{(b)} Effect of warmup steps on the \textbf{full-rollout} Plate Deformation task. 
\emph{(c)} Longer-horizon performance on Sphere Cloth (SC) with $T=100$, Wave Balls (WB) with two variants: WB-1 ($T=100$) and WB-2 ($T=200$), and Cylinder Flow (CF) with $T=100$. 
For better visualization in the bar plots, MSE values for Wave Balls and Cylinder Flow are scaled by $\times100$ and $\times10$, respectively. 
All results are averaged over 4 seeds.
}
\label{fig:ablations}
\vspace{-10pt}
\end{figure}


\begin{table}[h]
\centering
\setlength{\tabcolsep}{4pt}
\caption{\rebuttal{Ablations of \model{} on \emph{SphereCloth-direct}. We report test MSE, values scaled by $\times 10^{-3}$. (a) Core components: geometric encoding, warmup, and multi-step loss. (b) Damping and forcing in \model{}. (c) Damping and forcing in \modelinvariant{}.}}
\label{tab:ablation_model_vs_invariant}
\begin{minipage}[t]{0.32\textwidth}
\centering
\small
\begin{tabular}{lc}
\toprule
Method & MSE ($\times 10^{-3}$) \\
\midrule
\model$_{\text{no-geo}}$ & 36.4 $\pm$ 3.0 \\
\model$_{\text{no-warmup}}$  & 27.6 $\pm$ 0.5 \\
\model$_{\text{no-multi-step}}$  & 2388.6 $\pm$ 981.0 \\
\bottomrule
& \\
\end{tabular}
\vspace{1pt}
\caption*{\rebuttal{(a) Core components}}
\end{minipage}
\hfill
\begin{minipage}[t]{0.32\textwidth}
\centering
\small
\begin{tabular}{lc}
\toprule
Method & MSE ($\times 10^{-3}$) \\
\midrule
\model$_{\text{no-}d,\text{no-}f}$ & 36.4 $\pm$ 1.7 \\
\model$_{\text{no-}d}$             & 25.6 $\pm$ 0.3 \\
\model$_{\text{no-}f}$             & 34.2 $\pm$ 0.8 \\
\textbf{\model{}}                   & \best{$25.3 \pm 1.0$} \\
\bottomrule
\end{tabular}
\vspace{4pt}
\caption*{\rebuttal{(b) \model{} variants}}
\end{minipage}
\hfill
\begin{minipage}[t]{0.32\textwidth}
\centering
\small
\begin{tabular}{lc}
\toprule
Method & MSE ($\times 10^{-3}$) \\
\midrule
\modelinvariant$_{,\text{no-}d,\text{no-}f}$ & 80.6 $\pm$ 4.3 \\
\modelinvariant$_{,\text{no-}d}$             & 27.8 $\pm$ 0.5 \\
\modelinvariant$_{,\text{no-}f}$             & 60.5 $\pm$ 16.4 \\
\textbf{\modelinvariant{}}                  & \best{$28.2 \pm 0.8$} \\
\bottomrule
\end{tabular}
\vspace{4pt}
\caption*{\rebuttal{(c) \modelinvariant{} variants}}
\end{minipage}
\vspace{-6pt}
\end{table}

\rebuttal{
\textbf{Ablation on \model{} components.}
We ablate \model{}’s components on a harder variant, \emph{SphereCloth-direct}, where the ball connects only to the four cloth corners (the original task adds extra connections to every-four nodes of the cloth). This stresses long-range propagation: signals must travel from the ball through sparse corners before spreading across the cloth. \cref{tab:ablation_model_vs_invariant} (a) shows that removing geometric encoding or the multi-step objective prevents reliable learning on this task, showing these two pieces are necessary for PDE matching; by contrast, dropping the warmup causes only a minor degradation, and we study it further below. We then ablate the port-Hamiltonian core on \cref{tab:ablation_model_vs_invariant} (b) and (c): removing damping has only a minor effect, whereas removing external forcing severely degrades performance. This highlights the central role of forcing in driving information through the graph and supports our universality result in Appendix~\ref{app:proofs/universality}.
We extend this discussion with additional diagnostics and results in Appendix~\ref{app:additional_results}.
}

\textbf{Additional Ablation Studies.} 
Runtime analysis is in Appendix~\ref{app:complexity_runtimes}. \textbf{(a) Data efficiency.} We compare \model\ and MGN on Plate Deformation with varying training set sizes (100, 200, 400 and 800 samples). As shown in \cref{fig:ablations_a}, \model\ consistently outperforms MGN across all dataset sizes, with the performance gap widening as the training set decreases. This suggests that the inductive bias from port-Hamiltonian dynamics helps \model\ generalize better from limited data, while MGN leads to geometric overfitting when data is scarce. \textbf{(b) Warmup steps.} Next, we study the effect of varying the number of warmup steps $l$ used during training for \model\ on Plate Deformation with full-rollout prediction. \cref{fig:ablations_b} shows two metrics: cumulative MSE over the first 10 steps (MSE-10) and full-rollout validation loss. Both improve as $l$ increases. Moving from $l=1$ to $l=5$ yields the largest gain (since at $t=1$, the task already requires global propagation across the plate); after $l=30$ the full-rollout loss largely plateaus, while MSE-10 continues to improve with larger $l$, indicating larger warmup reduces early error directly, while smaller $l$ forces the model to learn dynamics that recover from early mistakes later (visualizations are shown in~\cref{fig:warmup_qualitative}). \textbf{(c) Longer horizon.} Finally, we compare \model\ and \modelinvariant{} \rebuttal{on a longer rollout horizon setting} for Sphere Cloth \rebuttal{($T=100$)}, Wave Balls \rebuttal{($T=100$, $T=200$)}, and Cylinder Flow \rebuttal{($T=100$)} vs. $T=50$ (Sphere Cloth, Wave Balls) and $T=30$ (Cylinder Flow) in the main experiments. 
In Sphere Cloth, we additionally increase the number of nodes to $29 \times 29$ (vs. $19 \times 19$ in the main experiment) to increase complexity. 
\rebuttal{For Wave Balls, due to the complexity of the task, we follow the curriculum scheduling proposed by ~\cite{han2022predicting,lienen2022learning}. Here, we linearly increase the supervised window by $s$ steps from a minimal window $T_0$ to $T$ after every $n$ epochs. $T_0$, $s$ and $n$ are set as hyperparameters.
Meanwhile, for Cylinder Flow with $T=100$, we adopt the attentional aggregation~\citep{Li2019attentionagg} in the forcing term to further enhance expressiveness.}
As shown in \cref{fig:ablations_c}, \model\ outperforms \modelinvariant{} on those tasks, with lower std. 
These results indicate that making weighted matrices time-varying helps increase the model's expressiveness and therefore, enables it to capture more complex dynamics.

\section{Conclusion}
\label{sec:conclusion}
In this work, we presented \model, a graph-based neural simulator that leverages port-Hamiltonian dynamics to improve long-range information propagation and stability in physical modeling problems. By integrating warmup initialization, geometric encoding, and a multi-step training objective, \model\ learns to match the solution trajectories of its underlying port-Hamiltonian dynamics to those of ground-truth PDE solvers, keeping the predicted states on-trajectory and thereby mitigating error accumulation while enhancing long-range dependencies in both spatial and temporal domains. \model\ achieves consistent improvements over strong baselines across six tasks spanning both solid and fluid mechanics, including new benchmarks designed to test complex and long-range dynamics. Our theoretical analysis further shows that the Hamiltonian structure preserves gradient flow, while the port-Hamiltonian extension enables the model to capture a broader class of dynamics through its universality property. Together, these results highlight \model\ as a principled and stable framework for neural simulation of complex physical systems.

\textbf{Limitations and Future Work.}
Our method currently operates in an open-loop, causal setting, where sources are specified at the start and the model predicts forward without feedback. However, many real-world applications require closed-loop feedback, in which the model continuously adapts its predictions based on observations obtained from sensor measurements or other exogenous inputs. Extending \model\ in this direction is therefore a natural and meaningful next step. Furthermore, \model’s ability to propagate information over long ranges naturally opens the door to inverse design~\citep{allen2022inverse} and control tasks~\citep{hoang2025geometryaware}, where the goal is to identify optimal parameters (\eg source locations or inputs) that achieve a desired outcome by optimizing inverse objectives. Exploring these directions represents a promising avenue for future work.

\newpage

\subsubsection*{Acknowledgments}
This work is part of the M2 subproject of the DFG AI Research Unit 5339, which focuses on learning dynamical process models based on data and expert knowledge to accelerate manufacturing process maturation. We gratefully acknowledge financial support from the German Research Foundation (DFG) and computational resources provided by bwHPC and the HoreKa supercomputer, funded by the Ministry of Science, Research, and the Arts Baden-Württemberg and the German Federal Ministry of Education and Research. We would like to thank Johannes Mitsch and Tobias Würth for developing the data generation platform for the Plate Deformation task in Abaqus.
AT, AG, and DB acknowledge funding from EU-EIC EMERGE (Grant No. 101070918).

\section*{Ethics Statement}
The research conducted in this paper conforms in every aspect with the ICLR Code of Ethics. Our study does not involve human subjects, sensitive personal data, or applications with foreseeable harmful consequences. No ethical concerns are anticipated regarding data usage, methodology, or findings.

\section*{Reproducibility Statement}
We provide all necessary details to implement our \model{} in \Cref{sec:experiments}, and describe the choices of hyperparameter for each experiment in Appendix~\ref{app:hyperparams} and full dataset description in Appendix~\ref{app:datasets}, thereby ensuring sufficient information to reproduce our results. 
The full codebase and datasets are released with the camera-ready version to ensure the reproducibility, and are also available at \href{https://thobotics.github.io/neural_pde_matching}{\small \texttt{\textcolor{blue}{https://thobotics.github.io/neural\_pde\_matching}}}.

\bibliography{arxiv}
\bibliographystyle{arxiv}

\clearpage
\appendix
\section{LLMs Usage}
Large Language Models (LLMs) were used as general-purpose assistive tools to improve the writing quality of this paper. Specifically, we used LLMs to help with grammar correction, rephrasing for clarity, and suggesting some improvements to the overall structure of the text. 
All LLM-generated text was carefully reviewed and edited by the authors to ensure that it accurately reflects the authors' intentions and scientific content.
No LLMs were used to generate scientific content, including but not limited to research direction, hypothesis formulation, experimental design, data analysis, or interpretation of results.



\section{Related Work}
\label{sec:related_work}

\paragraph{Graph Neural Simulators.}
Graph Neural Networks have been widely adopted in machine learning for physical simulation tasks~\citep{battaglia2018relational,pfaff2021learning}. Notable examples include modeling particle interactions~\citep{10.5555/3524938.3525722,li2018learning,10.5555/3524938.3525722}, fluid dynamics~\citep{pfaff2021learning,yu2025piorf}, and deformable materials~\citep{yu2024hcmt,linkerhagner2023grounding}. 
Here, given the observations at the current time step, one can predict the next states by applying a few iterations of message-passing steps to incorporate neighborhood information, and output the next state. This underlying message-passing step is often implemented with graph convolutional layers~\citep{GCN} or attention-based methods~\citep{GAT}, or more generally, message-passing framework~\citep{battaglia2018relational}. 
During rollout, the model can generate the whole trajectories in an autoregressive manner\rebuttal{~\citep{pfaff2021learning,Wei2025physicstopology}}. 
However, these models often struggle with long-term accuracy due to error accumulation over long time horizons. In contrast, another line of work focuses on directly learning the underlying continuous dynamics using neural ODEs \citep{neuralode}, which helps mitigate the error accumulation problem by enabling the model to output whole trajectories in a single forward pass and better capture the continuous nature of physical systems.
Recent works extend this idea for graph-based architectures~\citep{iakovlev2021learning,lienen2022learning}.
These models are often trained with multi-step objectives, showing accuracy improvement and with the additional ability to handle irregular time series and improve generalization.
However, these models are often limited to \rebuttal{short-term predictions~\citep{lienen2022learning,janny2023eagle} or supervision in a reduced latent space~\citep{han2022predicting}} due to the challenges in propagating information over both long spatial and temporal horizons.

\paragraph{Effective propagation in Graph Neural Networks.}
Effectively propagating information across the graph, and thus effectively modeling long-range dependencies, remain a critical challenge for GNNs \citep{shi2023exposition, rusch2023survey,pmlr-v231-roth24a}. Stacking many GNN layers to capture long-range dependencies often leads to issues such as over-smoothing~\citep{cai2020note,rusch2023survey}, when node states become indistinguishable as the number of layers increases. To address this, methods like residual connections~\citep{Li2019-DeepGCNs}, normalization techniques~\citep{luo2024classic}, and advanced architectures \citep{graphcon, pdegcn} have been proposed. Another issue that prevent effective long-range propagation is the over-squashing phenomenon \citep{alon2021oversquashing,topping2022understanding,diGiovanniOversquashing}, which arises from  topological bottlenecks that squash exponentially increasing amounts of information into node states, causing loss of information between distant nodes. To overcome this, a variety of strategies have been proposed. For example, graph rewiring methods \citep{topping2022understanding,DIGL,barbero2024localityaware} modify the graph topology to facilitate communication. In physical system simulation, hierarchical \rebuttal{\citep{evomesh, bsms, janny2023eagle}} and rewiring methods, including HCMT~\citep{yu2024hcmt} and PIORF~\citep{yu2025piorf}, improve the global information flow by introducing shortcuts or multi-scale pooling. However, these approaches can make models overly tied to the mesh structure, which can result in geometric overfitting.
More recently, \cite{gravina_gcn-ssm} demonstrated that both over-smoothing and over-squashing problems are closely linked to the problem of vanishing gradients in message passing, 
and works like \cite{pdegcn,gravina_adgn,gravina_randomized_adgn,gravina_swan,gravina_phdgn, gravina_gcn-ssm,gravina_sonar,gravina_chebnet,gravina_grama} have explored this perspective to design GNNs that preserve the gradient flow over many layers. However, most of these methods focus on static graphs and node classification tasks, and it is still unclear how to extend them to physical simulation tasks.

\paragraph{Neural ODEs and Hamiltonian Dynamics.}
Neural Ordinary Differential Equations (ODEs) have been widely used to model continuous-time dynamics~\citep{neuralode}, offering flexibility in time integration and improved generalization, and have also been extended to graph-structured data \cite{GDE,gravina_ctan,gravina_tgode}.
There has been significant interest in extending neural ODEs to capture physical systems by drawing inspiration from Hamiltonian dynamics~\citep{arnol2013mathematical}. 
Most closely related to our work is HOGN~\citep{greydanus2019hamiltonian}, which parameterizes Hamiltonian dynamics with graph-based message passing, but does not explicitly study long-range spatio-temporal dependencies.
Other notable examples include Hamiltonian Neural Networks (HNNs)~\citep{greydanus2019hamiltonian} and Symplectic ODE-Net~\citep{zhong2020symplectic}, which achieve energy conservation and stable long-term predictions by learning a parameterized Hamiltonian function. 
This conservation property has also been explored as a way to mitigate gradient vanishing in deep neural networks~\citep{galimberti2023-hdgn,Lanthaler2023-NeuralOscillators,rusch2025oscillatory}, and it has inspired new architectures for enhancing long-range propagation in GNNs~\citep{gravina_phdgn} as well as long-horizon forecasting in state-space-model~\citep{rusch2025oscillatory}. 
However, these works remain different from ours, as they focus either on static graphs or purely temporal sequences.

In this work, we propose a novel 
GNS that leverages dynamical systems theory to preserve the gradient flow across distant nodes in spatial domains.
Trained with multi-step loss, this model can mitigate error accumulation and hence, enable accurate long-term physical simulation.

\section{Derivations}
\label{app:derivations}

In this section, we establish the connection between the port-Hamiltonian formalism in \cref{eq:port_hamiltonian} and the PDE wave equation defined on a space–time domain. In particular, we show that both can be expressed in the form of a mass–spring–damper system. These two links make clear that our port–Hamiltonian formalism induces a second–order inductive bias, which can be interpreted as a weak form of an underlying wave equation.

\subsection{\model{} as Mass-Spring-Damper ODE}
\label{app:spring_damper}

To begin with, we first consider the following Hamiltonian function:
\begin{align*}
   H(t,\vq, \vp) = \frac{1}{2} \vp^T \mM^{-1} \vp + \frac{1}{2} \vq^T \mK \vq
\end{align*}
where we define the generalized coordinates $\vq$ and momenta $\vp = \mM \dot{\vq}$, and let $\vx = [\vq,\vp]^T$. In fact, this definition is consistent with the Hamiltonian defined in \cref{eq:hamiltonian} if we set $\tilde{\sigma}(x) = \frac{1}{2}x^2$ and $\mW = \diag{([\mK^{1/2}, \mM^{-1/2}])}$ and $\mV = \mathbf{0}$. The update of $\vq$ and $\vp$ is then given by:
\begin{align*}
   \dot{\vq}(t) &= \nabla_{\vp} H(t,\vq,\vp) = \mM^{-1} \vp(t) \\
   \dot{\vp}(t) &= -\nabla_{\vq} H(t,\vq,\vp) = -\mK \vq(t).
\end{align*}
We now eliminate $\vp(t)$ by substituting $\vp(t) = \mM\, \dot{\vq}(t)$, and add the damping term $\mD\, \dot{\vq}(t)$ and the external force $\vr(t, \vq)$, yielding:
\begin{align}
\label{eq:second_order_system}
   \mM\, \ddot{\vq}(t) + \mD\, \dot{\vq}(t) + \mK\, \vq(t) = \vr(t, \vq).
\end{align}
This recovers the mass-spring-damper system.

For a more general case, consider a Hamiltonian $H(t, \vq, \vp) \in \mathcal{C}^2$ in all arguments. Taking the time derivative of $\dot{\vq} = \nabla_{\vp} H(t, \vq, \vp)$ gives
\[
\ddot{\vq} = \nabla^2_{\vp\vp} H\, \dot{\vp} + \nabla^2_{\vp\vq} H\, \dot{\vq} + \partial_t\nabla_{\vp} H.
\]
Assuming $\nabla^2_{\vp\vp} H$ exists and is invertible (which holds for certain activation functions, e.g. $\sigma(\cdot) = \tanh(\cdot)$ and regularity conditions for invertibility), define $\mM(t, \vq, \vp) := \left(\nabla^2_{\vp\vp} H(t, \vq, \vp)\right)^{-1}$. Solving for $\dot{\vp}$,
\begin{align}\label{eqn:p1_derivation}
   \dot{\vp} = \mM(t, \vq, \vp)\, \left[\ddot{\vq} - \nabla^2_{\vp\vq} H\, \dot{\vq} - \partial_t\nabla_{\vp} H\right].
\end{align}
\rebuttal{We now use the Hamiltonian equation for $\dot{\vp}$ with damping and external forcing from \Cref{eq:port_hamiltonian}
\begin{equation}\label{eqn:p2_derivation}
    \dot{\vp} = -\nabla_{\vq} H - \mD\, \dot{\vq} + \vr(t, \vq)
\end{equation}
Substituting \Cref{eqn:p1_derivation} into \Cref{eqn:p2_derivation} and rearranging}, we arrive at the general second-order form:
\begin{equation}\label{eq:general_second_order}
   \underbrace{\mM(t, \vq, \vp)\, \ddot{\vq}}_{\text{mass}}
   +\; \underbrace{\left[\mD\, - \mM(t, \vq, \vp)\, \nabla^2_{\vp\vq} H\right] \dot{\vq}}_{\text{damping}}
   +\; \underbrace{\nabla_{\vq} H}_{\text{stiffness}}
   =\; \underbrace{\vr(t, \vq) + \mM(t, \vq, \vp) \partial_t\nabla_{\vp} H}_{\text{external force \& residual}}.
\end{equation}
This analysis demonstrates that, for general Hamiltonians, the resulting second-order dynamics exhibit state- and time-dependent mass, damping, and stiffness terms. In particular, for separable Hamiltonians, we have $\nabla^2_{\vp\vq} H = 0$, and for time-invariant Hamiltonians, $\partial_t\nabla_{\vp} H = 0$. In these cases, the dynamics reduce to the form of \cref{eq:second_order_system}, with $\mM(t, \vq, \vp) = (\nabla^2_{\vp\vp} H)^{-1}$ and $\mK(t, \vq, \vp)\vq(t) = \nabla_{\vq} H(t, \vq, \vp)$.

\paragraph{Comparison with GraphCON.}
We can further observe that the baseline GraphCON~\citep{graphcon} is in fact a simpler instantiation of \cref{eq:second_order_system}. Its update takes the form
\begin{equation*}
   \ddot{\vq}(t) + \alpha \dot{\vq}(t) + \gamma \vq(t) = \sigma\!\left(\mR_\theta(t,\vq)\right),
\end{equation*}
where $\mR_\theta$ is implemented as a standard graph convolution or attention operator. This corresponds directly to a mass–spring–damper system with $\mM=\mI$, $\mD=\alpha \mI$, $\mK=\gamma \mI$, and external forcing $\vr(\cdot;G)=\sigma(\mR_\theta(t,\vq))$. Notably, the mass matrix $\mM$ is the identity, meaning that coupling between nodes occurs only through the external forcing term $\vr$, whereas in our formulation coupling also appears on the left-hand side through $\mM$, $\mD$, and $\mK$.




\subsection{Mass–spring–damper ODE as a second–order PDE}
\label{app:pde}

In this subsection, we show how a second-order PDE with damping and forcing reduces to a finite-dimensional mass-spring-damper ODE via Galerkin projection. Interested readers are referred to \citet{igel2016fem} for further details.

Let $\Omega\subset\mathbb{R}^d$ be a bounded domain and consider the following damped, forced wave equation
\begin{equation}\label{eq:wave_strong}
\partial_{tt} u + \mathcal{B}\,\partial_t u + \mathcal{L}\,u = g(\vx,t),\qquad (\vx,t)\in\Omega\times(0,T],
\end{equation}
where the solution is $u:\Omega \times (0,T] \to \mathbb{R}$, \ie a scalar field. We set $\mathcal{L} = -c^2\Delta$ with constant wave speed $c>0$, where $\Delta$ is the spatial Laplacian, and define $\mathcal{B}$ as a damping operator (\eg $\alpha I$) acting on the time-derivative term $\partial_t u$.

Let $V \subset H^1(\Omega)$ be the test space. To obtain the weak form of \cref{eq:wave_strong}, 
we take a test function $v \in V$, multiply the strong equation by $v$, and integrate over $\Omega$:  
\begin{equation*}
\int_\Omega \partial_{tt} u \, v \, d\vx 
\;+\; \int_\Omega \mathcal{B}\,\partial_t u \, v \, d\vx 
\;-\; c^2 \int_\Omega \Delta u \, v \, d\vx
= \int_\Omega g(\cdot,t)\, v \, d\vx.
\end{equation*}
The first two terms already appear in weak form. For the Laplacian term, 
we apply integration by parts and assume that $u(t)$ vanishes at the boundary. This gives the following bilinear form:
\begin{equation*}
a(u,v) := - c^2 \int_\Omega \Delta u \, v \, d\vx
= c^2 \int_\Omega \nabla u \cdot \nabla v \, d\vx.
\end{equation*}

Altogether, the weak form reads: for all $v \in V$,
\begin{equation}\label{eq:weak}
\langle \partial_{tt} u, v \rangle 
+ \langle \mathcal{B}\,\partial_t u, v \rangle 
+ a(u,v) 
= \langle g(\cdot,t), v \rangle,
\end{equation}
where $\langle f,g \rangle = \int_\Omega f(\vx)\,g(\vx)\,d\vx$ denotes the $L^2(\Omega)$ inner product.

Choose a finite-dimensional subspace $V_h=\mathrm{span}\{\phi_1,\dots,\phi_k\}\subset V$ and rewrite the solution as
\begin{equation}\label{eq:galerkin_ansatz}
u_h(\vx,t)=\sum_{j=1}^{k} q_j(t)\phi_j(\vx),
\end{equation}
where $q_j(t)$ are time-dependent coefficients to be determined. Since \cref{eq:weak} holds for all $v=\phi_i$, $i=1,\dots,k$, rewriting it using the \cref{eq:galerkin_ansatz} yields
\begin{equation}\label{eq:discrete_system_entries}
\sum_{j=1}^k \langle\phi_j,\phi_i\rangle\,\ddot q_j(t) + \langle\mathcal{B}\,\phi_j,\phi_i\rangle\,\dot q_j(t) + a(\phi_j,\phi_i)\, q_j(t) = \langle g(\cdot,t),\phi_i\rangle.
\end{equation}

To show the equivalent mass-spring-damper form, we now define the following matrices and vectors:
\begin{align*}
\mM_{ij} &:= \langle\phi_j,\phi_i\rangle, \qquad &\text{(mass matrix)}\\
\mD_{ij} &:= \langle\mathcal{B}\,\phi_j,\phi_i\rangle, \qquad &\text{(damping matrix)}\\
\mK_{ij} &:= a(\phi_j,\phi_i), \qquad &\text{(stiffness matrix)}\\
\vr_i(\cdot,t) &:= \langle g(\cdot,t),\phi_i\rangle, &\text{(external forcing)}
\end{align*}
with vector $\vq=[q_j]_{j=1}^{k}$, thus \cref{eq:discrete_system_entries} becomes
\begin{equation*}\label{eq:msd_matrix}
\mM\,\ddot{\vq}(t) + \mD\,\dot{\vq}(t) + \mK\,\vq(t) = \vr(\cdot,t).
\end{equation*}

\paragraph{Interpretation.}
Because our latent dynamics (\cref{eq:port_hamiltonian,eq:general_second_order}) have the same form as the system above, \model{} can be seen as learning \emph{effective} mass, damping, and stiffness operators, consistent with a weak-form second-order PDE (in a P1 nodal basis, coefficients equal nodal values, \ie $u(\vx_j, t) = \vq_j(t)$). 
This method-of-lines view is related to Finite Element Networks (FEN)~\cite{lienen2022learning}, which were the first to introduce the connection between message passing and the finite element method. However, while FEN focuses on \emph{diffusion/heat} dynamics (parabolic, first order in time), here we model a more general \emph{wave} equation with forcing and damping, yielding a more general second-order hyperbolic PDE.

\section{Time-varying Weight Matrices}\label{app:Time-varying_Weight_Matrices}

To further enhance the model's expressiveness, we propose the following parameterization to make $\mW(t)$ and $\mV(t)$ in~\cref{eq:hamiltonian} time-varying, allowing the Hamiltonian to adapt over time, which is particularly useful for modeling more complex, non-stationary dynamics.

Here, we consider only the case of $\mW(t)$, as $\mV(t)$ can be treated similarly. Assuming $\mW(t)$ is a square matrix of size $d\times d$, we decompose it into symmetric and skew-symmetric parts, \ie $\mW(t)=\mS(t)+\mA(t)$, where
\begin{equation*}
\mS=\begin{bmatrix}
s_{11} & s_{12} & \cdots & s_{1d}\\
s_{12} & s_{22} & \cdots & s_{2d}\\
\vdots & \vdots & \ddots & \vdots\\
s_{1d} & s_{2d} & \cdots & s_{dd}
\end{bmatrix},\qquad
\mA=\begin{bmatrix}
0 & a_{12} & \cdots & a_{1d}\\
-a_{12} & 0 & \cdots & a_{2d}\\
\vdots & \vdots & \ddots & \vdots\\
-a_{1d} & -a_{2d} & \cdots & 0
\end{bmatrix},
\end{equation*}
with $s_{ij}=s_{ji}$, $a_{ij}=-a_{ji}$, and $a_{ii}=0$. We then factor these parts through time-invariant and time-varying components as
\begin{equation*}
\mS(t)=\mU_s\,\diag(\mathbf{\gamma}_\theta(t))\,\mU_s^\top,\qquad
\mA(t)=\mU_a\,\diag(\mathbf{\tau}_\theta(t))\,\mP_a^\top-\mP_a\,\diag(\mathbf{\tau}_\theta(t))\,\mU_a^\top,
\end{equation*}
where $\mU_s,\mU_a,\mP_a$ are learned (time-invariant) bases, and $\mathbf{\gamma}_\theta(t),\mathbf{\tau}_\theta(t)$ are two time-varying coefficient vectors output by MLPs with parameters $\theta$.
This two-step construction helps keeping all the learned parameters unconstrained while reducing the parameter counts to $\mathcal{O}(d)$, compared to
the naive approach of letting a MLP output all the matrix entries, which leads to $d^2$ parameters.
In practice, we implement $\mathbf{\gamma}_\theta(t),\mathbf{\tau}_\theta(t)$ as small MLPs that take time-embedding $t$ as input and output the corresponding coefficient vectors.

\section{Proofs of the Theoretical Statements and further Discussion}\label{app:proofs}
In this section, we prove the theoretical statements in this paper together with a more in-depth discussion.

\subsection{Neural Oscillators and Universality}\label{app:proofs/universality}
We start this section by recalling the definition of neural oscillators and their universality property \citep{Lanthaler2023-NeuralOscillators}. The general setting involves learning an operator $\Phi:\vu\rightarrow \Phi(\vu)$, that maps a time-dependent input signal $\vu(t)$ to an output function $\Phi(\vu)(t)\in \mathbb{R}^q$. Here, $\Phi$ is defined as a continuous operator (w.r.t. the $L^\infty$ norm) that maps the space
\begin{equation}
    C_0([0,T];\R^p) =\{\vu:[0,T]\rightarrow \R^p|t\mapsto u(t) \;\text{is continuous and } \vu(0)=0\} 
\end{equation}
into itself. The assumption $\vu(0)=0$ is not restrictive, as the main paper showed that the proposed neural oscillators can operate even in the case $\vu(0)$ is non-zero. Here, by adding an arbitrarily small time interval to ``warmup", the oscillator starting from the rest state can synchronize with this non-zero input signal. The only remaining assumption is that $\Phi$ needs to be  \emph{causal}, in the sense that the value of $\Phi(u)(t)$ does not depend on future values $\{u(\tau)|\tau>t\}$, which matches our setup considered in this paper well. Formally speaking, $\Phi$ is causal if giving two input signals and assuming that $u|_{[0,t]} = v|_{[0,t]}$, then $\Phi(u)(t) = \Phi(v)(t)$.

Then, we recall the general form of the neural oscillator, given in the following
\begin{equation}\label{eq:neuraloscillator_base}
    \begin{cases}
        \ddot{\vy}(t) = \sigma\left( \mA \vy(t) + \mB \vu(t) +\vc \right),\\
        \vy(0) = \dot{\vy}(0) = \bm{0},\\
        \vz(t) = \mD \vy(t)+\ve,
    \end{cases}
\end{equation}
where  $\vy\in\R^d$ and $\sigma$ is a $C^{\infty}(\mathbb{R})$ activation function with $\sigma(0)=0$ and $\sigma'(0) = 1$, \eg $\tanh$ or $\sin$. Here, 
\cref{eq:neuraloscillator_base} defines an input-output mapping $\vu(t) \mapsto \vz(t)\in \mathbb{R}^q$ as a solution of a second-order dynamics.
Furthermore, \citet{Lanthaler2023-NeuralOscillators} (Section 2) also considers another form of neural oscillator, which is given by 
\begin{equation}\label{eq:neuraloscillator}
    \begin{cases}
        \ddot{\vy}(t) = \sigma\left( \mA \vy(t) +\vc \right) + \vr(t),\\
        \vy(0) = \dot{\vy}(0) = \bm{0},\\
        \vz(t) = \mD \vy(t)+\ve.
    \end{cases}
\end{equation}
In this case, the input is provided via $\vr(t)$, an external forcing term which in their setting is a linear transformation of the inputs. For the universality result presented shortly below, we will consider the neural oscillator in \cref{eq:neuraloscillator}, which is the closest setting to ours. Here, the universality result remains the same, as the original proof does not change if the forcing is inside or outside the activation (see the discussion at the end of Section 2 in \citet{Lanthaler2023-NeuralOscillators}).

\rebuttal{The main result of \citet{Lanthaler2023-NeuralOscillators}, for Neural Oscillators in the form of \Cref{eq:neuraloscillator}, can be summarized in the following statement:
\begin{theorem}[Theorem 3.1 of \citet{Lanthaler2023-NeuralOscillators}]\label{thm:universality_original}
    Let $\Phi:C_0([0,T];\R^p)\rightarrow C_0([0,T];\R^p)$ be a causal and continuous operator and let $K\subset C_0([0,T];\R^p)$ be compact. Then, for any $\epsilon >0$, there exists a latent dimension $d$, weights $\mA, \mD$, and biases $\vc, \ve$ such that the output of the multi-layer neural oscillator with input satisfies:
    \begin{equation}
        \sup_{t\in[0,T]}|\Phi(u)(t)-z(t)|\leq \epsilon, \;\; \forall u\in K.
    \end{equation}
\end{theorem}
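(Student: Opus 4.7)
The plan is to prove universality by exhibiting a two–stage construction inside a single oscillator: an \emph{encoder} that faithfully captures the input trajectory into a finite-dimensional latent state $\vy(t)$, followed by a \emph{decoder} that reads out $\vz = \mD\vy+\ve$, so that their composition approximates $\Phi$ uniformly on $K$. I would split the tolerance as $\epsilon = \epsilon_1 + \epsilon_2 + \epsilon_3$, corresponding to (i) reducing $\Phi$ to a continuous function of a finite-dimensional encoding of $\vu$, (ii) approximating that encoding by the linear response of part of the oscillator, and (iii) approximating the remaining continuous finite-dimensional map by the shallow-network structure built into the oscillator and its readout.

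First I would discretize the input side. Since $K \subset C_0([0,T];\R^p)$ is compact, Arzel\`a--Ascoli yields equicontinuity, so each $\vu \in K$ is approximated in sup-norm (up to error controlling $\epsilon_1$) by a finite truncation of a fixed basis expansion (e.g.\ trigonometric or piecewise polynomial), reducing $\vu$ to a coefficient vector $\va(\vu) \in \R^N$ lying in a compact set. Combining continuity and causality of $\Phi$ with compactness of this image gives a jointly continuous map $G:[0,T]\times\R^N\to\R^q$ with $|\Phi(\vu)(t) - G(t,\va(\vu))| \le \epsilon_1$ uniformly over $\vu\in K$ and $t \in [0,T]$; treating time as an extra coordinate (added via one auxiliary oscillator tracking $t$), it suffices to approximate $G$ on a compact Euclidean domain.

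Next I would construct the encoder. By choosing a block of $\mA$ with small norm and $\vc = \bm{0}$, the expansion $\sigma(\mA\vy+\vc) = \mA\vy + O(\|\mA\vy\|^2)$ (using $\sigma(0)=0$, $\sigma'(0)=1$) makes that sub-block behave as a bank of linear forced oscillators $\ddot{\vy}_k + \omega_k^2 \vy_k = (\mB\vu)_k$ with assorted frequencies $\omega_k$. Their trajectories are inner products of $\vu$ against an asymptotically complete family of kernels on $[0,t]$, so that, for sufficiently many frequencies and via a Paley--Wiener / density argument on windowed signals, $\vy(t)$ recovers $\va(\vu)$ within $\epsilon_2$ uniformly in $t$ and $\vu$. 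The linearization error is controlled by scaling down $\mB$ (which shrinks $\|\vy\|$) and compensating with a matching rescaling of the corresponding rows of $\mD$, so the $O(\|\vy\|^2)$ nonlinear correction can be made arbitrarily small.

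For the decoder, I would invoke the Pinkus-type universal approximation theorem: the class $\mD\,\sigma(\mA\vy+\vc)+\ve$ is dense in $C(\R^N;\R^q)$ on compact sets whenever $\sigma$ is non-polynomial, so $G$ can be approximated within $\epsilon_3$ by such a shallow network in the encoder state. The key design step is to realize this network \emph{inside} the same second-order system: augment $\vy$ with an additional block of fast relaxing variables whose driven steady state, in a singular-perturbation limit $\ddot{\vy}_{\text{dec}} + \Omega^2 \vy_{\text{dec}} = \sigma(\tilde{\mA}\vy_{\text{enc}} + \tilde\vc)$ with $\Omega \to \infty$, converges pointwise in $t$ to $\Omega^{-2}\sigma(\tilde{\mA}\vy_{\text{enc}}+\tilde\vc)$; the outer $\mD,\ve$ then act as the network's output layer (the multi-layer formulation in the theorem accommodates stacking this construction). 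The \emph{main obstacle}, as I see it, is precisely this last step: one must guarantee that encoder and decoder subsystems remain dynamically decoupled --- no cross-talk between frequency bands of the encoder, no back-reaction from the fast decoder onto the slow encoder --- while both live inside one second-order system driven by $\vr = \mB\vu$. I would handle this through block-diagonal structure of $\mA,\mB,\mD$ combined with an explicit time-scale separation between the encoder frequencies $\omega_k$ and the decoder rate $\Omega$, so that the slow encoder is essentially unaffected by the fast decoder and perturbation estimates (Tikhonov / averaging) bound the residual by $O(\Omega^{-1})$. Choosing $\epsilon_1,\epsilon_2,\epsilon_3$ together with $\Omega$ appropriately yields $\sup_{t\in[0,T]}|\Phi(\vu)(t)-\vz(t)|\le\epsilon$ uniformly on $K$.
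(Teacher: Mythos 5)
The paper does not prove this statement; it is imported verbatim as Theorem~3.1 of \citet{Lanthaler2023-NeuralOscillators} and then invoked in the proof of \cref{thm:universality}, so there is no in-paper argument to compare against. Judged against the published proof, your encoder step is sound and matches the original in spirit: scaling $\mA,\mB$ small so that $\sigma(\mA\vy+\vc)\approx\mA\vy$ turns a block of the oscillator into a bank of linear forced harmonic oscillators, whose convolutions of $\vu|_{[0,t]}$ against $\sin(\omega_k(t-s))$ furnish an asymptotically separating family of functionals; and the Arzel\`a--Ascoli / causality reduction of $\Phi$ to a continuous map on a compact finite-dimensional set is the standard opening move.

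The genuine gap is your decoder. The oscillator in \cref{eq:neuraloscillator_base} has \emph{no damping}, so the ``fast'' block $\ddot\vy_{\text{dec}}+\Omega^2\vy_{\text{dec}}=f(t)$ with $\vy_{\text{dec}}(0)=\dot\vy_{\text{dec}}(0)=\bm{0}$ does not relax to a quasi-static equilibrium. Integrating by parts gives $\vy_{\text{dec}}(t)=\Omega^{-2}\bigl(f(t)-f(0)\cos(\Omega t)\bigr)+O(\Omega^{-3})$, and because $\vy_{\text{enc}}(0)=\bm{0}$ forces $f(0)=\sigma(\tilde\vc)$, the homogeneous oscillation persists as $\Omega\to\infty$. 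Since the readout must scale the decoder block by $\mD\sim\Omega^2$ to recover $f$, the residual $\sigma(\tilde\vc)\cos(\Omega t)$ becomes an $O(1)$ error; Tikhonov's theorem does not apply because the fast subsystem is a neutrally stable center rather than asymptotically stable, and setting $\tilde\vc=\bm{0}$ to kill the artifact strips the shallow network of the biases Pinkus-type universality relies on (the encoder state carries no constant channel, since $\vy_{\text{enc}}(0)=\bm{0}$). Lanthaler et al.\ avoid this by taking ``multi-layer'' literally: the bounded continuous readout $\vz_1=\mD_1\vy_1+\ve_1$ of the encoding layer is fed as the \emph{input function} $\vu_2$ to a fresh second oscillator layer, so encoder and decoder are separate ODE systems composed through their input/output interfaces, and no time-scale separation within a single undamped system is ever needed. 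Replacing your singular-perturbation step with this literal layer stacking would close the gap.
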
}
While this theorem is stated for maps between time-dependent signals $\vu(t)$ and $\vz(t)$, \citet{Lanthaler2023-NeuralOscillators} provides an additional result for continuous functions $F:\R^p\rightarrow \R^q$, \ie neural oscillators can learn to approximate to arbitrary precision any such map $F$ between \emph{vector spaces}. The full discussion is provided in Appendix A of \citet{Lanthaler2023-NeuralOscillators}.

To show that our model is universal in either sense, we need to cast our model, defined by \cref{eq:port_hamiltonian} and \cref{eq:hamiltonian}, into the definition in \cref{eq:neuraloscillator}. First, we provide a proof of the universality of our model, cast into the neural oscillator form, and then we provide an interpretation of our results.

\begin{theorem*}[Universality of \model{}, \Cref{thm:universality} of \Cref{sec:theoretical_properties}]\label{thm:universality_app}
   \rebuttal{Let $\Psi_{\theta}$ be the functional that maps the initial condition $\vx_0 = [\vq_0, \vp_0]$ to the solution at time $\tau$ to the ODE defined \cref{eq:port_hamiltonian} and \cref{eq:hamiltonian}. In other words, $\Psi_\theta$ represents \model{}. Then, for any $F:\mathbb{R}^{n\times d}\rightarrow\mathbb{R}^{n\times d}$ with compact support, which represents the evolution of the system after time $\tau$, there exist a set of parameters $\theta$ such that $\|\Psi_{\theta}(\vx_0) - F(\vx_0)\|\leq \epsilon$.}
\end{theorem*}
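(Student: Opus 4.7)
The plan is to reduce the statement to the established universality of neural oscillators (\Cref{thm:universality_original}, from \citet{Lanthaler2023-NeuralOscillators}) in three steps: (i) cast the \model{} dynamics defined by \cref{eq:port_hamiltonian} with Hamiltonian \cref{eq:hamiltonian} into the second-order neural oscillator form \cref{eq:neuraloscillator}; (ii) invoke the vector-space variant of \Cref{thm:universality_original} discussed in Appendix A of \citet{Lanthaler2023-NeuralOscillators}; (iii) translate the resulting uniform bound back to the solution $\Psi_{\theta}(\vx_0)$ evaluated at time $\tau$.

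For step (i), I would reuse the derivation already sketched in \Cref{app:spring_damper}. Differentiating $\dot{\vq} = \nabla_{\vp} H_\theta$ once in time and substituting the port-Hamiltonian update for $\dot{\vp}$ produces a second-order ODE of the shape $\mM\,\ddot{\vq} + \widetilde{\mD}\,\dot{\vq} + \nabla_{\vq} H_\theta = \vr_\theta(t,\mX)$. Choosing a separable Hamiltonian in which the $\vp$-block of $\mW$ makes the kinetic-energy part canonical (so $\mM$ becomes the identity after a change of variables), the stiffness term $\nabla_{\vq} H_\theta$ takes the form $\mW_{\vq}^{\top}\sigma(\mW_{\vq}\vq_i + \sum_{j\in\mathcal{N}_i}\mV_{\vq}\vq_j + \vc)$, which matches the $\sigma(\mA\vy+\vc)$ term of \cref{eq:neuraloscillator} once the local and neighbor weights are aggregated into a single matrix $\mA$ acting on the full node-state vector. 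The damping and forcing terms on the right-hand side of \cref{eq:port_hamiltonian} then line up with the external forcing $\vr(t)$ in \cref{eq:neuraloscillator}.

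For steps (ii)--(iii), since the target $F:\mathbb{R}^{n\times d}\to\mathbb{R}^{n\times d}$ acts between finite-dimensional vector spaces rather than signal spaces, I would invoke the vector-space version of \Cref{thm:universality_original}: promote $\vx_0$ to a constant forcing on $[0,\tau]$, evolve the oscillator for time $\tau$, and tune the readout $\mD\vy(\tau)+\ve$ to approximate $F(\vx_0)$ uniformly on any compact set. The zero initial condition $\vy(0)=\dot\vy(0)=\bm{0}$ required by \cref{eq:neuraloscillator} is absorbed by the warmup phase (\Cref{sec:warmup_phase}), exactly as in the discussion of non-zero initial states in \citet{Lanthaler2023-NeuralOscillators}: a short synchronization interval from rest reproduces any prescribed initial state arbitrarily well.

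The principal obstacle I expect is the expressivity check in step (i). The Hamiltonian in \cref{eq:hamiltonian} is constrained by the block-diagonal structure of $\mW$ and $\mV$ (needed to enforce the $\vq$-$\vp$ separation required for symplecticity), so one must verify that this restricted family is still rich enough to realize the matrix $\mA$ and activation $\sigma$ of the neural oscillator. I would argue this by padding the latent dimension $d$ as necessary, fixing $\mW_{\vp}$ to normalize the kinetic part, and using the graph aggregation $\sum_{j\in\mathcal{N}_i}\mV_{\vq}\vq_j$ to realize arbitrary linear couplings across nodes. A minor technical point is that \Cref{thm:universality_original} assumes a smooth $\sigma$ with $\sigma(0)=0$ and $\sigma'(0)=1$; this can be arranged by choosing $\tilde\sigma$ so that its derivative satisfies those normalizations, with any deviation absorbed into the biases and linear maps.
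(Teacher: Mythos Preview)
Your proposal is correct and follows the same overall strategy as the paper: rewrite the port-Hamiltonian dynamics as a second-order ODE via the mass--spring--damper derivation of \Cref{app:spring_damper}, match it to the neural-oscillator form \cref{eq:neuraloscillator}, and invoke the universality result of \citet{Lanthaler2023-NeuralOscillators}. The one substantive technical difference is how the nonzero initial condition $\vx_0$ is reconciled with the rest-state requirement $\vy(0)=\dot\vy(0)=\bm{0}$ of \cref{eq:neuraloscillator}: you absorb $\vx_0$ via the warmup argument and the constant-forcing trick from Appendix~A of \citet{Lanthaler2023-NeuralOscillators}, whereas the paper introduces an explicit change of variables $\vy=\vq-\bar\vq$, with $\bar\vq$ solving the auxiliary ODE $\ddot\vq=\mM^{-1}\tilde\vr(t,\vq)$ under the given initial data, so that $\vy$ starts at zero and $\vx_0$ reappears only through the forcing term. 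The paper's device is more constructive (the dependence on $\vx_0$ is visible in the transformed ODE), while yours leans on the cited discussion; both are valid. Two minor points you gloss over that the paper makes explicit: (i) differentiating the sum-over-nodes Hamiltonian in \cref{eq:hamiltonian} produces an extra cross term $\vg(\vq)_i$, since $\vq_i$ also appears in the summands for every $j$ with $i\in\mathcal{N}(j)$, and this must be folded into the stacked matrix $\tilde\mW_\vq$; (ii) rather than normalizing $\mM$ to the identity, the paper keeps it general and simply notes that the extra $\mM^{-1}$ factor only enlarges the model class relative to the neural oscillator, which is the cleaner direction for a universality argument.
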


\begin{proof}
    Our goal for the proof is to reconcile our definition of the port-Hamiltonian system in~\cref{eq:port_hamiltonian} with the definition of the oscillatory model in \cref{eq:neuraloscillator}. We start from the derivation of \model{} as a mass-spring-damper system in \cref{eq:general_second_order}. Here, we consider the separable Hamiltonian as in~\cref{eq:hamiltonian}, hence $\nabla^2_{\vp\vq} H = 0$. \rebuttal{We can additionally set the dissipation term to be null $\mD=0$: as we will see, without this term, our model can be cast more simply into the form of the Neural Oscillator in \Cref{eq:neuraloscillator}. Adding this term allows for further flexibility to learn a wider class of functionals, but does not break the proof ($\mD$ can be inserted into the definition of the auxiliary function we use later). Conversely, IGNS can simply learn to have a null dissipation term, if needed by the particular task.}
    With this in mind, \cref{eq:general_second_order} can be written as
    \begin{equation}\label{eq:specific_second_order}
        \ddot{\vq} = -\mM^{-1}\left(\nabla_{\vq}H +\vr(t,\vq)\right) + \partial_t\nabla_\vp H.
    \end{equation}
    For this proof, we consider the time-invariant case, while the general case is very similar, as $\partial_t\nabla_{\vp}H$ plays the same role as the external forcing $\vr(t,\vq)$. By using the definition of the Hamiltonian in \cref{eq:hamiltonian}, we rewrite \cref{eq:specific_second_order} for each node as
    \begin{equation}\label{eq:final_second_order_proof}
        \ddot{\vq}_i = -\mM^{-1}\left(\mW_{\vq}\sigma \left( \mW_{\vq}\vq_i + \sum_{j\in\mathcal{N}(i)}\mV_{\vq}\vq_j + \vb \right) +\vr(t,\vq) + \vg(\vq)_i\right)
    \end{equation}
    where $\vg(\vq)_i = \sum_{j\in\mathcal{N}(i)}\mV_{\vq}\sigma(\mW_{\vq}\vq_j + \sum_{k\in \mathcal{N}(j)}\mV_{\vq}\vq_k+\vb)$ accounts for the terms in which node $i$ is a neighbor of node $j$ (there exists a $k\in \mathcal{N}(j)$ for which $k=i$).
    Stacking all nodes, we arrive at the form
    \begin{equation}\label{eq:second_order_all_proof}
            \ddot{\vq} =-\mM^{-1}\left(\tilde{\mW}_{\vq} \sigma\left(\tilde{\mW}_{\vq} \vq +\tilde{\vb}\right) + \tilde{\vr}(t,\vq)\right),
    \end{equation}
    where $\tilde{\vb} = [\vb, \ldots, \vb]^\top$ contains one copy of $\vb$ for each node and $\tilde{\mW}$ is a block matrix, where each diagonal block has a copy of $\mW$ and block $i,j$ has a copy of $\mV$ iff $j\in \mathcal{N}(i)$. In fact, $\tilde{\mW}$ absorbs the effects of node aggregation into one bigger block matrix.
    
    \rebuttal{Since the definition from \citet{Lanthaler2023-NeuralOscillators} has a null initial condition, we adopt a trick with the forcing terms to reconcile the two forms finally. Let $\bar{\vq}, \bar{\vp}$ be auxiliary functions, solutions to $\dot{\vq} = \vp$ and $\dot{\vp} = \mM^{-1}(\vq,\vp)\tilde{\vr}(t,\vq)$, which lead to $\ddot{\vq} = \mM^{-1}(\vq,\vp)\tilde{\vr}(t,\vq)$, and initial conditions $\vq_0$ and $\vp_0$.}  Then, the function $\vy = \vq - \bar{\vq}$ evolves through the following equations
    \begin{equation}\label{eq:phdgn_converted}
        \begin{split}
            \ddot{\vy} &= -\mM^{-1}(\vy+\bar{\vq},\dot{\vy}+\bar{\vp})\tilde{\mW}_{\vq}\sigma\left(\tilde{\mW}_{\vq}\vy + \tilde{\mW}_{\vq} \bar{\vq} +\tilde{\vb} \right) \\ 
            &\ \ \ \ -\mM^{-1}(\vy+\bar{\vq},\dot{\vy}+\bar{\vp}) \tilde{\vr}(t,\vy+\bar{\vq}) - \mM^{-1}(\bar{\vq},\bar{\vp})\tilde{\vr}(t,\bar{\vq}),\\
            \vy(0) &= \bm{0}, \\
            \dot{\vy}(0) &= \bm{0}.
        \end{split}
    \end{equation}
    Here, we can see that \cref{eq:phdgn_converted} is indeed in a similar form as \cref{eq:neuraloscillator}. The only difference is in the additional matrix $\mM^{-1} = \nabla_{pp}^2H$, which depends only on $\mW_{\vp}$ (since $H$ is separable in $\vp,\vq$), and makes \cref{eq:phdgn_converted} more general than \cref{eq:neuraloscillator}. Finally, we define $\vz(t)$ as either $\vz(t) = \vy(t)$ or as obtained via a decoder $\vz(t) = \text{dec}(\vy(t))$. Hence, \model{} is a universal neural oscillator by \cref{thm:universality_original}. 
    To conclude our proof, we need two extra observations. First, since $\vp(t) = \mM\dot\vq(t)$, \model{} is universal also in terms of $\vx = [\vq, \vp]^T$. Second, this result shows that \model{} is an operator that can convert an input signal $\vr(t,\vq)$ into any possible evolution, and it can be further extended to any map between the space of our interest $\R^{n\times d}$ to itself. For this, it is sufficient to consider the proof given in Appendix A of \citet{Lanthaler2023-NeuralOscillators} and recall the definition of $\bar{\vq}$, which adds the information on the initial condition through $\tilde{\vr}(t,\bar{\vq})$.
\end{proof}
\begin{remark}
    This proof also shows that \model{} can learn any possible transformation \rebuttal{with compact support} between the forcing terms and the dynamics of the physical system. Specifically, the Hamiltonian in \cref{eq:hamiltonian} can learn any transformation between continuous functions in $C([0,T],\R^{n\times d})$.
\end{remark}
\rebuttal{
\begin{remark}
    Like most Universality theorems for Neural networks or operators, our result requires the support of the operator $F$ to be a compact subspace of $\R^{n\times d}$. We argue that this assumption is not restrictive for our purposes, as we work with physical simulations. In particular, a subset $K\subset \R^{n\times d}$ is compact if it is closed and bounded. In our case, this means that there exists $A>0$ such that $\|\vq_0\|\leq A$ and $\|\vp_0\|\leq A$. It is reasonable to assume that in contained physical simulations, the scale of physical quantities is limited and that there is a limit to the energy inside or added to the system. Hence, these are fair assumptions for our purposes.
\end{remark}
}
There is an interesting interpretation of this result, which links the definition of \model{} in  \cref{eq:port_hamiltonian} and the one of neural oscillators. In particular, looking at the formulation in \cref{eq:phdgn_converted}, we see that all the terms that are outside the activation function play the same role as the input signal $\vr(t)$, defined in \cref{eq:neuraloscillator}. In our case, these terms contain the geometrical information and the external or driving forces acting on the system, as well as the initial condition. Hence, \model{} learns to transform these inputs into the desired dynamics. In this sense, the Hamiltonian learns how to propagate through the mesh the information from the initial conditions, geometry, and external forcing.

Another interpretation comes directly from \cref{eq:final_second_order_proof} by looking at the state of each node $\vq_i$. In particular, we can interpret the state of neighboring nodes $\vq_j$, with $j\in\mathcal{N}(i)$, as the input function $\vu(t)$. Hence, \model{} learns how each node should behave in the simulation as a consequence of the influence of the neighboring nodes, dissipation, and forcing.


\subsection{Vanishing gradients and Long-range Interactions}\label{app:proofs/long_range}
The radius of propagation exploited by the GNN has a large influence on its performance \citep{Li2019-DeepGCNs,cai2020note,alon2021oversquashing,rusch2023survey,diGiovanniOversquashing,pmlr-v231-roth24a}, as it is often related to over-smoothing, which causes node representations collapse, and over-squashing, which causes information dissipation. Recently, it has been observed that these behaviors are strongly related to the sensitivity of the Jacobians $\pdvinline{\vx^{(l+1)}}{\vx^{(l)}}$ (or $\pdvinline{\vx_i^{(l)}}{\vx_j^{(0)}}$) of the GNN updates and their norms \citep{gravina_gcn-ssm}, as low sensitivity values can cause a contractive behavior that induces information loss. 
Hence, it is necessary to employ models that provide non-contractive updates to avoid these issues.
Additionally, real-world simulations often require modeling long-range interactions, as physical information can propagate over long distances. 
In this discussion, we aim to show that all these problems can be solved or limited by adopting an oscillatory dynamic at the core of the model. Particularly, the Hamiltonian formalism provides further guarantees on vanishing gradients \citep{galimberti2021-unified_hdgn,galimberti2023-hdgn} and long-range interactions \citep{gravina_phdgn}, strengthening performance.
From a dynamical system perspective, a key aspect of oscillatory systems lies in their purely imaginary eigenvalues:
\begin{theorem}[Eigenvalues of an Oscillatory System]\label{thm:oscillatory_eig_app}
    Consider a second-order system of the form $\mM\ddot{\vx} + \mK \vx = 0$, where $\mM$ and $\mK$ are positive definite. Then, the eigenvalues of the system are pure imaginary.
\end{theorem}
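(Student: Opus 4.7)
The plan is to linearize the second-order ODE into a first-order system and then reduce the problem to the spectral analysis of $\mM^{-1}\mK$, a classical generalized eigenvalue problem. The whole result follows from standard facts about symmetric positive definite matrices, with the only subtlety being that $\mM^{-1}\mK$ is not itself symmetric, but is similar to a symmetric matrix.

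First I would rewrite the system by introducing the state $\vy = [\vx, \dot{\vx}]^\top$, so that the dynamics become $\dot{\vy} = \mA\,\vy$ with the companion-block matrix
\begin{equation*}
\mA = \begin{bmatrix} 0 & \mI \\ -\mM^{-1}\mK & 0 \end{bmatrix}.
\end{equation*}
The eigenvalues of the system are, by definition, the eigenvalues of $\mA$. Writing the eigenvalue equation $\mA [\vu; \vv]^\top = \lambda [\vu; \vv]^\top$ block-wise gives $\vv = \lambda \vu$ and $-\mM^{-1}\mK\,\vu = \lambda \vv = \lambda^2 \vu$. Hence every eigenvalue $\lambda$ of $\mA$ satisfies $\mM^{-1}\mK\,\vu = -\lambda^2 \vu$ for some nonzero $\vu$, i.e.\ $-\lambda^2$ is an eigenvalue of $\mM^{-1}\mK$.

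The main step, and the only one that uses the positive definiteness assumption, is to show that $\mM^{-1}\mK$ has strictly positive real eigenvalues. Since $\mM$ is symmetric positive definite, it admits a symmetric positive definite square root $\mM^{1/2}$, and the similarity transformation
\begin{equation*}
\mM^{-1}\mK \;=\; \mM^{-1/2}\bigl(\mM^{-1/2}\mK\,\mM^{-1/2}\bigr)\mM^{1/2}
\end{equation*}
shows that $\mM^{-1}\mK$ has the same spectrum as $\mM^{-1/2}\mK\,\mM^{-1/2}$. The latter matrix is symmetric and congruent to $\mK$, hence positive definite whenever $\mK$ is, so its eigenvalues are real and strictly positive.

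Combining the two steps, $-\lambda^2 = \mu > 0$ for some real $\mu$, so $\lambda^2 = -\mu < 0$ and $\lambda = \pm i\sqrt{\mu}$ is purely imaginary. The only obstacle worth flagging is the non-symmetry of $\mM^{-1}\mK$, which is why the similarity argument (rather than a direct symmetry appeal) is needed; if one were willing to invoke the generalized eigenvalue problem $\mK\vu = \mu \mM\vu$ directly, the positivity of $\mu$ follows immediately from $\mu = \vu^\top \mK\vu / \vu^\top \mM\vu > 0$, giving an alternative one-line justification for the same step.
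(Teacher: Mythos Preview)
Your proof is correct. Both you and the paper begin by passing to the first-order system $\dot{\vy}=\mA\vy$, but diverge from there. The paper multiplies through by $\mM^{-1}$, then factors the resulting companion matrix as the product of the antisymmetric matrix $\bigl[\begin{smallmatrix}0&\mI\\-\mI&0\end{smallmatrix}\bigr]$ and a block-diagonal ``positive definite'' matrix, and invokes an external result (Lemma~2 of \citet{galimberti2021-unified_hdgn}) stating that such products have purely imaginary spectrum. Your route is more self-contained: you read off the relation $-\lambda^2\in\mathrm{spec}(\mM^{-1}\mK)$ directly from the block structure, then handle the non-symmetry of $\mM^{-1}\mK$ via the similarity $\mM^{-1}\mK\sim\mM^{-1/2}\mK\,\mM^{-1/2}$ to conclude positivity of its eigenvalues. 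This avoids the external citation and, incidentally, is cleaner about the $\mM\neq\mI$ case---the paper's reduction silently replaces $\mK$ by $\mM^{-1}\mK$, which is no longer symmetric, so the ``positive definite'' factor in their decomposition is not obviously so without exactly the similarity argument you supply. The Rayleigh-quotient alternative you mention at the end is also a perfectly good one-liner for the same step.
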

\begin{proof}
    Since $\mM$ is positive definite, it is invertible, and we can consider the following system instead:
    \begin{equation}
        \ddot{\vx}+\mK \vx = 0.
    \end{equation}
    We decompose this second-order system into a system of first-order ODEs:
    \begin{equation}
        \begin{split}
            \dot{\vx} & = \vv,\\
            \dot{\vv} & = -\mK \vx.
        \end{split}
    \end{equation}
    Calling $\vy = [\vx, \vv]^\top$, the system can be represented as
    \begin{equation}
        \dot{\vy} = \begin{bmatrix}
            \bm{0} &  \mI\\
            -\mK & \bm{0}
        \end{bmatrix} \vy = \begin{bmatrix}
            \bm{0} &  \mI\\
            -\mI & \bm{0}
        \end{bmatrix}\begin{bmatrix}
            \mK &  \bm{0}\\
            \bm{0} & \mI
        \end{bmatrix}\vy.
    \end{equation}
    That is, the evolution is determined by a matrix which is the product of an antisymmetric matrix and a positive definite one. Hence, as per Lemma 2 of \citet{galimberti2021-unified_hdgn}, the eigenvalues of the system are all imaginary.
\end{proof}

In terms of sensitivity matrices, a similar property has been derived for general model dynamics defined by $\dot{\vx}=\vf(\vx(t))$ \citep{gravina_phdgn, galimberti2023-hdgn, galimberti2021-unified_hdgn}. The starting point is understanding how the sensitivity matrix $\pdvinline{\vx(T)}{\vx(T-t)}$ evolves through time, measuring the effect of past states on the current one:
\begin{lemma}[Lemma 1 of \citet{galimberti2021-unified_hdgn}]
    Let $\vx$ be the solution to the first-order ODE $\dot{\vx} = \vf(\vx(t))$. Then, calling $\phi(T,T-t) = \pdv{\vx(T)}{\vx(T-t)}$, it holds that
    \begin{equation}
        \odv{}{t}\phi(T,T-t) = \left.\pdv{\vf}{\vx}\right|_{\vx(T-t)}\phi(T,T-t)
    \end{equation}
\end{lemma}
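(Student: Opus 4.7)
The plan is to express $\phi(T, T-t)$ through a principal fundamental matrix of the forward variational equation and then differentiate in $t$. I would first invoke the classical sensitivity equation: for a fixed start time $\tau$, the Jacobian $\Phi(s;\tau) := \pdv{\vx(s)}{\vx(\tau)}$, obtained by differentiating the ODE with respect to the initial state at time $\tau$ and interchanging partial derivatives, solves the linear matrix ODE $\partial_s \Phi = \left.\pdv{\vf}{\vx}\right|_{\vx(s)} \Phi$ with $\Phi(\tau;\tau)=I$. This is the only step that really uses differentiability of $\vf$, and it tells me that sensitivities are transported forward by the linearised dynamics along the nominal trajectory $\vx(s)$.

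Next, I would let $\Psi(s)$ be the principal fundamental matrix satisfying $\dot\Psi = \left.\pdv{\vf}{\vx}\right|_{\vx(s)}\Psi$ with $\Psi(0)=I$. By linearity and uniqueness for linear ODEs, $\Phi(s;\tau) = \Psi(s)\Psi(\tau)^{-1}$, so in particular $\phi(T, T-t) = \Psi(T)\Psi(T-t)^{-1}$. Differentiating in $t$ (only the second factor depends on $t$), using $\odv{}{t}A(t)^{-1} = -A(t)^{-1}\dot A(t)A(t)^{-1}$ together with the chain rule $\odv{}{t}\Psi(T-t) = -\left.\pdv{\vf}{\vx}\right|_{\vx(T-t)}\Psi(T-t)$, the expression collapses to
\begin{equation*}
\odv{}{t}\phi(T,T-t) = \Psi(T)\Psi(T-t)^{-1}\left.\pdv{\vf}{\vx}\right|_{\vx(T-t)} = \phi(T,T-t)\left.\pdv{\vf}{\vx}\right|_{\vx(T-t)},
\end{equation*}
which gives the lemma (up to the side on which the Jacobian multiplies $\phi$, which depends on the numerator/denominator layout convention).

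The main subtlety is justifying invertibility of $\Psi(T-t)$ and the partial-derivative interchange in the variational equation. Invertibility is ensured by Liouville's formula, $\det \Psi(s) = \exp\!\bigl(\int_0^s \Tr \left.\pdv{\vf}{\vx}\right|_{\vx(r)} dr\bigr)\neq 0$, while the interchange of partials needs $C^1$ regularity of $\vf$ plus the standard continuous dependence of ODE solutions on initial data. A cleaner alternative that I would probably actually write up bypasses inverses entirely: start from the cocycle identity $\phi(T, T-t-h) = \phi(T, T-t)\,\phi(T-t, T-t-h)$, obtained by composing the flow over $[T-t-h, T-t]$ and $[T-t, T]$, expand $\phi(T-t, T-t-h) = I + h\left.\pdv{\vf}{\vx}\right|_{\vx(T-t)} + O(h^2)$ from the variational equation on the short interval, and take $h \to 0$. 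This produces the ODE for $\phi(T, T-t)$ in one step without ever invoking $\Psi^{-1}$, and is the version I would present.
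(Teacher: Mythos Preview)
The paper does not prove this lemma; it is quoted as Lemma~1 of \citet{galimberti2021-unified_hdgn} and used without proof as the starting point for the sensitivity discussion. There is therefore no argument in the paper to compare your attempt against.

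Your derivation is correct. Both routes you outline---the fundamental-matrix factorisation $\phi(T,T-t)=\Psi(T)\Psi(T-t)^{-1}$ with Liouville's formula guaranteeing invertibility, and the cleaner cocycle expansion $\phi(T,T-t-h)=\phi(T,T-t)\,\phi(T-t,T-t-h)$ followed by a first-order Taylor step---are standard proofs of the backward variational equation, and either would be acceptable. Your remark about the side on which $\pdvinline{\vf}{\vx}$ appears is also accurate: under numerator layout your computation gives right multiplication, $\odv{}{t}\phi = \phi\,\pdvinline{\vf}{\vx}$, while the left-multiplication form in the stated lemma corresponds to denominator layout. The conclusion the paper actually draws from the lemma---that the eigenvalues of $\pdvinline{\vf}{\vx}$ control the growth of $\|\phi\|$---is insensitive to this distinction.
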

Hence, the sensitivity matrix and, most importantly, the vanishing gradient effect are strictly related to the eigenvalues of $\pdvinline{\vf}{\vx}$ which, for oscillatory models, are purely imaginary. 
In reality, \citet{gravina_phdgn} shows a much stronger property for \modelinvariant{}. In particular:
\begin{theorem*}[\Cref{thm:phdgn_sensitivity} of \Cref{sec:theoretical_properties} and Theorem 2.3 of \citet{gravina_phdgn}]\label{thm:phdgn_sensitivity_app}
    If $\vx(t)$ is the solution to the ODE $\dot\vx_i = J\nabla_{\vx_i} H_\theta(t, \mX)$, that is, the Hamiltonian dynamic with \cref{eq:port_hamiltonian} without dampening and external forcing, 
    then  $\left\|\pdv{\vx(t)}{\vx(s)} \right\|\geq 1$ for any $0 \leq s \leq t$.
\end{theorem*}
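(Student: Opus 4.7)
The plan is to exploit the fact that a gradient Hamiltonian flow is symplectic, which immediately constrains the singular values of its Jacobian. Let $\Phi(t,s) := \partial \vx(t)/\partial \vx(s)$ and let $H'' := \nabla^2_{\vx\vx} H_\theta(t,\vx)$, which is symmetric because $H_\theta$ is scalar-valued and $C^2$ (this is guaranteed whenever the activation $\sigma$ in \cref{eq:hamiltonian} is $C^1$, so that $\tilde\sigma\in C^2$).

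First, I would derive the variational equation for $\Phi$. Differentiating the ODE $\dot\vx = \mJ\nabla_\vx H_\theta(t,\vx)$ with respect to the initial state $\vx(s)$ and using the chain rule gives
\begin{equation*}
\dot\Phi(t,s) \;=\; \mJ\,H''(t,\vx(t))\,\Phi(t,s), \qquad \Phi(s,s)=\mI.
\end{equation*}
Next I would show symplecticity of $\Phi$, namely $\Phi^T\mJ\Phi = \mJ$ for all $t\ge s$. Using $\mJ^T=-\mJ$, $\mJ^T\mJ=\mI$, and the symmetry of $H''$, a direct computation gives
\begin{equation*}
\tfrac{d}{dt}(\Phi^T\mJ\Phi) \;=\; \Phi^T H''\mJ^T\mJ\Phi + \Phi^T\mJ\mJ H''\Phi \;=\; \Phi^T H''\Phi - \Phi^T H''\Phi \;=\; 0.
\end{equation*}
Combined with the initial condition $\Phi(s,s)^T\mJ\Phi(s,s)=\mJ$, this yields the symplectic identity for all $t$.

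From symplecticity I would extract the determinant bound by taking determinants on both sides: since $\mJ$ is nonsingular, $\det(\Phi)^2\det(\mJ)=\det(\mJ)$ forces $|\det\Phi(t,s)|=1$. Finally, letting $\sigma_1\ge\cdots\ge\sigma_{nd}>0$ be the singular values of $\Phi(t,s)$, we have $\prod_i\sigma_i=|\det\Phi(t,s)|=1$, so the largest singular value must satisfy $\sigma_1\ge 1$. Since the operator norm equals $\sigma_1$, we conclude $\|\Phi(t,s)\|\ge 1$, which is the claim.

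There is no real obstacle beyond bookkeeping: everything reduces to classical facts about Hamiltonian flows (symplecticity plus Liouville's theorem on volume preservation). The only subtlety worth stating carefully is the step that $\nabla^2_{\vx\vx}H_\theta$ is symmetric; this follows from Schwarz's theorem applied to the parameterization in \cref{eq:hamiltonian}, and I would note this as a regularity assumption on $\sigma$ rather than trying to weaken it. The argument also makes clear that the bound is sharp (attained by pure rotations in phase space), reinforcing the interpretation of \model{} as a rotational, non-dissipative dynamics on the latent state.
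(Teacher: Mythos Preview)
Your proof is correct. The symplecticity argument is the classical route: the variational equation $\dot\Phi=\mJ H''\Phi$ combined with symmetry of $H''$ forces $\Phi^T\mJ\Phi\equiv\mJ$, hence $|\det\Phi|=1$, and the spectral-norm bound $\|\Phi\|=\sigma_1\ge 1$ follows from $\prod_i\sigma_i=1$. One bookkeeping point worth making explicit: since the ODE is written per node as $\dot\vx_i=\mJ\nabla_{\vx_i}H_\theta$, when you stack all nodes the relevant structure matrix is the block-diagonal $\tilde\mJ=\diag(\mJ,\dots,\mJ)$, which is still skew-symmetric and orthogonal, so your computation goes through unchanged.

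The paper itself does not give a self-contained proof of this statement; it restates the result as Theorem~2.3 of \citet{gravina_phdgn} and surrounds it with context---a lemma on how $\Phi(T,T-t)$ evolves (from \citet{galimberti2021-unified_hdgn}) and a separate observation that the linearized oscillatory system has purely imaginary eigenvalues. That eigenvalue discussion hints at a stability-analysis route (bounding $\|\Phi\|$ via the spectrum of $\mJ H''$), but the paper stops short of turning it into an argument. Your approach is more direct and more robust: it does not require any spectral assumptions on $H''$ beyond symmetry, and it makes the volume-preservation (Liouville) interpretation explicit. The eigenvalue picture in the paper buys intuition about \emph{why} no direction contracts (pure rotation in phase space), which complements your determinant argument nicely; you might mention both viewpoints if you want to match the paper's narrative.
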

This result represents a lower bound on the sensitivity matrix norm for any time $t$. This ensures that gradients never vanish, even over extended periods. 
We emphasize that not all oscillatory dynamics exhibit the same desirable properties as \model. For instance, while 
GraphCON \citep{graphcon} models a mass-spring-damper system (see \cref{app:spring_damper}) and alleviates some of the propagation issues presented. A Hamiltonian-like dynamic \citep{gravina_phdgn}, as adopted in our \model, provides stronger theoretical guarantees on long-range interactions and sensitivity matrices. These advantages also translate into improved empirical performance, as demonstrated in \cref{sec:experiments}. GraphCON adds node coupling only in its forcing term, while employing diagonal matrices in the left-hand side of \cref{eq:second_order_system}. See \cref{app:spring_damper} for additional details on the interpretation of these models as mass-spring-damper systems.


\textbf{First-order models and the Heat Equation.} 
Without inductive biases introduced in oscillatory and second-order models, there is no theoretical guarantee of preventing vanishing gradients. On the contrary, there are both practical \citep{li2018learning} and theoretical \citep{gravina_gcn-ssm} insights that show how GNNs are prone to these exponentially decaying gradients.
Similarly, first-order differential-equation-inspired GNNs are based on the heat equation $\partial_tu-\Delta u=0$, which leads again to exponentially decaying gradient and information propagation \citep{gravina_swan}, provable also through the lenses of over-smoothing \citep{pdegcn}.

\subsection{Further details on the Vanishing Gradients with Dissipation and External Forcing}\label{app:proofs/vanishing_graphcon}
\rebuttal{
In this Section, we provide further details on how \model{} limits the vanishing gradient effect. This discussion is inspired by the results in \citet{graphcon}, in particular Propositions 3.5 and 3.6. We formalize this in the following statement:}
\begin{theorem}
    \rebuttal{
    For $t=1, \ldots, T$, let $\vx^{(t)}$ the prediction of IGNS at time $t$. Then, for sufficiently small $\Delta t<<1$, the leading order of $\Delta t$ in the gradient of the multi-step loss term at time $T$, with respect to a parameter at time $t$, $\vw^{(t)}$ (that is $\pdv{\mathcal{L}^{(T)}}{\vw^{(t)}}$), is independent of the number of iterations. Thus, although it can be small, the gradient will not vanish even with many iterations of the model.}
\end{theorem}
\begin{proof}
\rebuttal{
    We follow the steps for the proof of Proposition 3.6 in \citet{graphcon}. First, we use the chain rule to calculate the gradient of the loss with respect to the model parameters. We will consider only one term of the multistep loss, the one for the $T$-th step, which we call $\mathcal{L}$ for simplicity. As we will see at the end of the proof, this will not be an issue. We are interested in the influence of a model parameter $\vw$ at step $s$, which we indicate as $\vw^{(s)}$, on the loss at step $T$.
    \begin{equation}
        \pdv{\mathcal{L}}{\vw^t} = \pdv{\mathcal{L}}{\vx^{(T)}}\pdv{\vx^{(T)}}{\vx^{(t)}}\pdv{\vx^{(t)}}{\vw^{(s)}} = \pdv{\mathcal{L}}{\vx^{(T)}}\prod_{t=s}^{T}\pdv{\vx^{(t+1)}}{\vx^{(t)}}\pdv{\vx^{(t)}}{\vw^{(s)}}
    \end{equation}
    Thus, we need to calculate the one-step Jacobian $\pdv{\vx^{(t+1)}}{\vx^{(t)}}$ for this analysis, which requires calculating the Jacobians of the combinations of the $\vp$ and $\vq$ terms. The calculations for the port-Hamiltonian dynamics of \model{} are very similar to those of \citet{gravina_phdgn}. The main differences are that our weight matrices are time-dependent (which does not influence the calculations of partial derivatives with respect to $\vx^{(t)}$) and that our dissipative term does not depend on $\vq$. Since we are interested in the leading order terms (as in \citet{graphcon}), we report the four quantities of interest up to order $\Delta t$, which we adapt from \citet{gravina_phdgn} (equations (48) and (49), here we exchange between numerator and denominator notation). We omit the explicit time-dependence of the matrix and recall it when needed.
    \begin{equation}\label{eqn:one-step-jac}
        \begin{split}
            \pdv{\vp_i^{(t+1)}}{\vp_j^{(t)}} & = \mI_{ij} - \Delta t\bigg(\mI_{ij}\mD_{\theta}\sigma'_p(i)\mW_p^2 \\&\phantom{thisisph}+ \mA_{ij}\mD_{\theta}\bigg(\sigma'_p(i)\mV_p\mW_p + \sigma'_p(j)\mW_p\mV_p + \sum_{k\in \mathcal{N}_i\cap \mathcal{N}_j}\sigma'_p(k)\mV_p^2\bigg)\bigg)\\
            \pdv{\vp_i^{(t+1)}}{\vq_j^{(t)}} & = -\Delta t\bigg(\mI_{ij}\bigg(\sigma'_q(i)\mW_q^2 + \sum_{k\in \mathcal{N}_i}\sigma'_q(k)\mV_q^2\bigg) \\ &\phantom{thisisph}+\mA_{ij}\bigg(\sigma'_q(i)\mV_q\mW_q + \sigma'_q(j)\mW_q\mV_q + \sum_{k\in\mathcal{N}_i\cap\mathcal{N}_j} \sigma'_q(k)\mV_q^2 \bigg) + \pdv{\mF(\vq,t)}{\vq_j}\bigg)\\
            \pdv{\vq_i^{(t+1)}}{\vp_j^{(t)}} & = + \Delta t\bigg(\mI_{ij}\bigg(\sigma'_p(i)\mW_p^2 + \sum_{k\in\mathcal{N}_i}\sigma'_p(k)\mV_p^2\bigg) \\ &\phantom{thisisph}+ \mA_{ij}\bigg(\sigma'_p(i)\mV_p\mW_p + \sigma'_p(j)\mW_p\mV_p + \sum_{k\in\mathcal{N}_i\cap\mathcal{N}_j}\sigma'_p(k)\mV_p^2\bigg)\bigg)\\
            \pdv{\vq^{(t+1)}_i}{\vq_j^{(t)}} & = \mI_{ij} + O(\Delta t^2)
        \end{split}
    \end{equation}
    where we used the same notation as in \citet{gravina_phdgn} for $\sigma'_p(i) = \sigma'\left(\mW_p\vp_i + \sum_{k\in\mathcal{N}_i}\mV_p\vp_k\right)$ and the other similar terms.\\
    Now, we repeat the process of Propositions 3.5 and 3.6 from \citet{graphcon} and calculate $\pdv{\vx^{(t+2)}}{\vx^{(t+1)}}\pdv{\vx^{(t+1)}}{\vx^{(t)}}$ up to the first-order, to see how the terms in the chain rule multiply. To do so, we need to calculate the four submatrices of this product. Considering that only the first and fourth terms in the above calculations have a leading term of order $O(1)$, we can simplify as
    \begin{equation}
        \begin{split}
            \pdv{\vp^{(t+2)}}{\vp^{(t)}} & = \pdv{\vp^{(t+2)}}{\vp^{(t+1)}}\pdv{\vp^{(t+1)}}{\vp^{(t)}} + \pdv{\vp^{(t+2)}}{\vq^{(t+1)}}\pdv{\vq^{(t+1)}}{\vp^{(t)}} = \pdv{\vp^{(t+2)}}{\vp^{(t+1)}} + \pdv{\vp^{(t+1)}}{\vp^{(t)}} - \mI + O(\Delta t^2)\\
            \pdv{\vp^{(t+2)}}{\vq^{(t)}} & = \pdv{\vp^{(t+2)}}{\vq^{(t+1)}}\pdv{\vq^{(t+1)}}{\vq^{(t)}} + \pdv{\vp^{(t+2)}}{\vp^{(t+1)}}\pdv{\vp^{(t+1)}}{\vp^{(t)}} = \pdv{\vp^{(t+2)}}{\vq^{(t+1)}} + \pdv{\vp^{(t+1)}}{\vq^{(t)}} + O(\Delta t^2)\\
            \pdv{\vq^{(t+2)}}{\vp^{(t)}} & = \pdv{\vq^{(t+2)}}{\vq^{(t+1)}}\pdv{\vq^{(t+1)}}{\vp^{(t)}} + \pdv{\vq^{(t+2)}}{\vp^{(t+1)}}\pdv{\vp^{(t+1)}}{\vp^{(t)}} = \pdv{\vq^{(t+2)}}{\vp^{(t+1)}} + \pdv{\vq^{(t+1)}}{\vp^{(t)}}+ O(\Delta t^2)\\
            \pdv{\vq^{(t+2)}}{\vq^{(t)}} & = \pdv{\vq^{(t+2)}}{\vq^{(t+1)}}\pdv{\vq^{(t+1)}}{\vq^{(t)}} + \pdv{\vq^{(t+2)}}{\vp^{(t+1)}}\pdv{\vp^{(t+1)}}{\vq^{(t)}} = \pdv{\vq^{(t+2)}}{\vq^{(t+1)}} + \pdv{\vq^{(t+1)}}{\vq^{(t)}} - \mI + O(\Delta t^2)
        \end{split}
    \end{equation}
    This is very important, as it shows that up to the first order,
    \begin{equation}
        \pdv{\vx^{(t+2)}}{\vx^{(t+1)}}\pdv{\vx^{(t+1)}}{\vx^{(t)}} = \pdv{\vx^{(t+2)}}{\vx^{(t+1)}} + \pdv{\vx^{(t+1)}}{\vx^{(t)}} - \mI + O(\Delta t^2)
    \end{equation}
    which is very similar to the results in \citet{graphcon}. With similar steps, we arrive at
    \begin{equation}
        \begin{split}    
        \pdv{\vx^{(T)}}{\vx^{(s)}} &= \sum_{t=s}^{T-1}{\bigg (\pdv{\vx^{(t+1)}}{\vx^{(t)}}-\mI \bigg )} + O(\Delta t)^2 \\
        &= \sum_{t=s}^{T-1}{\bigg (\pdv{\vx^{(t+1)}}{\vx^{(t)}} \bigg ) - (T-s-2)\mI} +O(\Delta t^2) \\
        &= \left(\mI + \sum_{t=s}^{T-1}\mE^{t+1,t}\right)
        \end{split}
    \end{equation}
    where, in our case, $\mE^{t+1,t} =\pdv{\vx^{(t+1)}}{\vx^{(t)}}-\mI $ and has leading order $\Delta t$. Finally, we note from \Cref{eqn:one-step-jac} that $(\pdv{\vx^{(t)}}{\vw^{(t)}}$, that is, the gradient with respect to parameters, has leading order $O(1)$ or $O(\Delta t)$, and, naturally $\pdv{\mathcal{L}}{\vx^{(T)}}$ is of order $O(1)$. Hence, as in \citet{graphcon}, the leading order of the gradients $\pdv{\mathcal{L}^{(T)}}{\vw^{(t)}}$ is $O(\Delta t^2)$ independent on the number of iterations between $t$ and $T$.}
\end{proof}

\section{Datasets Descriptions}
\label{app:datasets}

\begin{figure}[t]
  \centering
  \begin{subfigure}[t]{0.28\textwidth}
    \centering\includegraphics[width=\linewidth]{iclr2026/figures/tasks/trimed_plate_deformation_task.png}
    \subcaption{Plate Deformation}
  \end{subfigure}\hfill
  \begin{subfigure}[t]{0.28\textwidth}
    \centering\includegraphics[width=\linewidth]{iclr2026/figures/tasks/trimed_sphere_cloth_task_render.png}
    \subcaption{Sphere Cloth}
  \end{subfigure}\hfill
  \begin{subfigure}[t]{0.28\textwidth}
    \centering\includegraphics[width=0.48\linewidth]{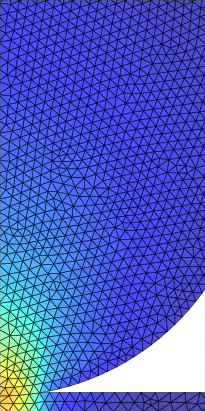}
    \subcaption{Impact Plate}
  \end{subfigure} \\

    \vspace{0.5cm}
    
  \begin{subfigure}[t]{0.28\textwidth}
    \centering\includegraphics[width=\linewidth]{iclr2026/figures/tasks/trime_wave_balls_task_render.png}
    \subcaption{Wave Balls}
  \end{subfigure}\hfill
  \begin{subfigure}[t]{0.28\textwidth}
    \centering\includegraphics[width=\linewidth]{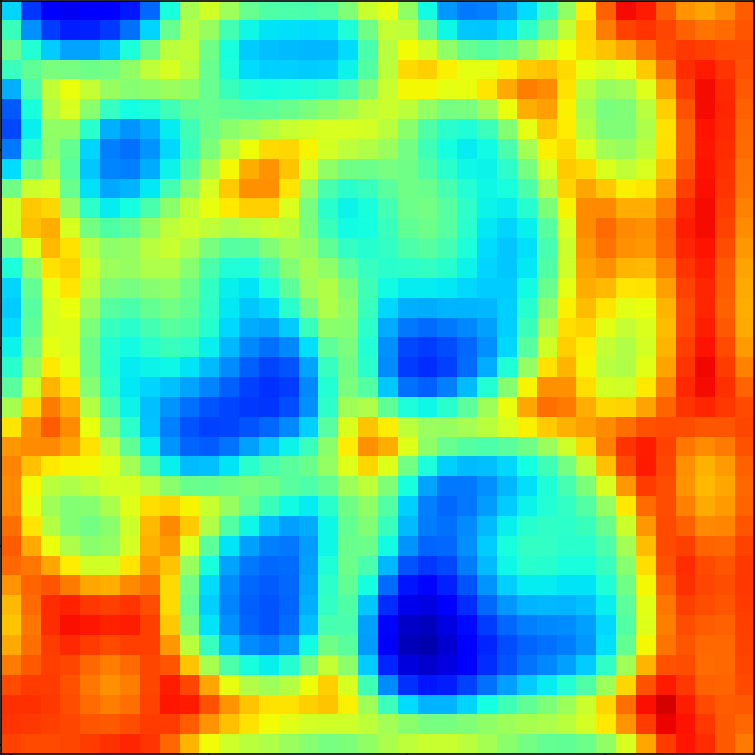}
    \subcaption{Kuramoto-Sivashinsky}
  \end{subfigure}\hfill
  \begin{subfigure}[t]{0.28\textwidth}
    \centering\includegraphics[width=0.95\linewidth]{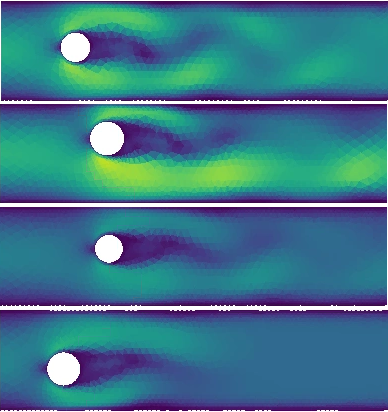}
    \subcaption{Cylinder Flow}
  \end{subfigure} \\

  \caption{Dataset overview. 
\textbf{(a) Plate Deformation}: a flat plate deformed subject to varying numbers and magnitudes of point forces, simulated with linear elasticity. 
\textbf{(b) Sphere Cloth}: a cloth mesh impacted by a falling sphere, producing elastic deformation with contact dynamics. 
\textbf{(c) Impact Plate}: an elastic plate under impact with a rigid ball. 
\textbf{(d) Wave Balls}: surface waves on water generated by three balls moving linearly from different initial positions. 
\textbf{(e) Kuramoto-Sivashinsky}: chaotic spatio-temporal evolution of a scalar field governed by the KS equation. 
\textbf{(f) Cylinder Flow}: incompressible fluid flow around cylindrical obstacles with vortex shedding.}
  \label{fig:dataset}
\end{figure}

\begin{table}[h]
\centering
\caption{Dataset summary: average number of nodes, average number of edges, \rebuttal{total horizon $H$}, \rebuttal{window length $T$ (prediction sub-horizon)}, and train/val/test split (number of trajectories).}
\label{tab:datasets}
\begin{tabular}{lccccc}
	\toprule
Dataset & Avg. Nodes & Avg. Edges & \rebuttal{Horizon $H$} & \rebuttal{Window $T$} & Train/Val/Test \\
\midrule
Plate Deformation & 851 & 3279 & 48 & 48 & 800/100/100 \\
Sphere Cloth & 401 & 3020 & 49 & 49 & 800/100/100 \\
Impact Plate & 2201 & 13031 & 51 & 51 & 2000/200/200 \\
Wave Balls & 1596 & 6048 & 50 & 50 & 250/100/50 \\
Cylinder Flow & 1885 & 10843 & 180 & 30 & 200/100/100 \\
Kuramoto-Sivashinsky & 1600 & 6240 & 300 & 100 & 250/100/50 \\
\bottomrule
\end{tabular}
\end{table}

\begin{table}[t]
\centering
\caption{Task Inputs/Outputs used in our experiments. $\vq_i$: displacement, $\dot{\vq}_i$: velocity, $\vn_i$: static properties (\eg node-type), $\rho_i$: density/scalar field, $\vv_i$: velocity field (for Cylinder Flow). For methods training with multi-step loss ({\footnotesize[MS]}), they also receive initial absolute positions $\vx_i^0$ (notated inline).}
\label{tab:task_io}
\setlength{\tabcolsep}{5pt}
\small
\begin{tabular}{lccccc}
\toprule
Tasks & Type & Node Inputs & Node Outputs \\
\midrule
Plate Deformation & Lagrangian 
  & $\vn_i$ 
  & $\vq_i$ \\
Sphere Cloth & Lagrangian 
  & $\vn_i,\ \dot{\vq}_i, (\vq^{(0)}_i$ {\footnotesize[MS]} $)$
  & $\vq_i,\ \dot{\vq}_i$  \\
Impact Plate & Lagrangian 
  & $\vn_i,\ \dot{\vq}_i, (\vq^{(0)}_i$ {\footnotesize[MS]} $)$
  & $\vq_i,\ \dot{\vq}_i$ \\
Wave Balls & Eulerian 
  & $\vn_i,\ \rho_i$ 
  & $\rho_i,\ \dot{\rho}_i$ \\
Cylinder Flow & Eulerian 
  & $\vn_i,\ \vv_i,\ \dot{\vv}_i$ 
  & $\vv_i,\ \dot{\vv}_i$ \\
Kuramoto-Sivashinsky & Eulerian 
  & $\vn_i,\ \rho_i$ 
  & $\rho_i,\ \dot{\rho}_i$ \\
\bottomrule
\end{tabular}
\end{table}

\cref{tab:datasets} reports the dataset details, including the average number of nodes/edges, as well as horizon length and window size. For the full-rollout \emph{Plate Deformation} task variant used in the ablation in \cref{sec:results}, we predict the whole trajectory with length $T=48$. In addition, for \emph{Cylinder Flow} and \emph{Kuramoto-Sivashinsky}, we follow the suggestion from \cite{lienen2022learning} to only split the sub-trajectories into equal-size, non-overlapping windows. Compared to the \emph{Cylinder Flow} used in the \cite{lienen2022learning} paper, we use the window size of $30$ instead of $10$ to further increase the complexity.

Next, we report the node-level inputs and outputs used for each task in \cref{tab:task_io}.  
For Lagrangian systems (Plate Deformation, Sphere Cloth, Impact Plate), the state consists of displacement $\vq_i$ and velocity $\dot{\vq}_i$, together with static properties $\vn_i$ such as material parameters, forces magnitude (for the Plate Deformation task) and node types. 
For these tasks, methods trained with the multi-step objective additionally receive the initial absolute position $\vq_i^0$, which represents the rest configuration. 
For Eulerian systems (Wave Balls, Kuramoto-Sivashinsky), the state is represented by a scalar density field $\rho_i$, and the outputs include both $\rho_i$ and its temporal derivative $\dot{\rho}_i$. 
In Cylinder Flow, the state instead contains velocity $\vv_i$, with $\dot{\vv}_i$ provided to capture the temporal evolution of the velocity field. 

In the following, we give the full details for each dataset, with examples given in~\cref{fig:dataset}:

\textbf{Solid mechanics.} This category includes Plate Deformation, Sphere Cloth, and Impact Plate (c.f. \citet{yu2024hcmt}).

\begin{itemize}
   \item \textit{Plate Deformation:} The task is to predict the final state (or full horizon) of an initially flat plate subjected to external forces of varying positions and magnitudes. These forces are encoded as special nodes connected locally to the sheet within a small radius, and their number ranges from $1$ to $19$. This setup primarily evaluates the model's ability to capture long-range spatial interactions. Ground-truth simulations are obtained using linear-elastic dynamics with shell elements in the commercial Abaqus software~\citep{abaqus}.
   
   \item \textit{Sphere Cloth:} A ball is dropped from random positions and heights onto a cloth mesh of size $19 \times 19$, and the system is simulated for $T=50$ steps. 
   To ensure fair training and compatibility with dissipative models, we further enhance the cloth mesh connectivity by introducing edges between the ball and every fourth cloth node. 
   The dynamics are governed by elasticity, and the initial ball position strongly influences the outcome. Simulations are performed using multi-body physics in NVIDIA Isaac Sim~\citep{mittal2023orbit}. We also consider a more challenging variant of this task, used to ablate the importance of the time-varying component, with a larger cloth size of $29 \times 29$ and rollout length $T=100$.

   Note that this dataset has been redesigned from a similar scene introduced in~\cite{dahlinger2025mango}, which was originally developed for a meta-learning setup, to instead probe long-range dependencies under direct supervised training.
   
   \item \textit{Impact Plate:} This is a standard benchmark from the HCMT paper~\citep{yu2024hcmt} to test long-range interaction, where the model must predict both stress and displacement propagation from distant nodes. The ground-truth simulation is Ansys.
\end{itemize}

\textbf{Fluid dynamics.} This category includes Wave Balls, Cylinder Flow, and the Kuramoto-Sivashinsky (KS) equation.
\begin{itemize}
   \item \textit{Cylinder Flow:} This task is based on the standard MeshGraphNets task~\citep{pfaff2021learning}, but we use only $H=180$ time steps (vs. $600$) and $200$ trajectories (vs. $1000$), windowed into non-overlapping segments of length $30$. After $180$ steps, the system converges to a periodic steady state. This setup highlights error accumulation under autoregressive training, explaining the lower MGN performance in our results.
   
   \item \textit{Wave Balls:} We let three balls, starting with random positions, move from left to right linearly along a water surface in various grid shapes (cross, L, U, T). The dynamics are governed by a second-order hyperbolic PDE with external forcing generated by those balls:
      \[
      \partial_{tt} u - c^2\,\nabla^2 u = f_\text{external}(x,t),
      \]
   The simulation is implemented in NVIDIA Warp~\citep{warp2022}. An extended version of this task, used in the ablation study, runs for $T=100$ steps. For the first $50$ steps the balls move across the surface, and for the next $50$ steps they stop. The waves, however, keep oscillating since there is no damping. This creates an interesting case where the model must learn not only the forced response but also continue the free oscillation, and figure out which phase the system is in.
   
   \item \textit{Kuramoto-Sivashinsky (KS):} A 4th-order PDE commonly used in neural operator benchmarks~\citep{li2021fourier}, generating chaotic behavior. Instead of random initial states, we use three Gaussian sources on a $40\times40$ grid and apply Neumann boundary conditions (not periodic) to focus on local rather than global behaviors. Simulated with the \texttt{py-pde} library~\citep{Zwicker2020}. The KS equation in 2D is given by
    \begin{equation*}
          \partial_t u = -\frac{1}{2}|\nabla u|^2 - \nabla^2 u - \nabla^4 u.
    \end{equation*}
\end{itemize}

\section{Hyperparameters and Training Time}
\label{app:hyperparams}

\begin{table}[h]
\centering
\caption{Hyperparameters used in all experiments. ``AR" is autoregressive methods (e.g., MGN). ``MS" is multi-step methods (others).}
\label{tab:hparams}
\small
\begin{tabular}{lccc}
\toprule
\textbf{Hyperparameter} & \textbf{Common} & \textbf{AR} & \textbf{MS} \\
\midrule
Node feature dimension                  & 128 & --              & -- \\
Latent representation dimension         & 128 & --              & -- \\
Decoder hidden dimension                & 128 & --              & -- \\
Optimizer                               & Adam & --             & -- \\
Learning rate                           & $5\times 10^{-4}$  & -- & -- \\
Message-passing blocks                  & --  & 15\textsuperscript{\,\dag} & Window size \\
Training noise (std)                    & --  & $1\times 10^{-2}$ & none\textsuperscript{\,\S} \\
\midrule
\multicolumn{4}{l}{\textbf{Task-specific overrides (MS unless noted):}}\\
Plate Deformation                       & \multicolumn{3}{l}{message blocks = 48.} \\
Sphere Cloth                            & \multicolumn{3}{l}{Warmup $=20$.} \\
Impact Plate                            & \multicolumn{3}{l}{Warmup $=15$.} \\
Wave Balls                            & \multicolumn{3}{l}{Warmup $=0$.} \\
Kuramoto–Sivashinsky                    & \multicolumn{3}{l}{Warmup $=10$; MS training noise $=1\times 10^{-2}$.} \\
Cylinder Flow                           & \multicolumn{3}{l}{Warmup $=10$; MS training noise $=1\times 10^{-2}$.} \\
\bottomrule
\end{tabular}

\vspace{0.25em}
\raggedright\footnotesize
\textsuperscript{\,\dag}\;MGN uses 15 message-passing blocks by default; Plate Deformation uses 50. \\
\textsuperscript{\,\S}\;For MS methods we disable training noise except on Cylinder Flow and Kuramoto–Sivashinsky, where we use $1\times 10^{-2}$ due to varying initial conditions.
\end{table}

Table~\ref{tab:hparams} summarizes the shared and task-specific hyperparameters. Most settings are common across models, with feature sizes fixed at $128$ and Adam as the optimizer. For \emph{MGN}, we adopt mean aggregation with Leaky ReLU, use 15 message-passing blocks by default (50 for Plate Deformation), and apply Gaussian training noise with $\sigma = 1\times 10^{-2}$. In contrast, \emph{MS} methods use ReLU activations in the encoder and decoder, while the graph-convolution blocks rely on $\tanh$ in A-DGN, \model{} and \modelinvariant{}. Training noise is generally disabled for \emph{MS} methods, but we enable it with $\sigma = 1\times 10^{-2}$ in Cylinder Flow and Kuramoto--Sivashinsky, where the initial conditions vary. In all cases, we normalize inputs and unnormalize outputs for every training batch, which stabilizes optimization and allows us to use the same learning rate and training noise across tasks.


\begin{table}[h]
\centering
\setlength{\tabcolsep}{4pt}
\caption{(a) Training time budget for each task, where all methods were trained with the same wall-clock time. 
(b) Parameter counts (measured on the Kuramoto-Sivashinsky task).
}
\begin{minipage}[t]{0.64\textwidth}
\centering
\small
\label{tab:training_time}
\begin{tabular}{lcc}
\toprule
Task & Default (hours) & Long (hours) \\
\midrule
Plate Deformation       & 12 & -- \\
Sphere Cloth            & 12 & 24 \\
Impact Plate            & 48 & -- \\
Wave Balls              & 16 & 24 / 48 \\
Cylinder Flow           & 24 & -- \\
Kuramoto--Sivashinsky   & 24 & -- \\
\bottomrule
\end{tabular}
\vspace{15pt}
\caption*{(a) Training time}
\end{minipage}
\hfill
\begin{minipage}[t]{0.35\textwidth}
\centering
\small
\label{tab:ks_params}
\begin{tabular}{lc}
\toprule
Method & \#Parameters \\
\midrule
\textbf{\model{}}          & 216,000 \\
\textbf{\modelinvariant{}} & 80,100 \\
GCN+LN                     & 79,700 \\
MP-ODE                     & 101,000 \\
GraphCON                   & 67,800 \\
A-DGN                      & 100,000 \\
MGN (15 MP)                & 1,800,000 \\
\bottomrule
\end{tabular}
\vspace{4pt}
\caption*{(b) Parameter counts}
\end{minipage}
\end{table}

Table~\ref{tab:training_time} (a) shows the training time allocated to each task. To ensure fairness, all methods were trained for the same wall-clock time on a single NVIDIA A100 GPU. For the default settings, the budgets were chosen based on validation curves, where extending training further did not yield noticeable improvements. For the extended version of Sphere Cloth and Wave Balls tasks, we instead fixed a hard budget to ensure sufficient training for fair comparison in the ablation studies, independent of validation plateaus. In Table~\ref{tab:ks_params} (b), we report the number of parameters for each method. It is clear that the standard MGN with a non-shared processor requires by far the largest number of parameters.

\begin{figure}[t]
\centering
\small
\begin{minipage}[t]{0.48\textwidth}
  \centering
  \setlength{\tabcolsep}{4pt}
  \begin{tabular}{lc}
    \toprule
    Method & Runtime (s) \\
    \midrule
    MGN                & $0.6690 \pm 0.0845$ \\
    GCN\_LN            & $0.0381 \pm 0.0007$ \\
    MP-ODE \texttt{RK4}    & $0.1414 \pm 0.0068$ \\
    MP-ODE \texttt{dopri5} & $0.3062 \pm 0.0064$ \\
    \midrule
    ADGN               & $0.0721 \pm 0.0008$ \\
    GraphCON           & $0.0433 \pm 0.0005$ \\
    \midrule
    \modelinvariant{}  & $0.1253 \pm 0.0011$ \\
    \model{}           & $0.1703 \pm 0.0205$ \\
    \bottomrule
  \end{tabular}
\end{minipage}
\hfill
\begin{minipage}[t]{0.45\textwidth}
   \vspace{-2.5cm}
  \centering
  \makebox[\textwidth][c]{%
    \begin{tikzpicture}
    \tikzstyle{every node}=[font=\scriptsize]
    \input{tikz_colors}
    \begin{axis}[%
        hide axis,
        xmin=10,
        xmax=50,
        ymin=0,
        ymax=0.1,
        legend style={
            draw=white!15!black,
            legend cell align=left,
            legend columns=2,
            legend style={
                draw=none,
                column sep=1ex,
                line width=1pt,
            }
        },
        ]
        \addlegendimage{line legend, red, ultra thick} 
        \addlegendentry{\model{}}
        \addlegendimage{line legend, darkblue, ultra thick} 
        \addlegendentry{MGN}
    \end{axis}
\end{tikzpicture}%
  }
  \includegraphics[width=\linewidth]{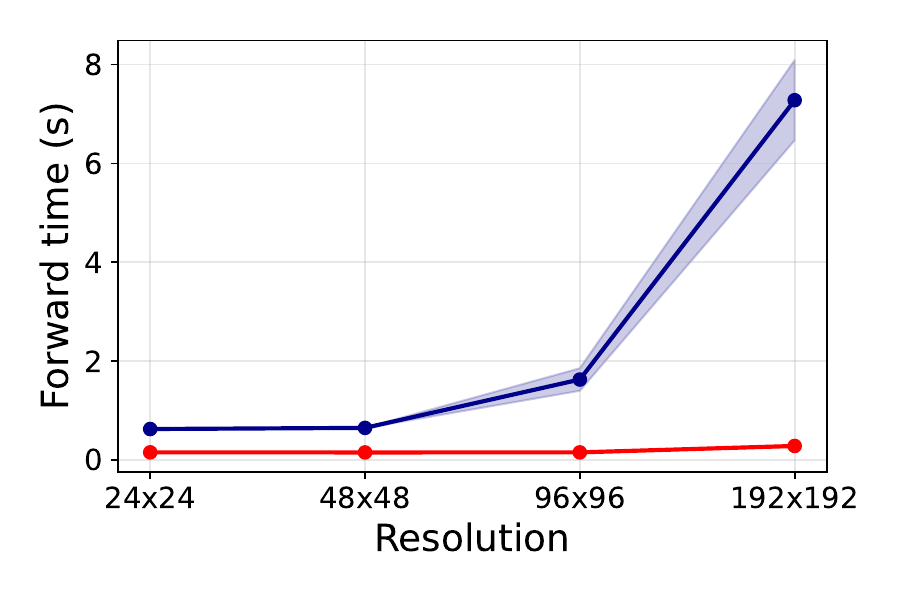}
\end{minipage}
\caption{\rebuttal{Runtime per forward pass on \emph{WaveBalls} ($T{=}200$, batch size $1$). Mean~$\pm$~std over $4$ samples (seconds) on different methods \textbf{(left)} and with varying resolutions \textbf{(right)}.}}
\label{fig:wavefluid_runtime}
\end{figure}

\section{Complexity and Runtimes}
\label{app:complexity_runtimes}
\rebuttal{
We discuss the theoretical complexity of our method, followed by a comparison of runtimes with other methods. }




\rebuttal{
\model{} preserves the complexity of its underlying backbone GNN. For instance,  when instantiated with a GCN backbone, each iteration is linear in the number of nodes $|{V}|$ and edges $|{E}|$, achieve a time complexity of $\mathcal{O}(|{V}|+ |{E}|)$. Considering $L$ warmup steps and $T$ \model{} iterations, the total time computational cost is therefore $\mathcal{O}((L+T)(|{V}| + |{E}|))$.
}
\rebuttal{
For comparison, MGN performs $S$ message-passing updates \emph{per} simulation step, yielding $\mathcal{O}\!\left(S\,T\,(|V|{+}|E|)\right)$; in our setup $S{=}15$, which explains why it is the slowest. GraphCON is simpler: it applies a single GCN only in the external-forcing branch, so its per-step cost remains $\mathcal{O}(|V|{+}|E|)$ with a small constant. By contrast, \model{}/\modelinvariant{} uses three GCNs for $(\vq,\vp)$ and forcing, raising the constant factor but increasing expressiveness, while still staying linear-time and \emph{much faster than MGN}.
}

\rebuttal{
\textbf{Runtimes.}
\cref{fig:wavefluid_runtime} reports the inference runtimes for one full-trajectory forward pass (in seconds) on \emph{Wave Balls} with $T{=}200$ (batch size 1), measured on a single NVIDIA RTX~4080 (16GB). On the left, \model{} is competitive with other GNN backbones and substantially faster than adaptive ODE solvers and MGN; the latter is slowest because it performs inner spatial updates at every time step.
On the right, we compare \model{} and MGN as we increase the Wave Balls resolution from $24{\times}24$ up to $192{\times}192$. This corresponds to graphs with roughly $450$ nodes at $24{\times}24$, about $6$k nodes at $96{\times}96$, and around $23$k nodes and $93$k edges at $192{\times}192$. While the runtime of \model{} remains almost constant up to $96{\times}96$ and grows only moderately at $192{\times}192$, the runtime of MGN increases sharply and becomes an order of magnitude slower at the highest resolution. This behavior is consistent with the architectural design: MGN’s additional spatial update, which is applied sequentially at every forward step, makes its cost scale with both the time horizon and the graph size, and prevents efficient GPU utilization compared to \model{}, which avoids these extra per-step spatial updates.
}

\section{Additional Results}
\label{app:additional_results}

\newcommand{\rowcbar}[5]{
  \begin{tikzpicture}
    \begin{axis}[
      hide axis, scale only axis,
      width=#1, height=0pt,
      colormap/jet,
      colorbar horizontal,
      point meta min=#3, point meta max=#4,
      colorbar style={
        width=#1, height=#2,
        scaled ticks=false, 
        xticklabel style={
          font=\scriptsize,
          /pgf/number format/.cd,
            fixed, fixed zerofill, precision=2 
        },
        xlabel={#5}, xlabel style={font=\scriptsize, yshift=1pt},
      },
    ]
      \addplot [draw=none] coordinates {(0,0)};
    \end{axis}
  \end{tikzpicture}%
}

\begin{table}[h]
\centering
\caption{Comparison of train and test losses, shown as mean~$\pm$~std, on Plate Deformation. Best results are highlighted with \textbf{\textcolor{rankone}{orange}}.}
\label{tab:pd_ablation}
\begin{tabular}{lcc}
\toprule
Method & Test loss & Train loss \\
\midrule
MGN ($m{=}50$)       & \best{$1.27 \pm 0.06$} & \best{$0.09 \pm 0.01$} \\
MGN-rewiring ($m{=}5$) & $3.72 \pm 0.15$ & $0.20 \pm 0.01$ \\
\midrule
\textbf{\model{}}      & $1.34 \pm 0.05$ & $0.71 \pm 0.05$ \\
\bottomrule
\end{tabular}
\end{table}

\paragraph{Plate Deformation.}
\cref{fig:pd_qualitative_1} shows qualitative results on three test samples, comparing \model{}, MGN, and a rewired variant of MGN. In MGN-rewiring, we improve connectivity by directly linking the force nodes to the entire mesh (through every second node), a common technique to mitigate over-squashing. We also reduce the number of message passing steps from $50$ (standard MGN) to $5$ (MGN-rewiring). The results highlight clear differences: only \model{} produces smooth and physically consistent surfaces, while MGN predictions are less smooth despite lower error, and MGN-rewiring generates the roughest outputs. Quantitative results in \cref{tab:pd_ablation} further confirm this trend, showing that both MGN variants severely overfit by relying too heavily on geometric features, while \model{} achieves a better trade-off between train and test performance.

\begin{figure*}[t]
  \centering
  \begin{subfigure}[t]{0.24\textwidth}
    \centering\includegraphics[width=\linewidth]{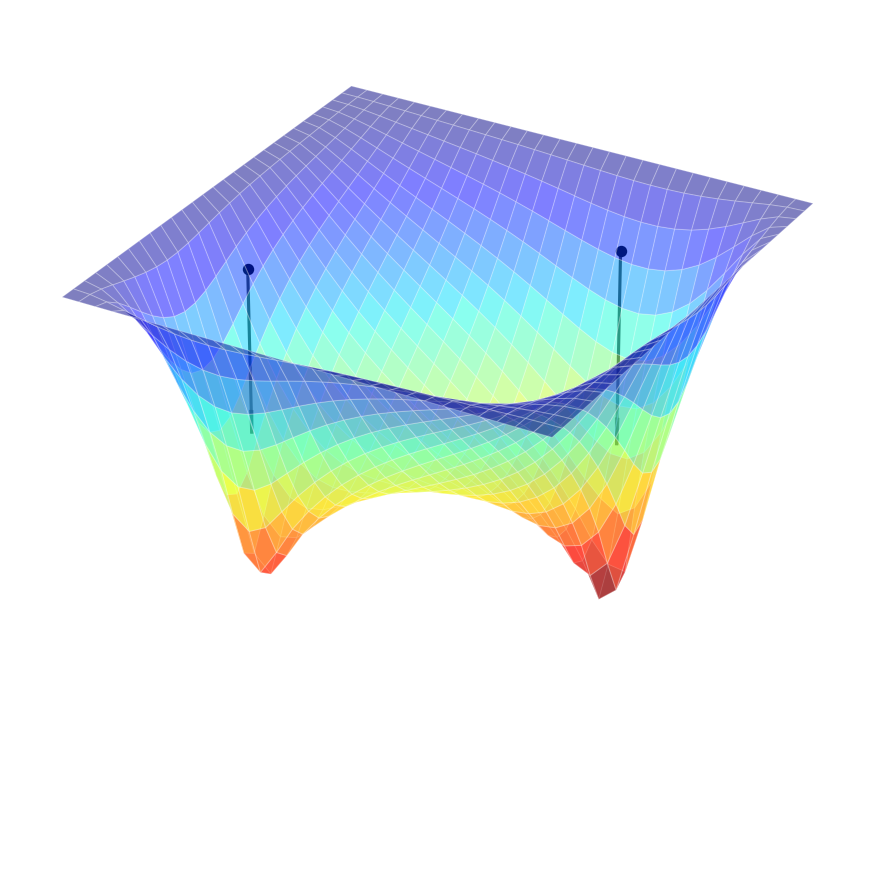}
  \end{subfigure}\hfill
  \begin{subfigure}[t]{0.24\textwidth}
    \centering\includegraphics[width=\linewidth]{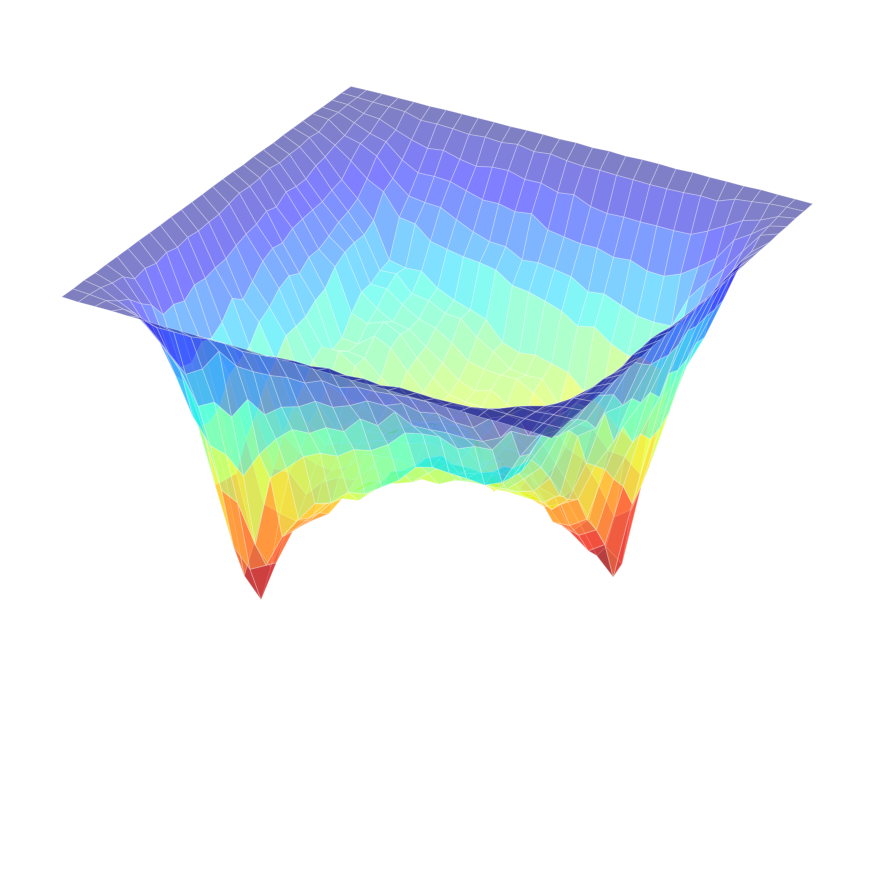}
  \end{subfigure}\hfill
  \begin{subfigure}[t]{0.24\textwidth}
    \centering\includegraphics[width=\linewidth]{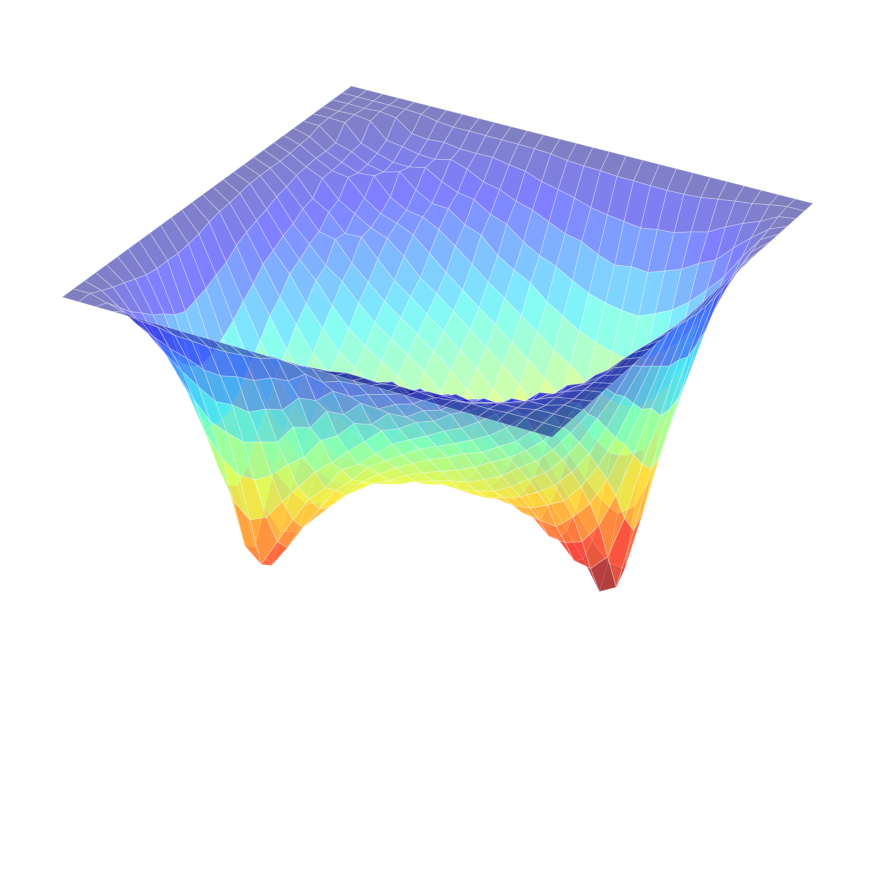}
  \end{subfigure}\hfill
  \begin{subfigure}[t]{0.24\textwidth}
    \centering\includegraphics[width=\linewidth]{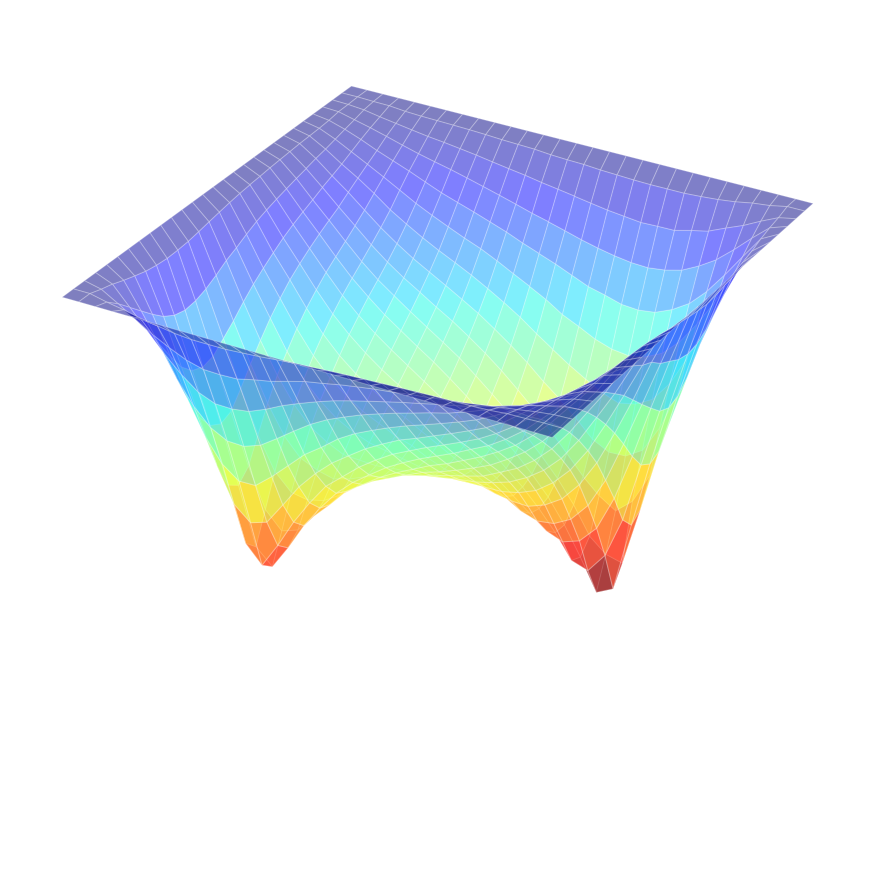}
  \end{subfigure}
  
  \begin{subfigure}[t]{0.24\textwidth}
    \centering\includegraphics[width=\linewidth]{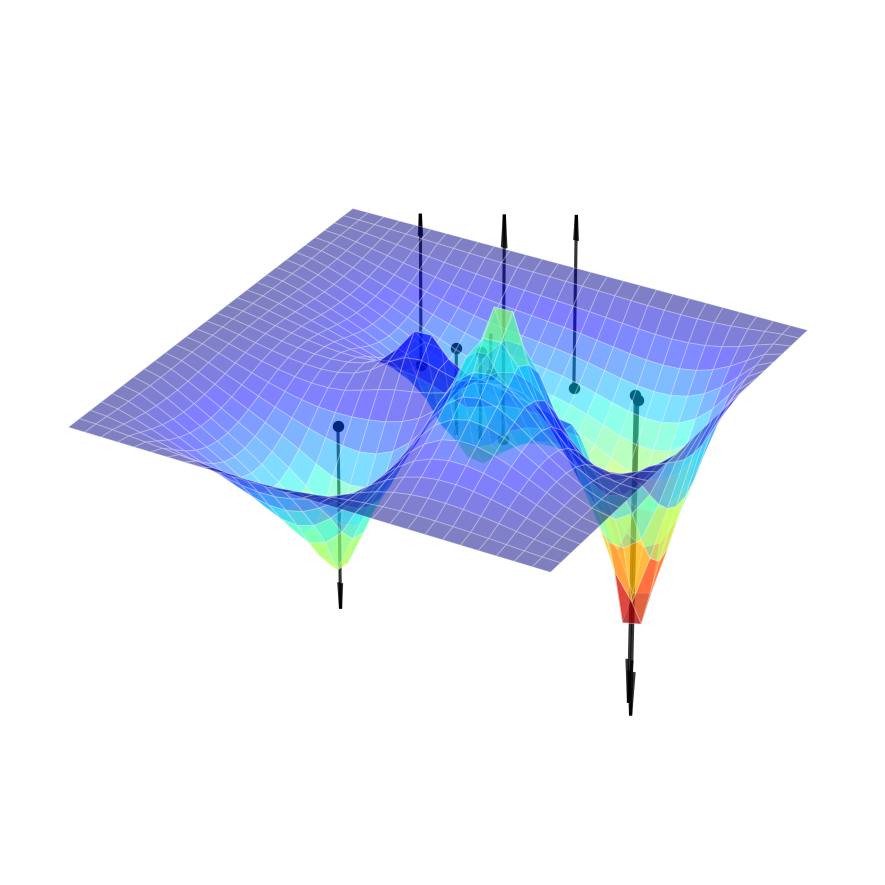}
  \end{subfigure}\hfill
  \begin{subfigure}[t]{0.24\textwidth}
    \centering\includegraphics[width=\linewidth]{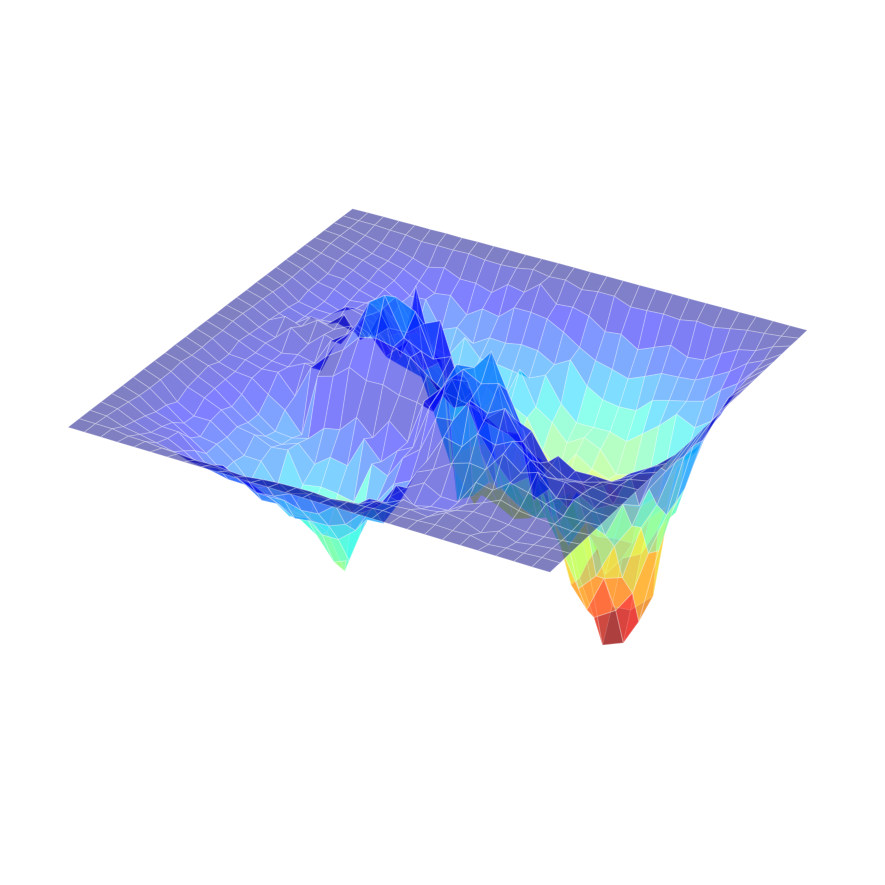}
  \end{subfigure}\hfill
  \begin{subfigure}[t]{0.24\textwidth}
    \centering\includegraphics[width=\linewidth]{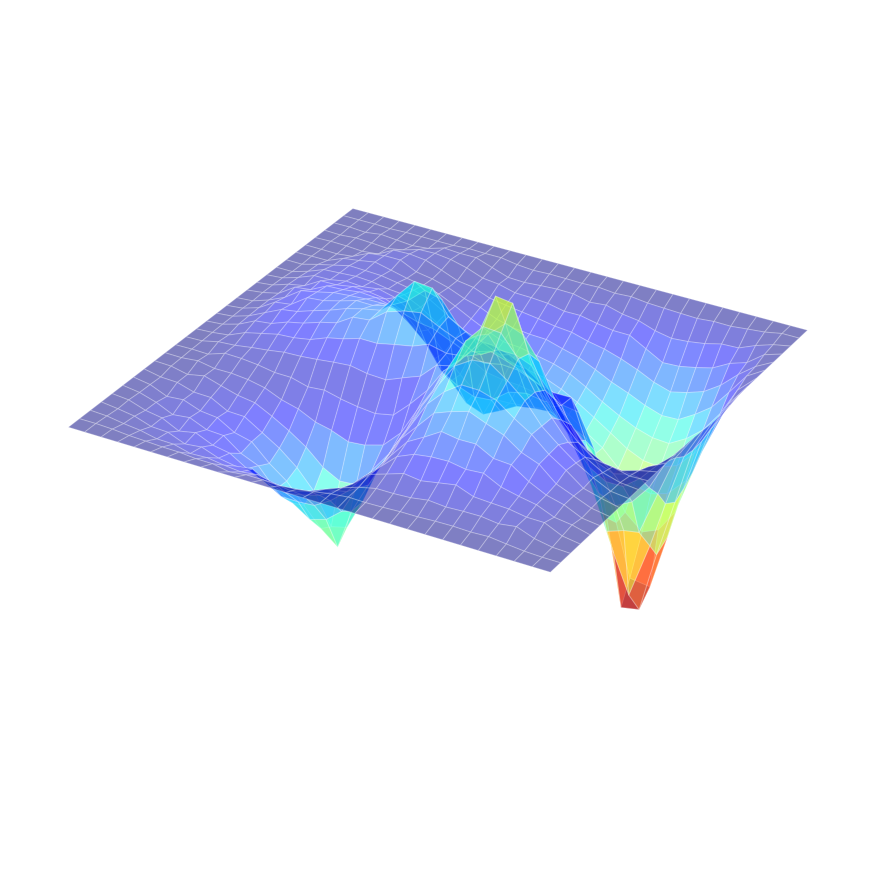}
  \end{subfigure}\hfill
  \begin{subfigure}[t]{0.24\textwidth}
    \centering\includegraphics[width=\linewidth]{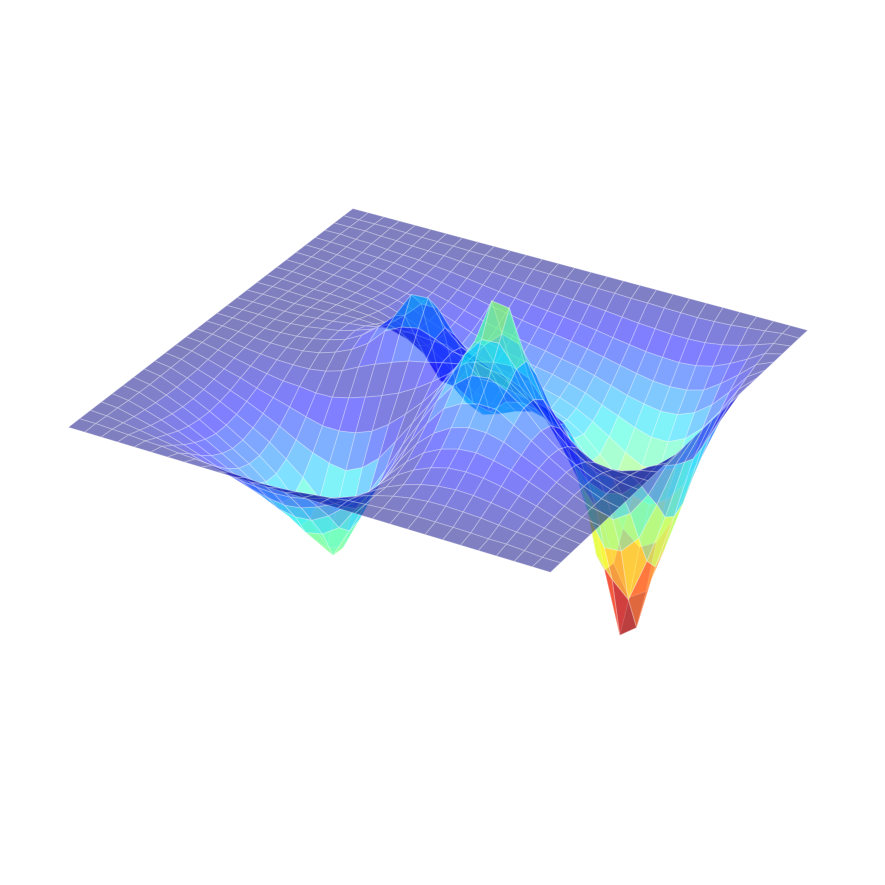}
  \end{subfigure}
  
  \begin{subfigure}[t]{0.24\textwidth}
    \centering\includegraphics[width=\linewidth]{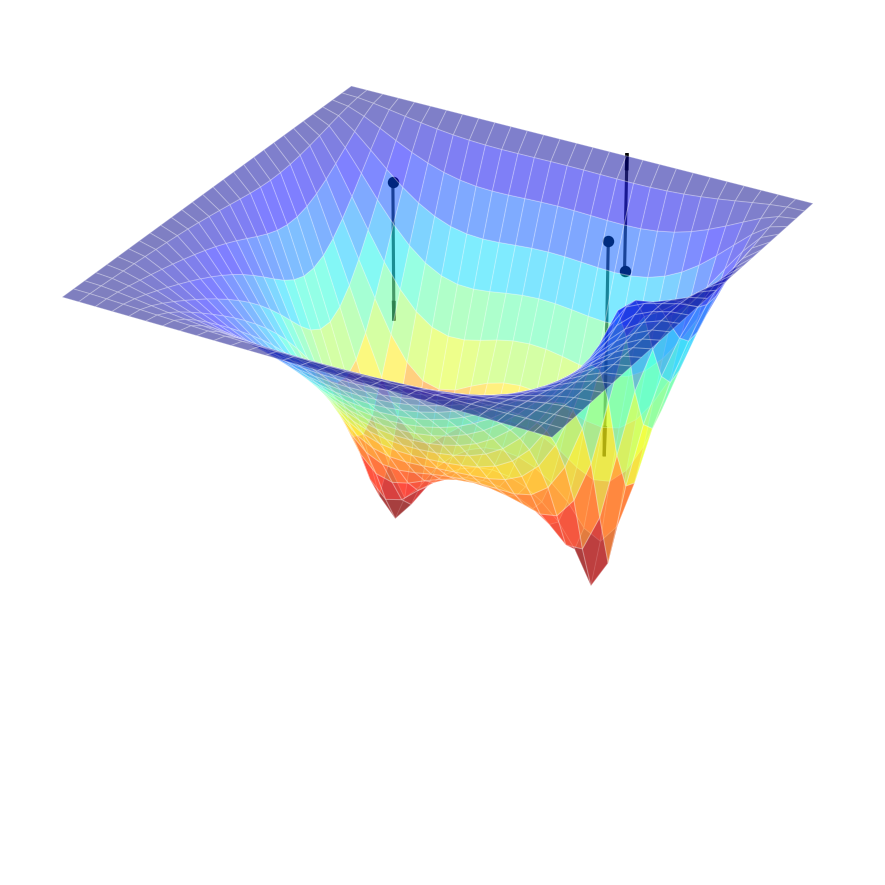}
    \subcaption*{Ground Truth}
  \end{subfigure}\hfill
  \begin{subfigure}[t]{0.24\textwidth}
    \centering\includegraphics[width=\linewidth]{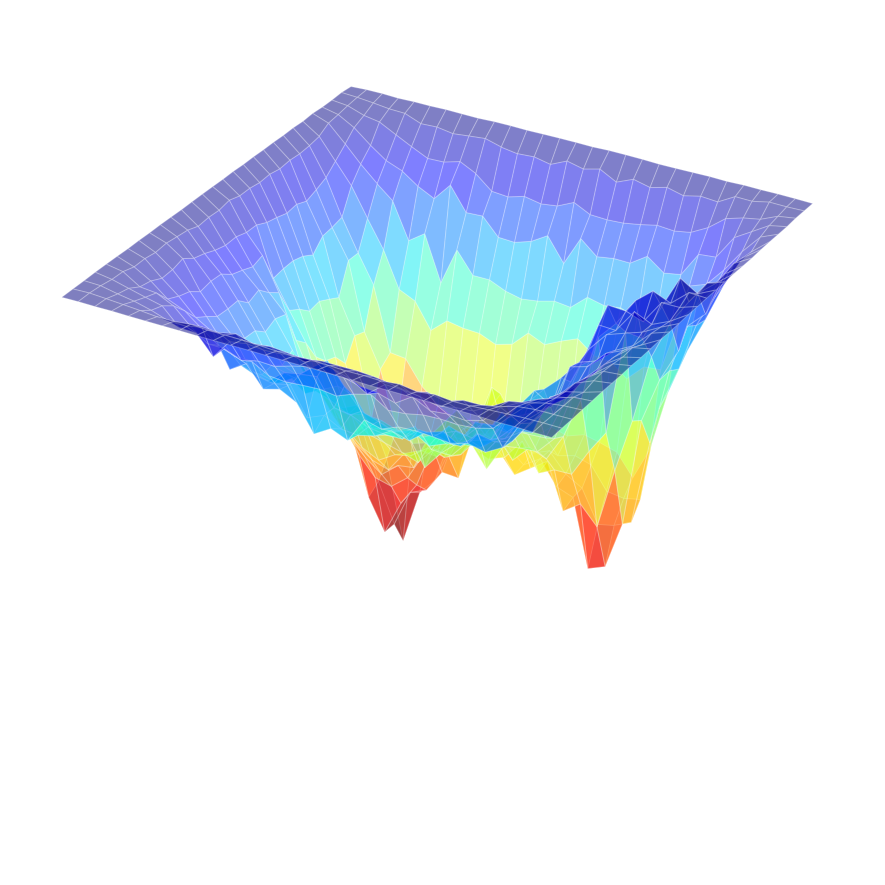}
    \subcaption*{MGN-Rewiring}
  \end{subfigure}\hfill
  \begin{subfigure}[t]{0.24\textwidth}
    \centering\includegraphics[width=\linewidth]{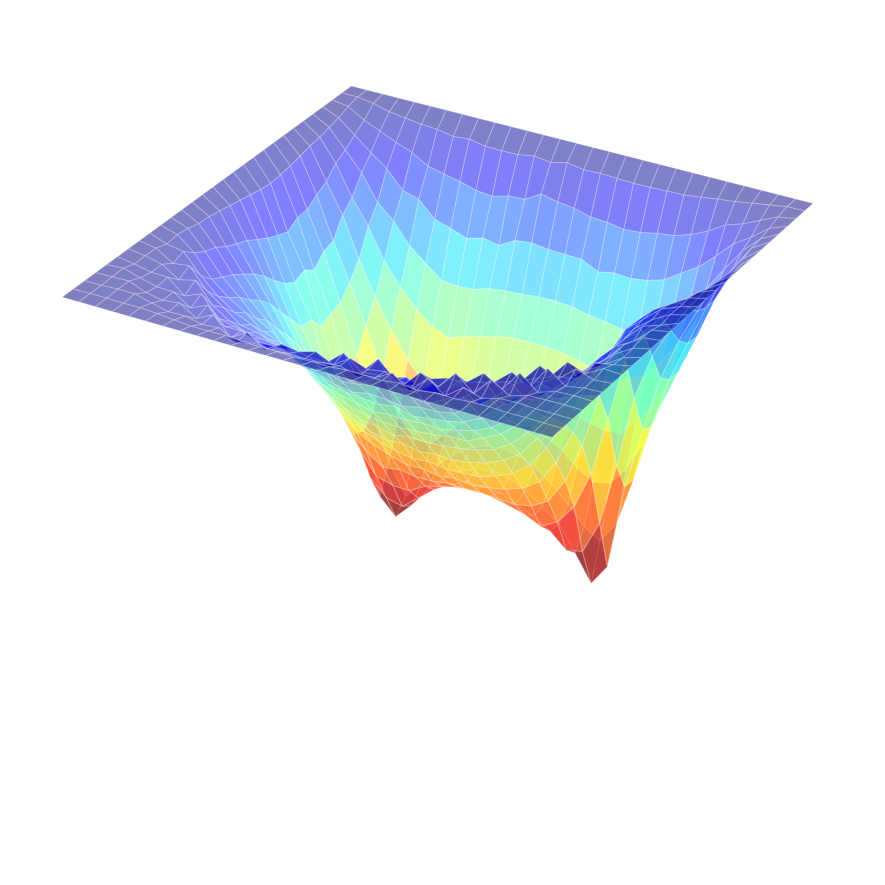}
    \subcaption*{MGN}
  \end{subfigure}\hfill
  \begin{subfigure}[t]{0.24\textwidth}
    \centering\includegraphics[width=\linewidth]{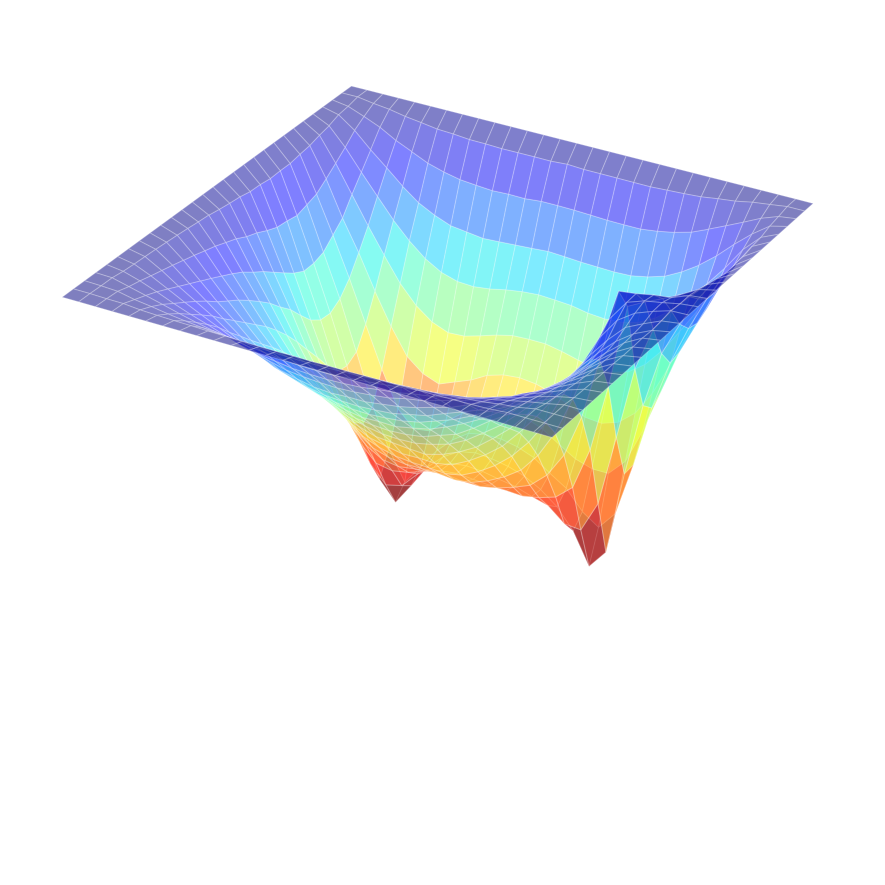}
    \subcaption*{\model{}}
  \end{subfigure}

  \caption{Plate Deformation qualitative comparison on three test samples with predictions from \emph{MGN-rewiring}, MGN and \model{}. The arrows indicate the forces (with magnitude proportional to its length) applied to the plate at specific positions.}
  \label{fig:pd_qualitative_1}
\end{figure*}

\begin{table}[h]
\centering
\caption{Impact Plate quantitative results. Reported are root mean squared errors (RMSE) for position and stress, shown as mean~$\pm$~std. Best results are highlighted with \textbf{\textcolor{rankone}{orange}}.}
\label{tab:ip_app_full}
\begin{tabular}{lcc}
\toprule
Method & Position RMSE & Stress RMSE ($\times 10^{3}$) \\
\midrule
HCMT & $20.71\pm0.57$ & $14.74\pm5.02$ \\
MGN & $40.73\pm2.94$ & $35.87\pm11.89$ \\
\midrule
\textbf{\modelinvariant{}} & $7.99\pm1.03$ & $3.25\pm0.69$ \\
\textbf{\model{}} & \best{$7.65\pm0.94$} & \best{$2.85\pm0.55$}  \\
\bottomrule
\end{tabular}
\end{table}

\paragraph{Impact Plate.} Table~\ref{tab:ip_app_full} reports quantitative results on the Impact Plate task, comparing \model{} and \model{} against the HCMT model from \citet{yu2024hcmt} and MGN. Here, we take the results from \citet{yu2024hcmt} for both HCMT and MGN, and report with the same metrics RMSE for position and stress.
Both \model{} and \modelinvariant{} significantly outperform the baselines in both position and stress prediction, with \model{} achieving the best performance overall. This demonstrates the effectiveness of our approach in capturing long-range interactions and complex dynamics in this challenging task.

\begin{table}[h]
\centering
\caption{Sphere Cloth quantitative results. Reported are mean squared errors (MSE) for overall loss, position, and velocity, shown as mean~$\pm$~std ($\times 10^{-3}$). Best results are highlighted with \textbf{\textcolor{rankone}{orange}}.}
\label{tab:sphere_cloth_results}
\begin{tabular}{lccc}
\toprule
Model & Loss & Position & Velocity \\
\midrule
GCN+LN         & $26.45 \pm 0.23$ & $4.14 \pm 0.04$ & $22.31 \pm 0.20$ \\
MGN            & $32.07 \pm 2.45$ & $6.53 \pm 0.76$ & $25.54 \pm 1.73$ \\
MP-ODE         & $25.75 \pm 1.32$ & $4.26 \pm 0.26$ & $21.49 \pm 1.06$ \\
\midrule
ADGN           & $30.99 \pm 0.80$ & $4.57 \pm 0.25$ & $26.42 \pm 0.66$ \\
GraphCON       & $29.00 \pm 0.44$ & $4.19 \pm 0.07$ & $24.82 \pm 0.40$ \\
\midrule
\textbf{\modelinvariant{}}  & $28.20\pm0.35$ & $3.74 \pm 0.08$ & $24.46 \pm 0.30$ \\
\textbf{\model{}}           & \best{$24.16\pm0.67$} & \best{$3.55 \pm 0.15$} & \best{$20.60 \pm 0.55$} \\
\bottomrule
\end{tabular}
\end{table}

\paragraph{Sphere Cloth.} In this task, the model must jointly predict the motion of a rigid sphere (represented only by its center position) and a deformable cloth, creating strong coupling dependencies. Figure~\ref{fig:sc_qualitative} illustrates two representative test samples. In the top sample, MGN produces wrong predictions due to error accumulation, while in the bottom sample it manages to produce a more reasonable result. This highlights the instability of first order methods, in contrast to \model{}, which consistently yields results closest to the ground truth. The quantitative results in Table~\ref{tab:sphere_cloth_results} confirm this trend: although \modelinvariant{} achieves slightly lower position error, its surfaces remain rougher, while MGN is prone to error accumulation and overfitting. Other baselines (GraphCON, ADGN, and GCN) fail to generate smooth surfaces.

\begin{figure}[t]
  \makebox[\textwidth][c]{
    \begin{tikzpicture}
    \tikzstyle{every node}=[font=\scriptsize]
    \input{tikz_colors}
    \begin{axis}[%
        hide axis,
        xmin=10,
        xmax=50,
        ymin=0,
        ymax=0.1,
        legend style={
            draw=white!15!black,
            legend cell align=left,
            legend columns=3,
            legend style={
                draw=none,
                column sep=1ex,
                line width=1pt,
            }
        },
        ]
        \addlegendimage{line legend, red, ultra thick} 
        \addlegendentry{\model{}/\modelinvariant{}}
        \addlegendimage{line legend, darkviolet, ultra thick} 
        \addlegendentry{no geometric encoding}
        \addlegendimage{line legend, darkgoldenrod, ultra thick} 
        \addlegendentry{no warmup}
        \addlegendimage{line legend, darkblue, ultra thick} 
        \addlegendentry{no forcing}
        \addlegendimage{line legend, orange, ultra thick} 
        \addlegendentry{no damping}
        \addlegendimage{line legend, green, ultra thick} 
        \addlegendentry{no forcing, no damping}
    \end{axis}
\end{tikzpicture}
    }
 \centering
  \begin{minipage}{0.32\linewidth}
    \includegraphics[width=\linewidth]{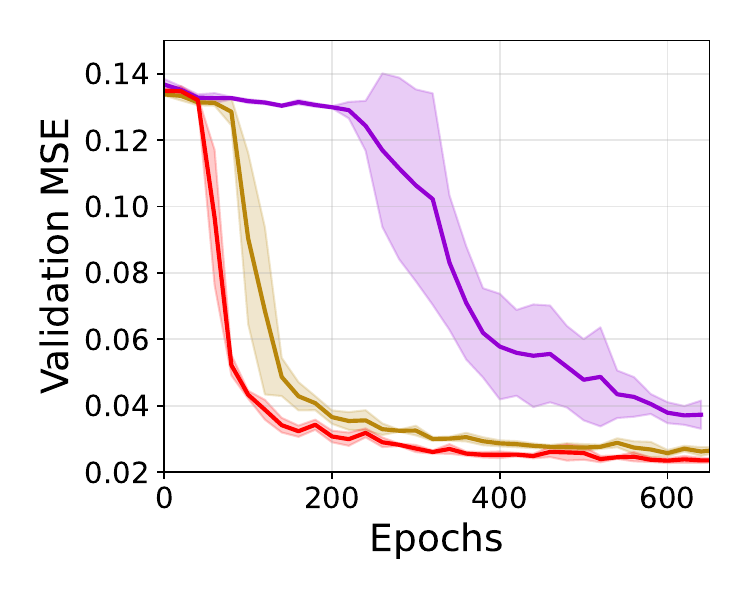}
  \end{minipage}\hfill
  \begin{minipage}{0.32\linewidth}
    \includegraphics[width=\linewidth]{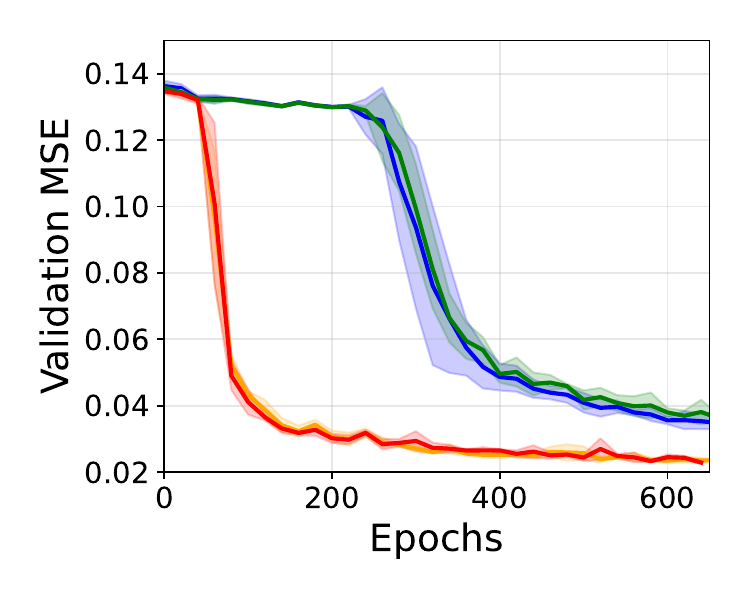}
  \end{minipage}\hfill
  \begin{minipage}{0.32\linewidth}
    \includegraphics[width=\linewidth]{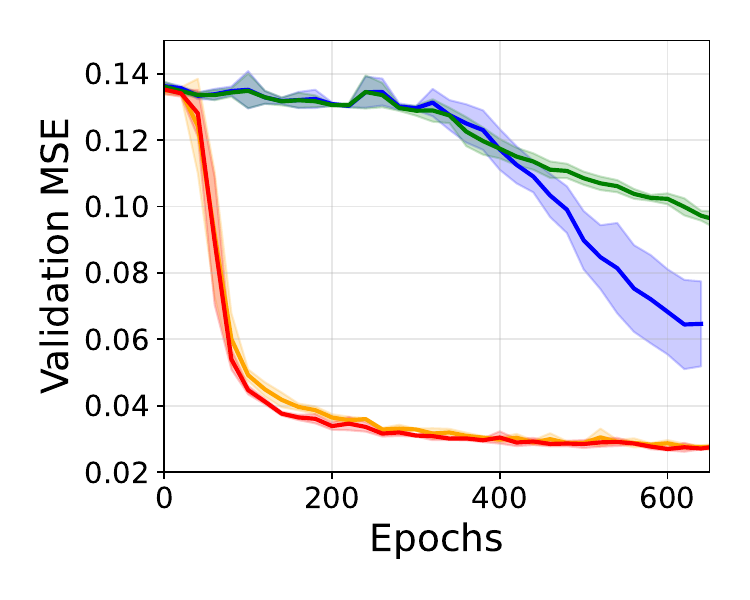}
  \end{minipage}
  \caption{\rebuttal{Validation MSE on \emph{SphereCloth-direct}.  (a) \model{} core components: geometric encoding, warmup (multi-step loss is omitted as its curve explodes beyond the limit). (b) Damping and forcing in \model{}. (c) Damping and forcing in \modelinvariant{}.}}
  \label{fig:spherecloth_direct_val_curves}
\end{figure}

\begin{figure}[t]
  \makebox[\textwidth][c]{
    \begin{tikzpicture}
    \tikzstyle{every node}=[font=\scriptsize]
    \input{tikz_colors}
    \begin{axis}[%
        hide axis,
        xmin=10,
        xmax=50,
        ymin=0,
        ymax=0.1,
        legend style={
            draw=white!15!black,
            legend cell align=left,
            legend columns=2,
            legend style={
                draw=none,
                column sep=1ex,
                line width=1pt,
            }
        },
        ]
        \addlegendimage{line legend, green, ultra thick} 
        \addlegendentry{\modelinvariant{}}
        \addlegendimage{line legend, darkviolet, ultra thick} 
        \addlegendentry{\modelinvariant$_{,\text{no-}d,\text{no-}f}$  $\Delta t = 10^{-3}$}
        \addlegendimage{line legend, darkblue, ultra thick} 
        \addlegendentry{\modelinvariant$_{,\text{no-}d,\text{no-}f}$  $\Delta t = 10^{-2}$}
        \addlegendimage{line legend, darkgoldenrod, ultra thick} 
        \addlegendentry{\modelinvariant$_{,\text{no-}d,\text{no-}f}$  $\Delta t = 10^{-1}$}
    \end{axis}
\end{tikzpicture}
    }
 \centering
  \begin{minipage}{0.46\linewidth}
    \includegraphics[width=\linewidth]{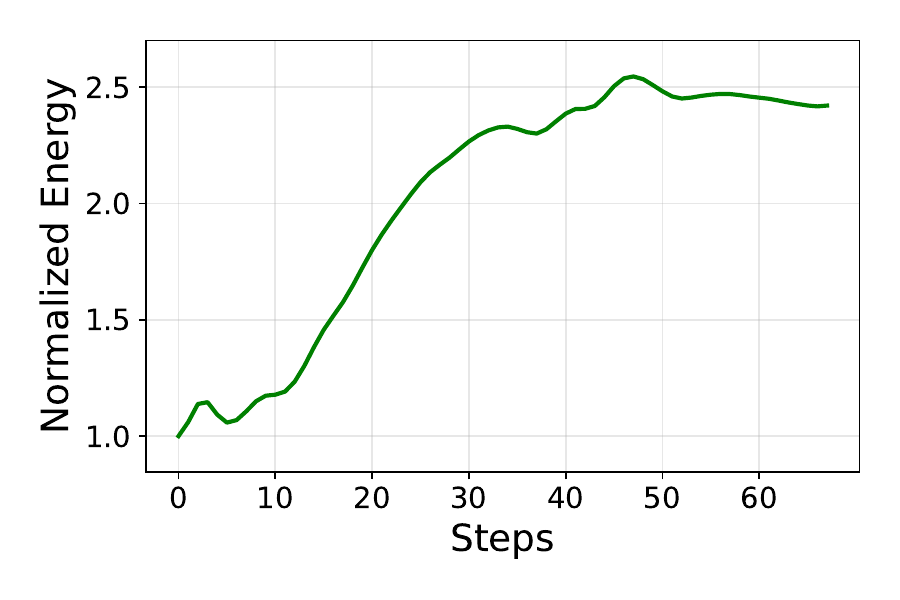}
  \end{minipage}\hfill
  \begin{minipage}{0.46\linewidth}
    \includegraphics[width=\linewidth]{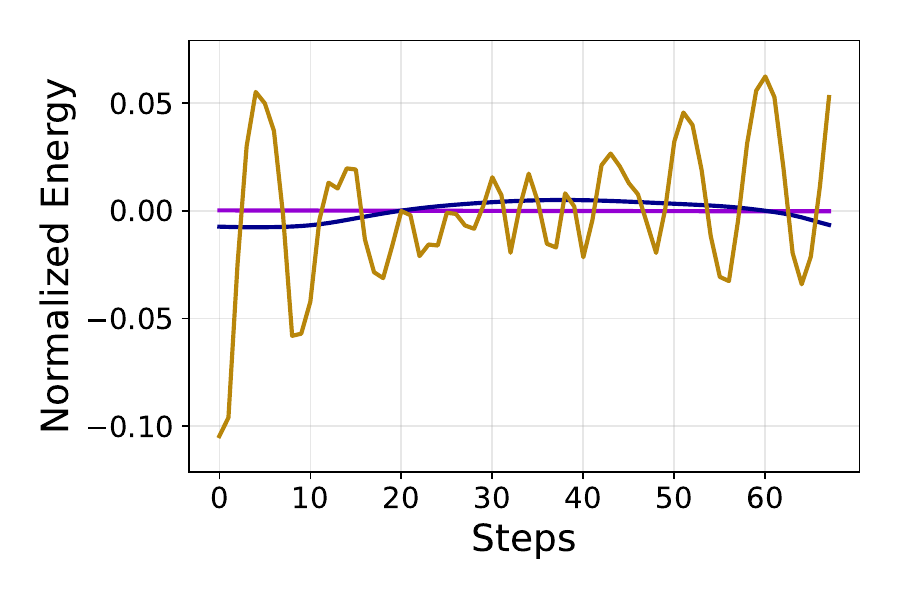}
  \end{minipage}
  \caption{\rebuttal{Energy behavior on \emph{SphereCloth-direct}. \textbf{Left:} \modelinvariant{} (with damping and forcing) shows an increasing energy profile, as expected for a non-conservative model that learns external actuation/dissipation. \textbf{Right:} removing damping and forcing yields an approximately conservative evolution; smaller time steps ($\Delta t\in\{10^{-3},10^{-2}\}$) keep energy nearly constant, while a larger step ($\Delta t=10^{-1}$) produces mild oscillations due to discretization error.}}
  \label{fig:energy}
\end{figure}

\rebuttal{
\textbf{Sphere Cloth direct.}
In \cref{fig:spherecloth_direct_val_curves}, the validation curves show that even without forcing, the time-varying model (\model{}) can reach a reasonable solution after a longer training phase, which explains the larger variance across seeds. This suggests that time variation can partly substitute for external forcing by mediating energy exchange and therefore can help increasing the model's expressiveness. By contrast, the time-invariant model (\modelinvariant{}) depends heavily on forcing: when it is removed, optimization plateaus at a poor local optimum for a long training time (until $400$ epochs), and then starts decaying with a slower rate compared to the time-varying versions. These behaviors are consistent with the Test MSE reported in \cref{tab:ablation_model_vs_invariant}.
\textbf{Energy conservation analysis.}
Next, to assess energy behavior, we load the best validation checkpoint of \modelinvariant{} on this task. In~\cref{fig:energy}, the full port-Hamiltonian model with all the terms (left) increases normalized energy over time, reflecting learned non-conservative effects. When we disable forcing and damping (right) and vary the integrator step, the dynamics become nearly conservative: the energy stays flat for small $\Delta t$ and oscillates slightly for larger $\Delta t$. This aligns with our ablations, where forcing acts as the main driver of energy injection while the time-invariant core without non-conservative terms maintains near-constant energy.
}

\begin{figure*}[t]
  \centering
  \begin{subfigure}[t]{0.24\textwidth}
    \centering\includegraphics[width=\linewidth]{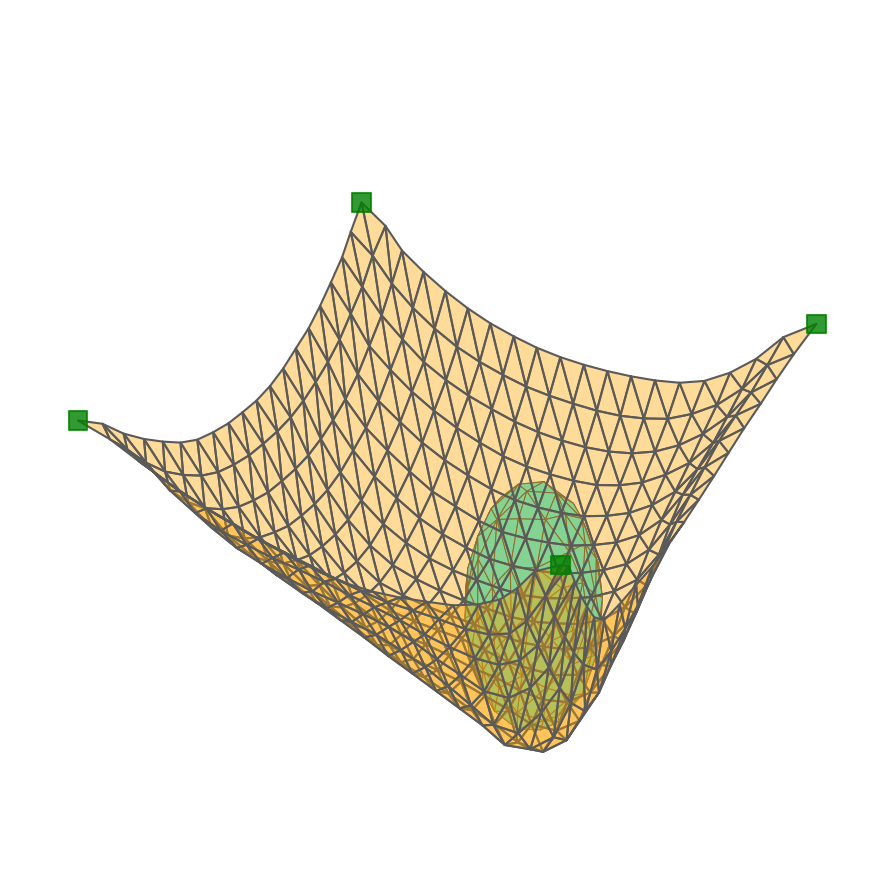}
    \subcaption*{Ground Truth}
  \end{subfigure}\hfill
  \begin{subfigure}[t]{0.24\textwidth}
    \centering\includegraphics[width=\linewidth]{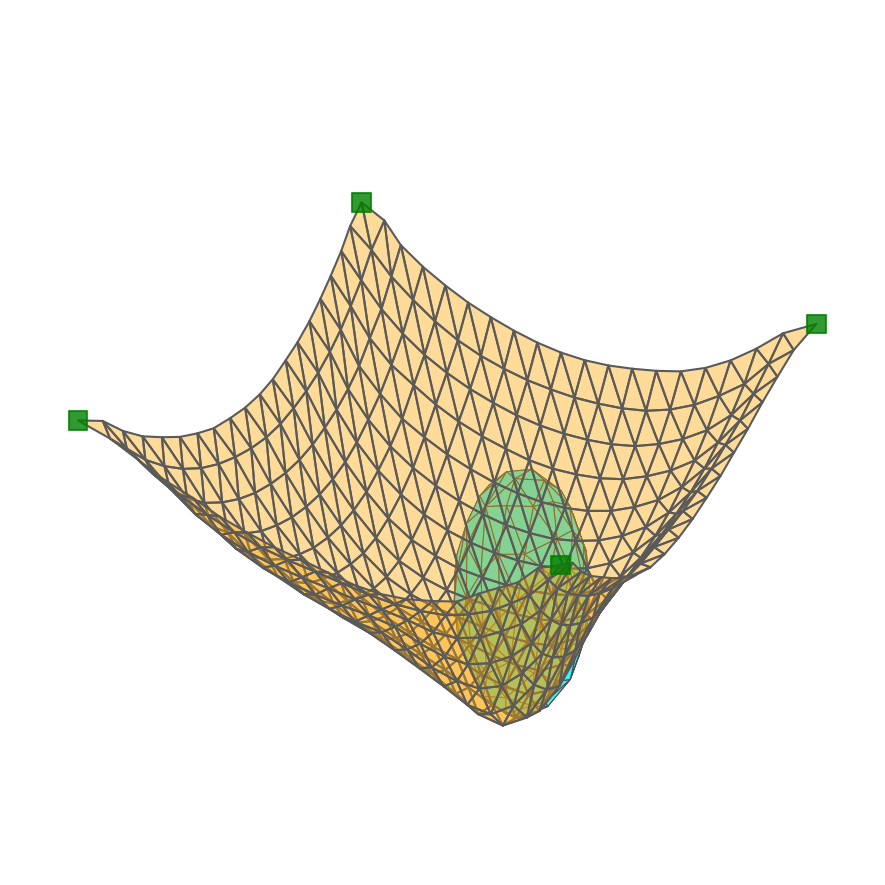}
    \subcaption*{MGN}
  \end{subfigure}\hfill
  \begin{subfigure}[t]{0.24\textwidth}
    \centering\includegraphics[width=\linewidth]{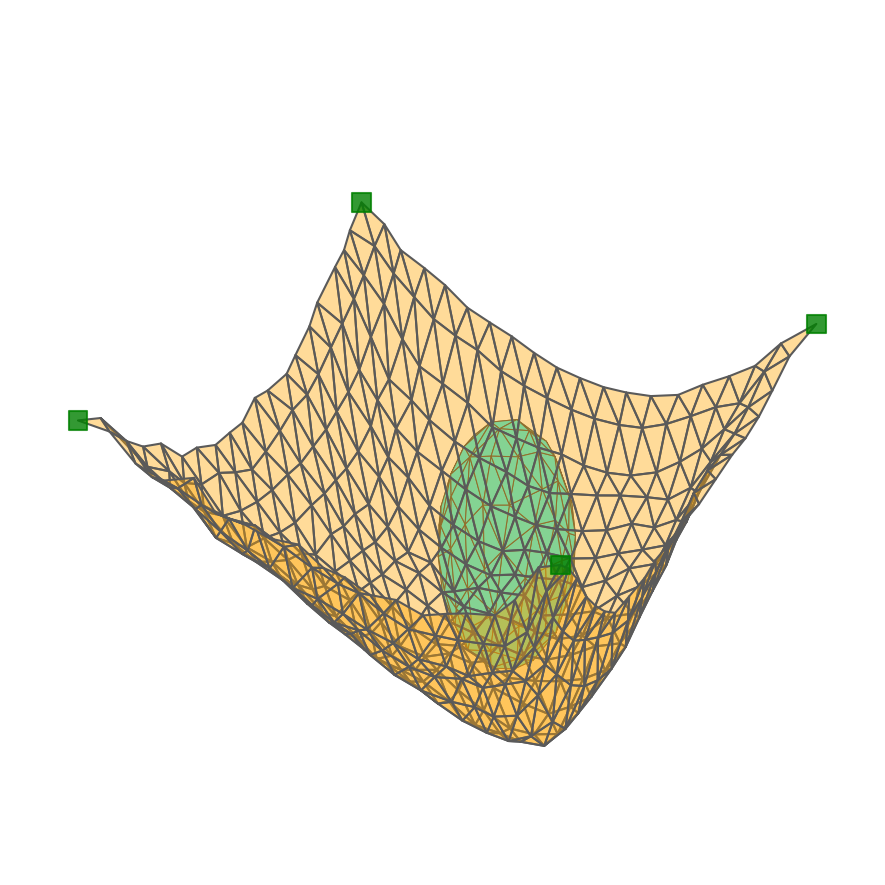}
    \subcaption*{MP-ODE}
  \end{subfigure}\hfill
  \begin{subfigure}[t]{0.24\textwidth}
    \centering\includegraphics[width=\linewidth]{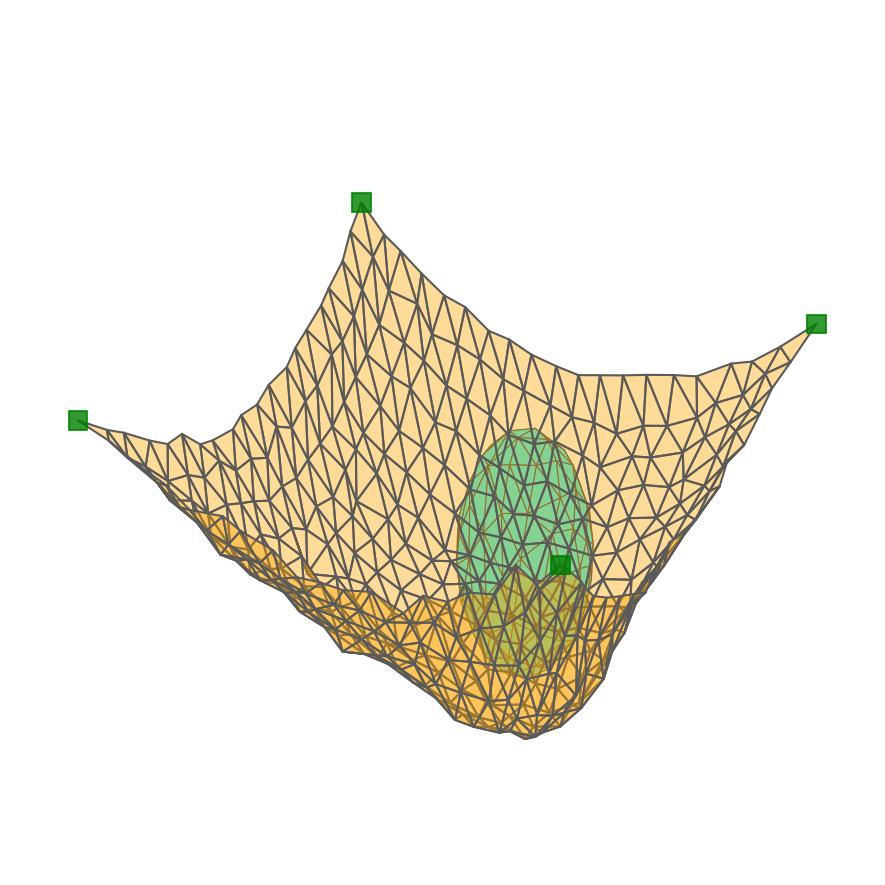}
    \subcaption*{ADGN}
  \end{subfigure}
  
  \begin{subfigure}[t]{0.24\textwidth}
    \centering\includegraphics[width=\linewidth]{iclr2026/figures/sphere_cloth/true_0_3d.png}
    \subcaption*{Ground Truth}
  \end{subfigure}\hfill
  \begin{subfigure}[t]{0.24\textwidth}
    \centering\includegraphics[width=\linewidth]{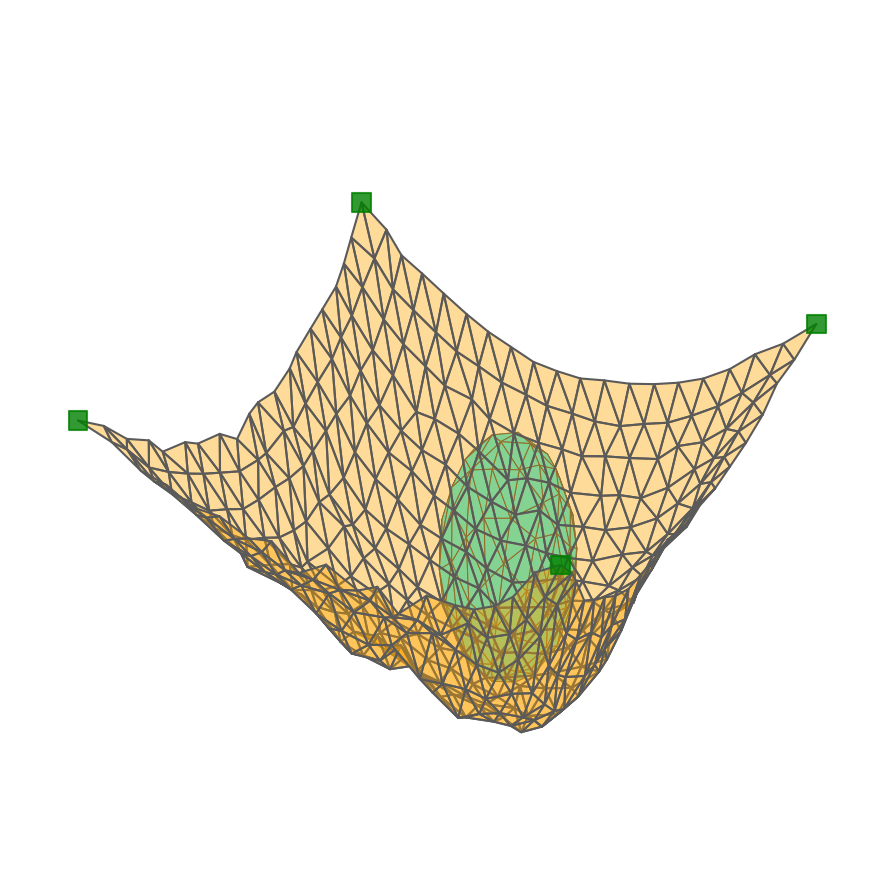}
    \subcaption*{GraphCON}
  \end{subfigure}\hfill
  \begin{subfigure}[t]{0.24\textwidth}
    \centering\includegraphics[width=\linewidth]{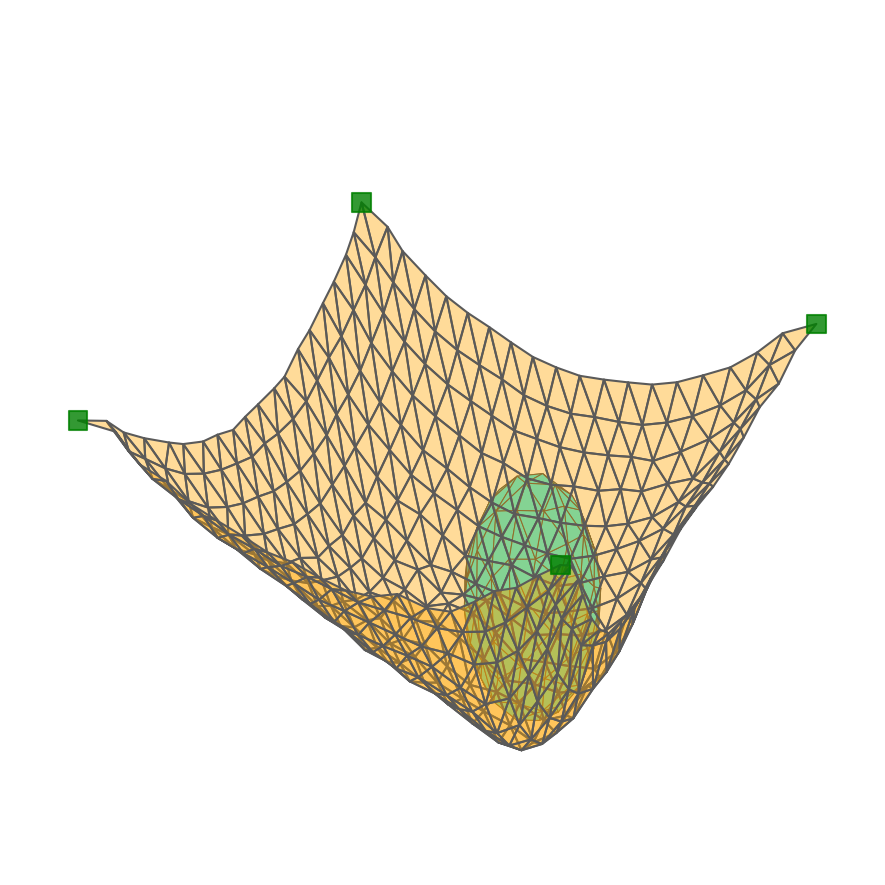}
    \subcaption*{\modelinvariant{}}
  \end{subfigure}\hfill
  \begin{subfigure}[t]{0.24\textwidth}
    \centering\includegraphics[width=\linewidth]{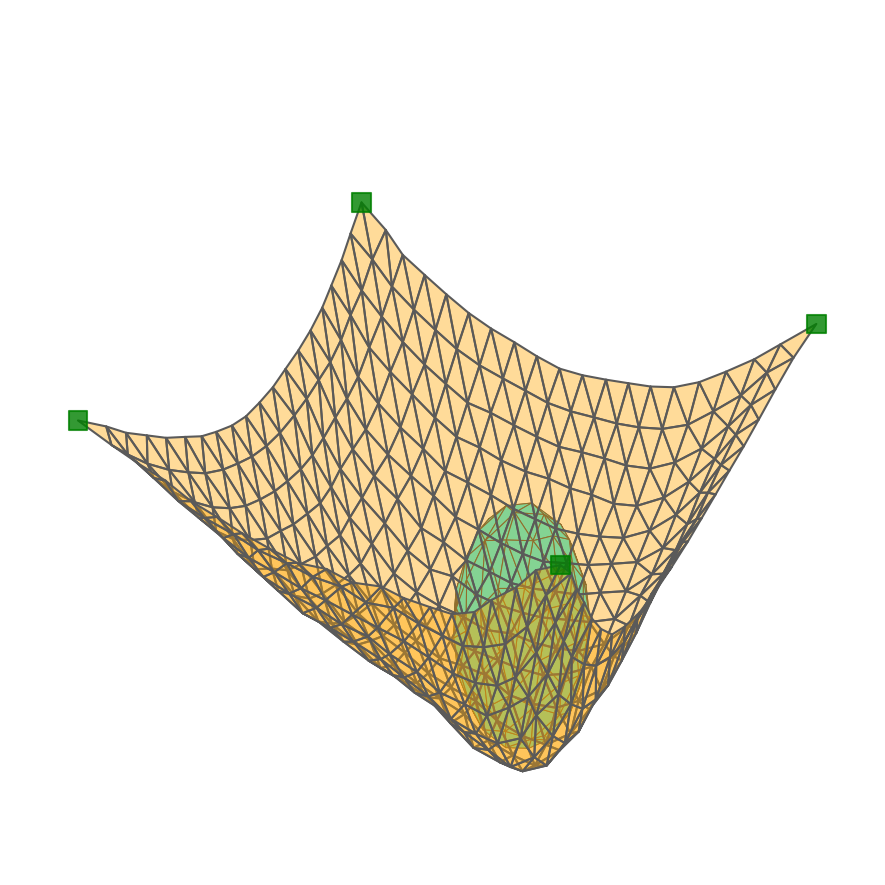}
    \subcaption*{\model{}}
  \end{subfigure}

  \vspace{0.5em}
  \noindent\makebox[\textwidth]{\rule{\textwidth}{0.4pt}}

  \begin{subfigure}[t]{0.24\textwidth}
    \centering\includegraphics[width=\linewidth]{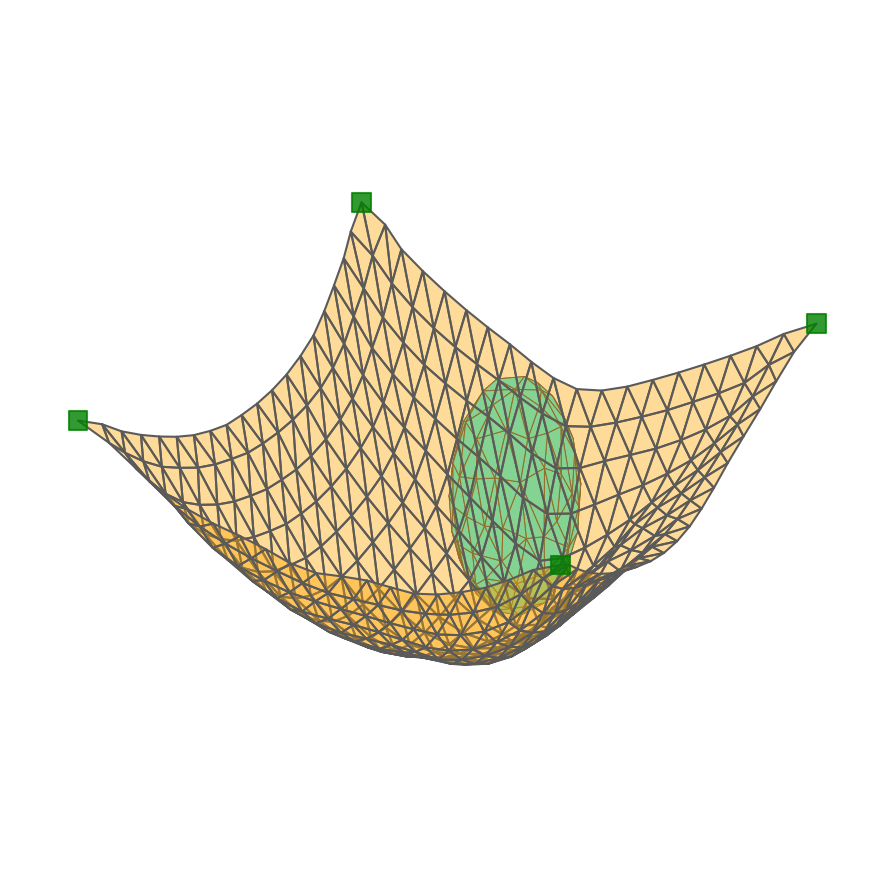}
    \subcaption*{Ground Truth}
  \end{subfigure}\hfill
  \begin{subfigure}[t]{0.24\textwidth}
    \centering\includegraphics[width=\linewidth]{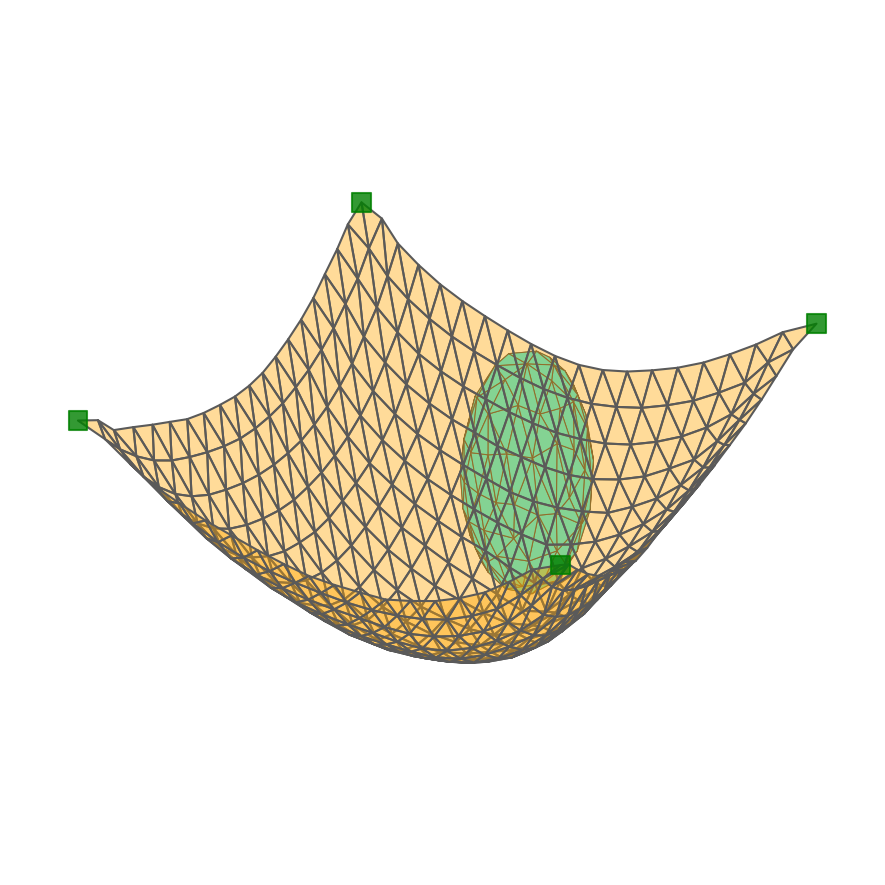}
    \subcaption*{MGN}
  \end{subfigure}\hfill
  \begin{subfigure}[t]{0.24\textwidth}
    \centering\includegraphics[width=\linewidth]{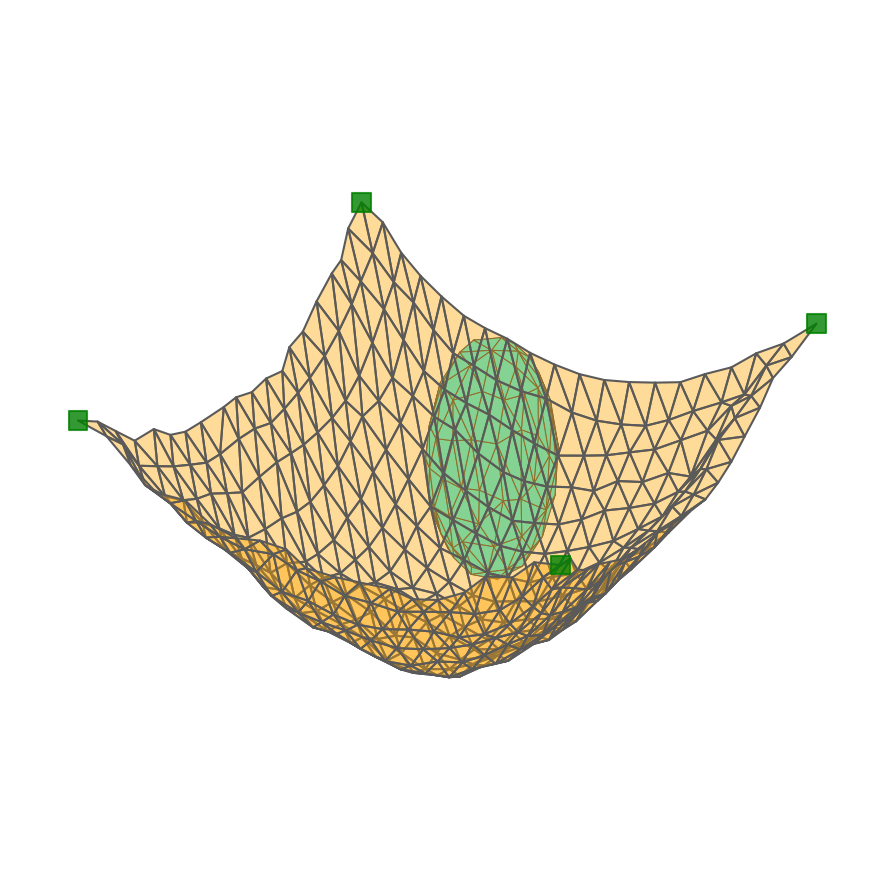}
    \subcaption*{MP-ODE}
  \end{subfigure}\hfill
  \begin{subfigure}[t]{0.24\textwidth}
    \centering\includegraphics[width=\linewidth]{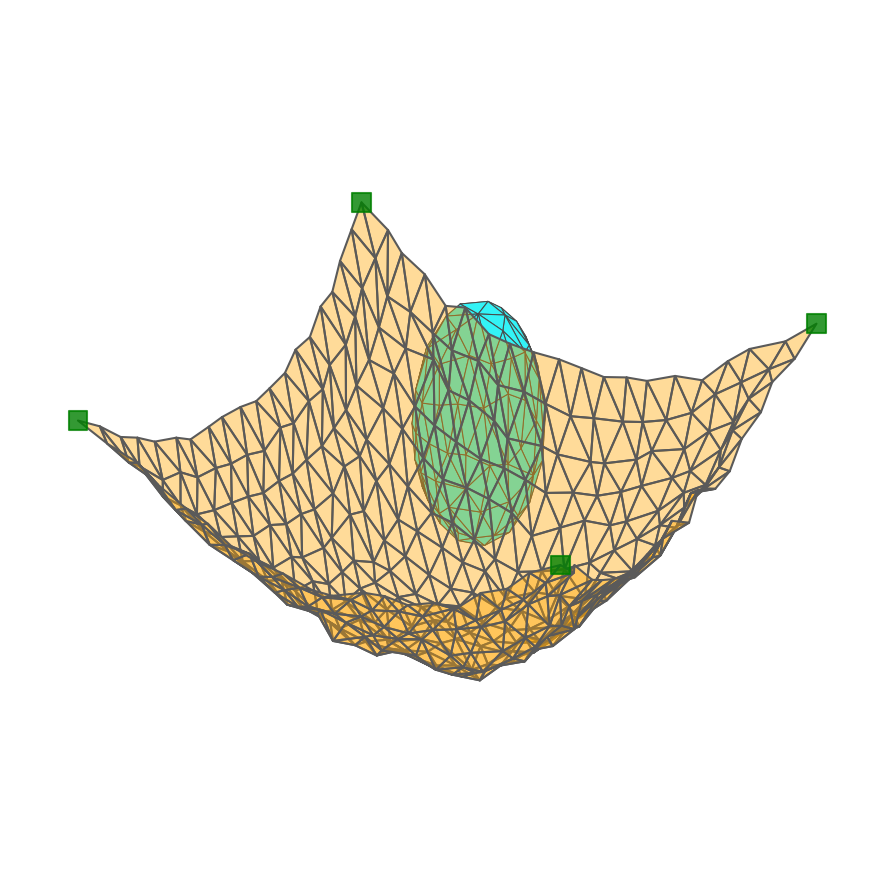}
    \subcaption*{ADGN}
  \end{subfigure}

  \begin{subfigure}[t]{0.24\textwidth}
    \centering\includegraphics[width=\linewidth]{iclr2026/figures/sphere_cloth/true_2_3d.png}
    \subcaption*{Ground Truth}
  \end{subfigure}\hfill
  \begin{subfigure}[t]{0.24\textwidth}
    \centering\includegraphics[width=\linewidth]{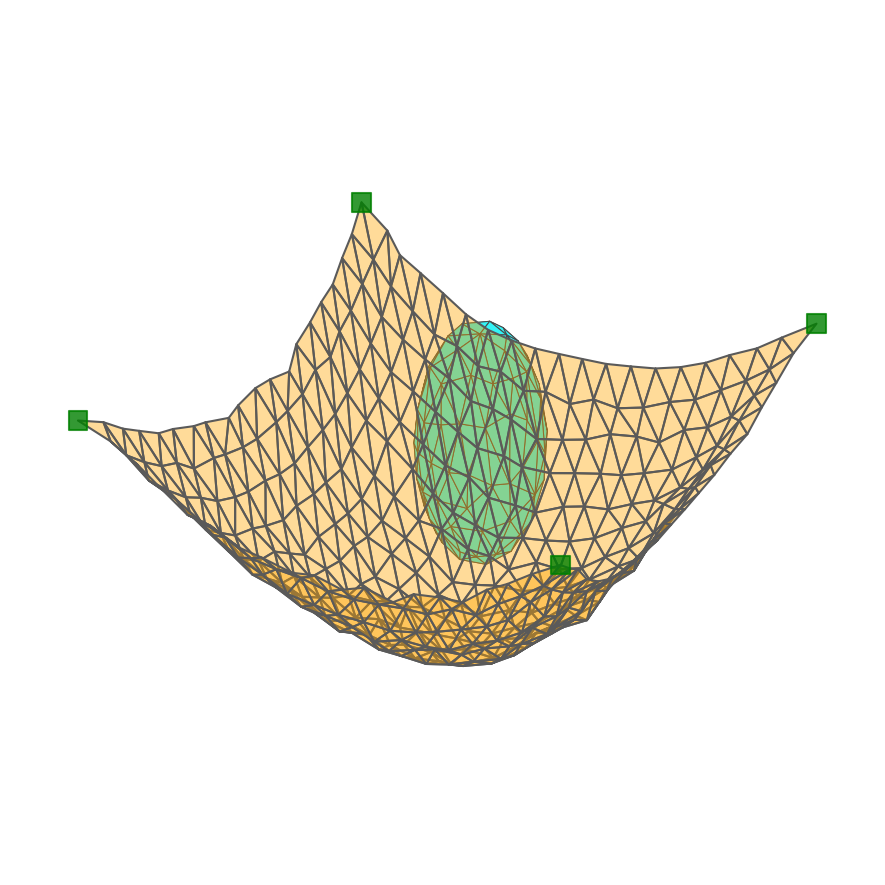}
    \subcaption*{GraphCON}
  \end{subfigure}\hfill
  \begin{subfigure}[t]{0.24\textwidth}
    \centering\includegraphics[width=\linewidth]{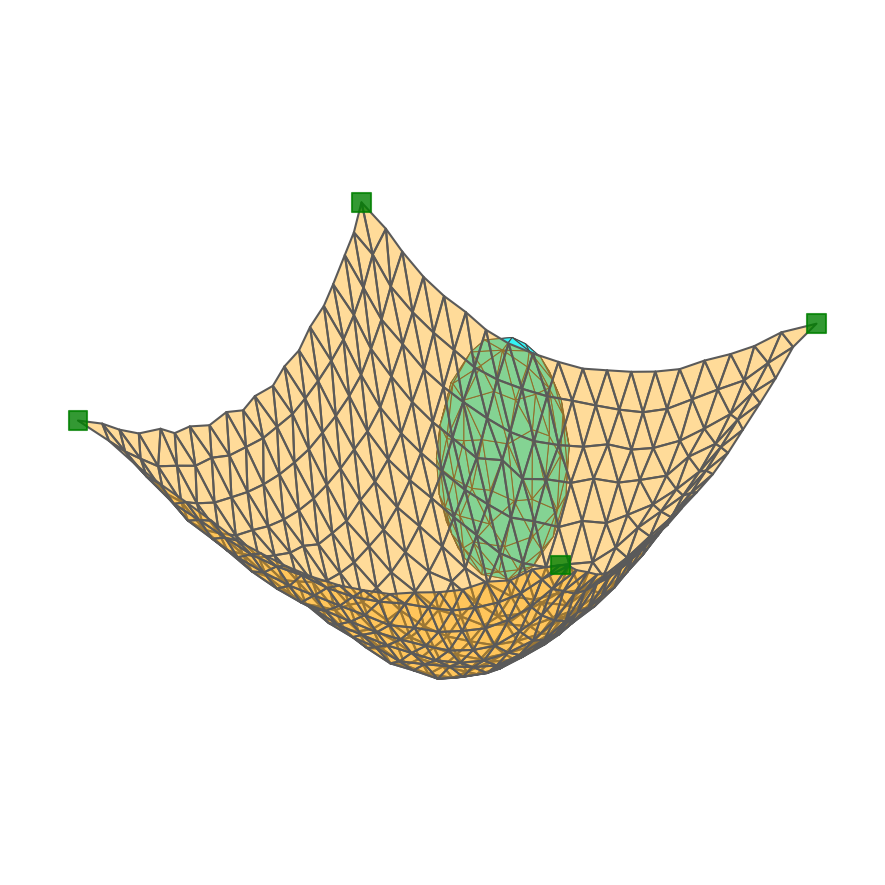}
    \subcaption*{\modelinvariant{}}
  \end{subfigure}\hfill
  \begin{subfigure}[t]{0.24\textwidth}
    \centering\includegraphics[width=\linewidth]{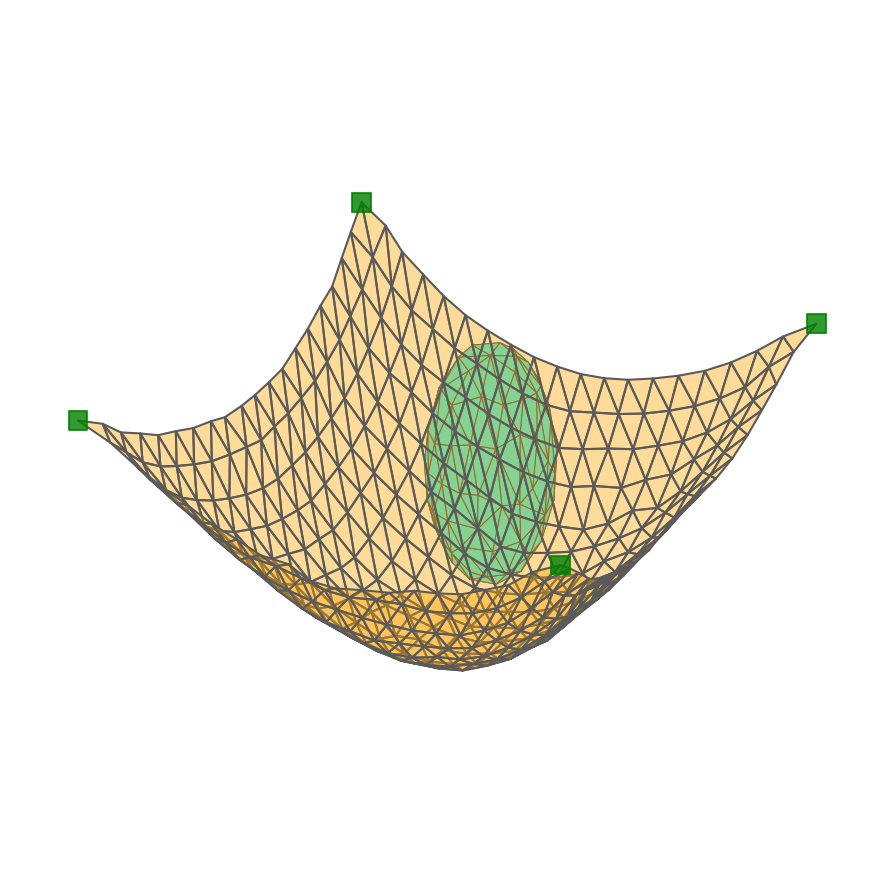}
    \subcaption*{\model{}}
  \end{subfigure}

  \caption{Sphere Cloth qualitative comparison on two test samples (separated by a horizontal line) with predictions at the final step $t=T$ from first-order models (MGN, MP-ODE) and second-order models (ADGN, GraphCON, \textbf{\modelinvariant{}} and \textbf{\model{}}).}
  \label{fig:sc_qualitative}
\end{figure*}

\paragraph{Wave Balls.} \cref{fig:wb_qualitative_1} compares \model{} with the baselines MGN, GraphCON, and A-DGN. The absolute error maps show that \model{} is the only method producing consistently reasonable results. MGN is able to track the ball positions but overestimates wave amplitude, leading to large errors. GraphCON captures the wave pattern but mislocalizes the balls, causing errors near the sources. A-DGN fails to learn this task.

\rebuttal{
\textbf{Wave Balls (long)} ($T{=}100$ and $T{=}200$). We generate trajectories with a high-resolution simulator using a ground-truth horizon $T_{\text{gt}}{=}400$ and form two settings by subsampling: $T{=}100$ (4$\times$) and $T{=}200$ (2$\times$). This probes whether models can learn under larger step sizes while remaining accurate over long horizons. As shown in \cref{tab:waveball_ablation_test}, MGN fails severely on this task due to over-reliance on autoregressive state. In contrast, both \model{} and \modelinvariant{} perform well; \model{} attains the lowest error and captures finer wave-density details than its time-invariant counterpart.
}

\begin{table}[h]
\centering
\caption{\rebuttal{Comparison of \emph{test} losses on \emph{WaveBall} at $T{\in}\{100,200\}$ (mean~$\pm$~std; lower is better). Values are reported in $\times 10^{-3}$. Best results are highlighted with \textbf{\textcolor{rankone}{orange}}. MGN is clearly worse, and \model{} surpasses the time-invariant variant \modelinvariant{}, confirming the benefit of the time-varying port-Hamiltonian core.}}
\label{tab:waveball_ablation_test}
\begin{tabular}{lcc}
\toprule
Method & $T{=}100$ & $T{=}200$ \\
\midrule
MGN                   & 1.644 $\pm$ 0.096 & 2.469 $\pm$ 0.221 \\
GraphCON              & 2.476 $\pm$ 0.082 & 2.011 $\pm$ 0.063 \\
\midrule
\textbf{\modelinvariant{}} & 0.675 $\pm$ 0.014 & 0.498 $\pm$ 0.022 \\
\textbf{\model{}}     & \best{$0.549 \pm 0.014$} & \best{$0.402 \pm 0.037$} \\
\bottomrule
\end{tabular}
\end{table}

\begin{figure*}[t]
  \centering
  \begin{subfigure}[t]{0.24\textwidth}
    \centering\includegraphics[width=\linewidth]{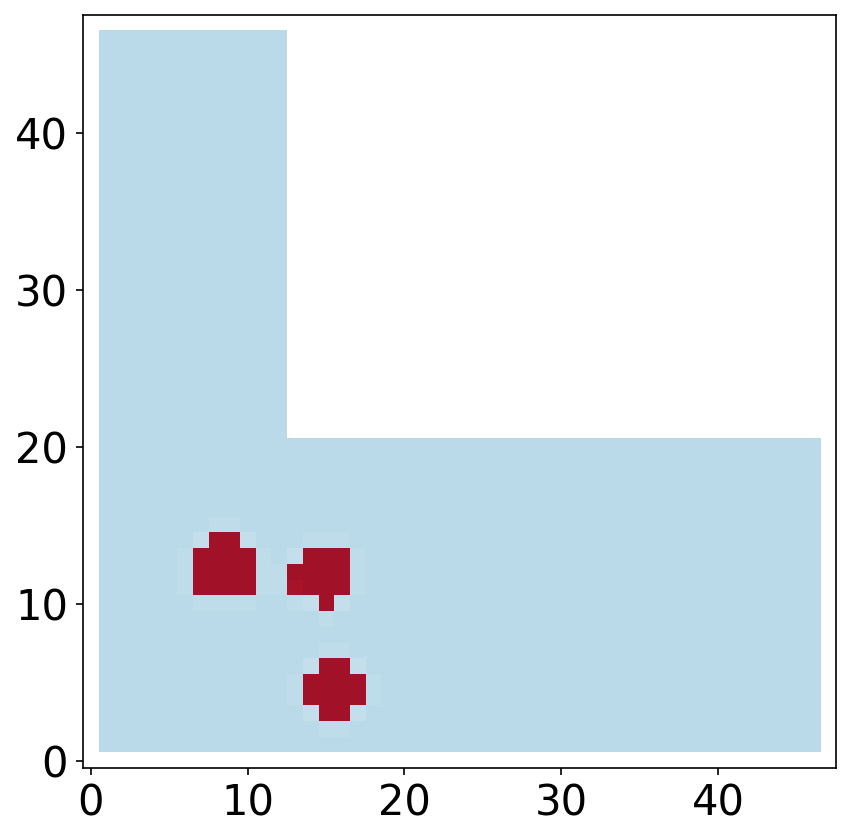}
    \subcaption*{Initial}
  \end{subfigure}
  \begin{subfigure}[t]{0.24\textwidth}
    \centering\includegraphics[width=\linewidth]{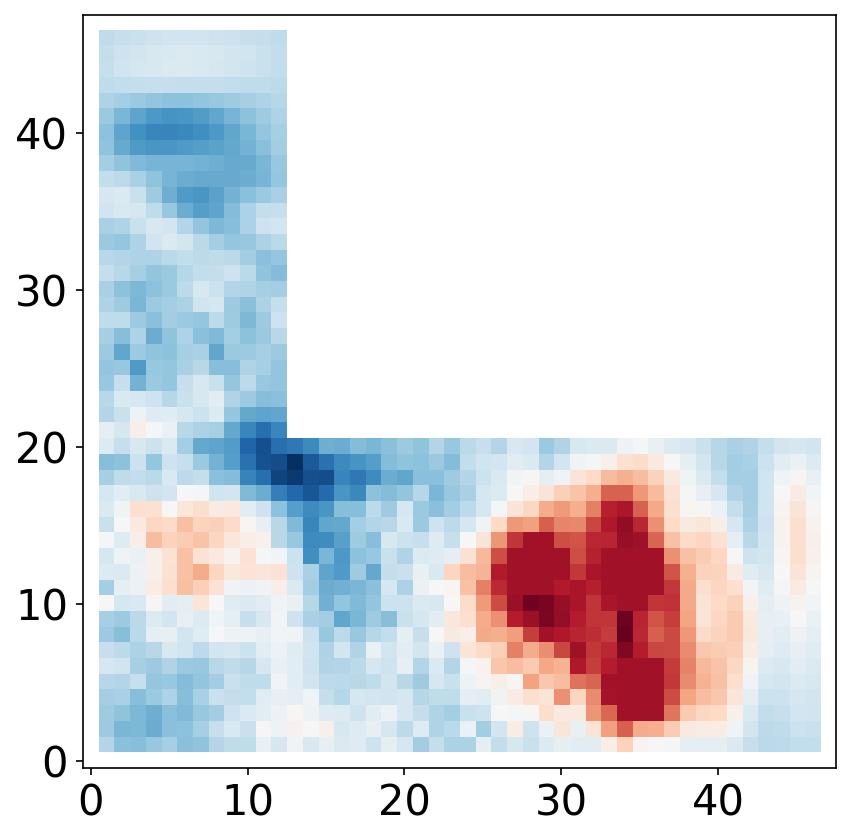}
    \subcaption*{Final ground-truth}
  \end{subfigure}

  \vspace{0.6em}
  \begin{subfigure}[t]{0.24\textwidth}
    \centering\includegraphics[width=\linewidth]{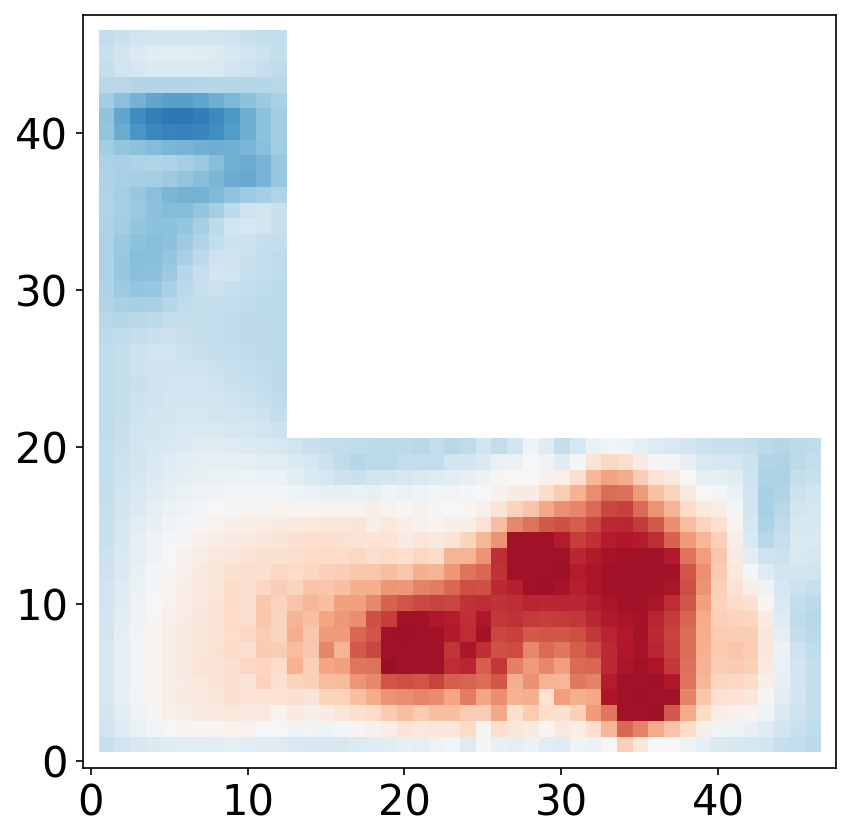}
    \subcaption*{MGN}
  \end{subfigure}\hfill
  \begin{subfigure}[t]{0.24\textwidth}
    \centering\includegraphics[width=\linewidth]{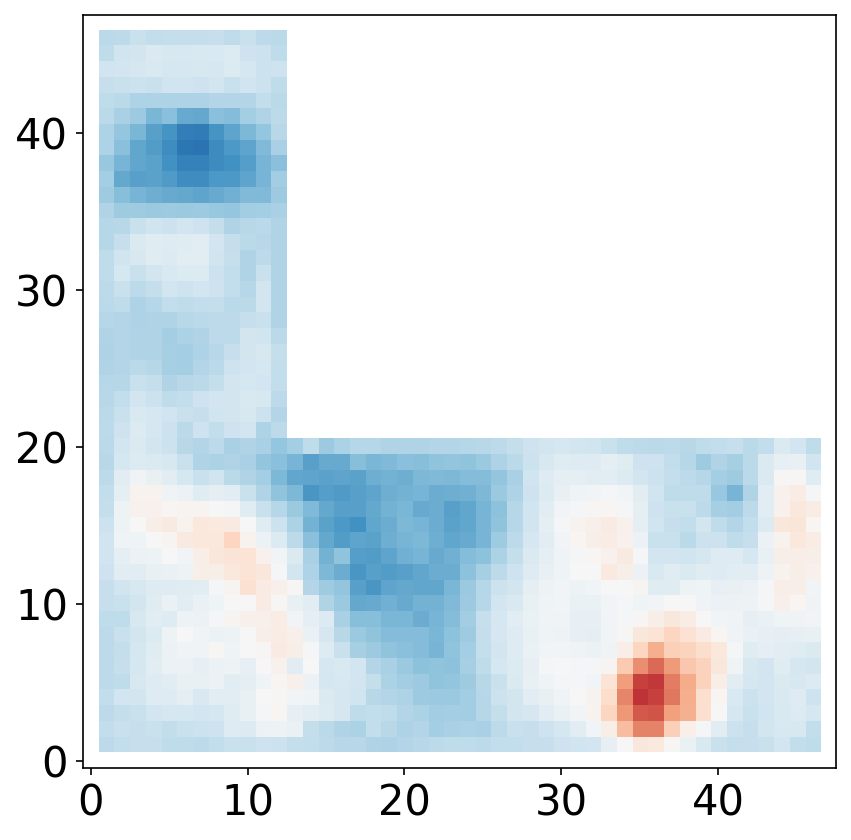}
    \subcaption*{GraphCON}
  \end{subfigure}\hfill
  \begin{subfigure}[t]{0.24\textwidth}
    \centering\includegraphics[width=\linewidth]{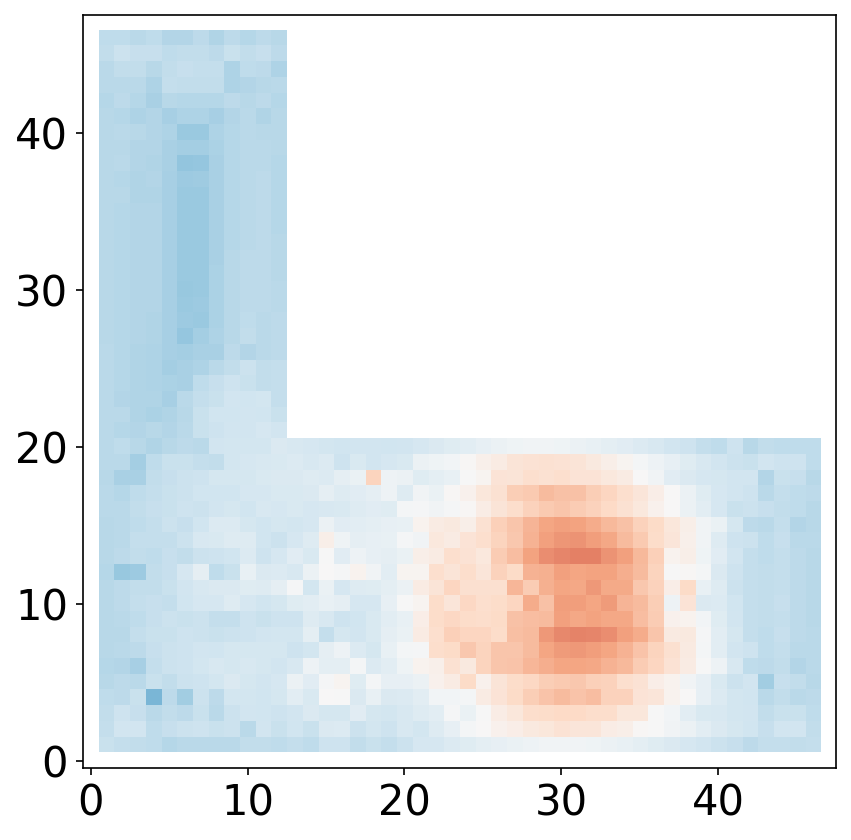}
    \subcaption*{A-DGN}
  \end{subfigure}\hfill
  \begin{subfigure}[t]{0.24\textwidth}
    \centering\includegraphics[width=\linewidth]{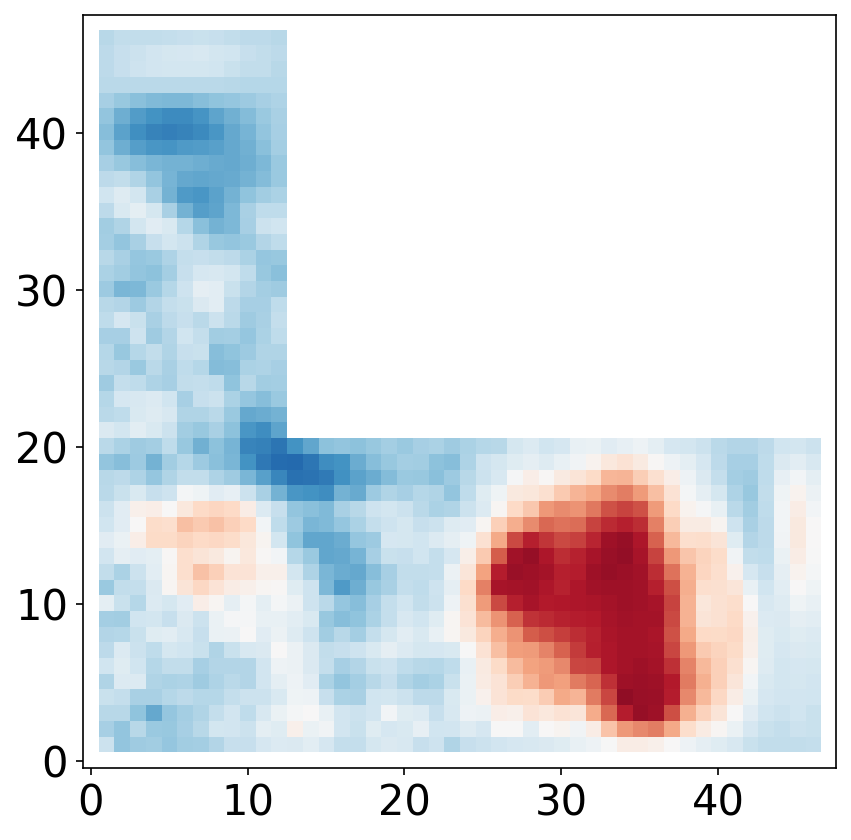}
    \subcaption*{\model{}}
  \end{subfigure}

  \vspace{0.6em}
  \begin{subfigure}[t]{0.24\textwidth}
    \centering\includegraphics[width=\linewidth]{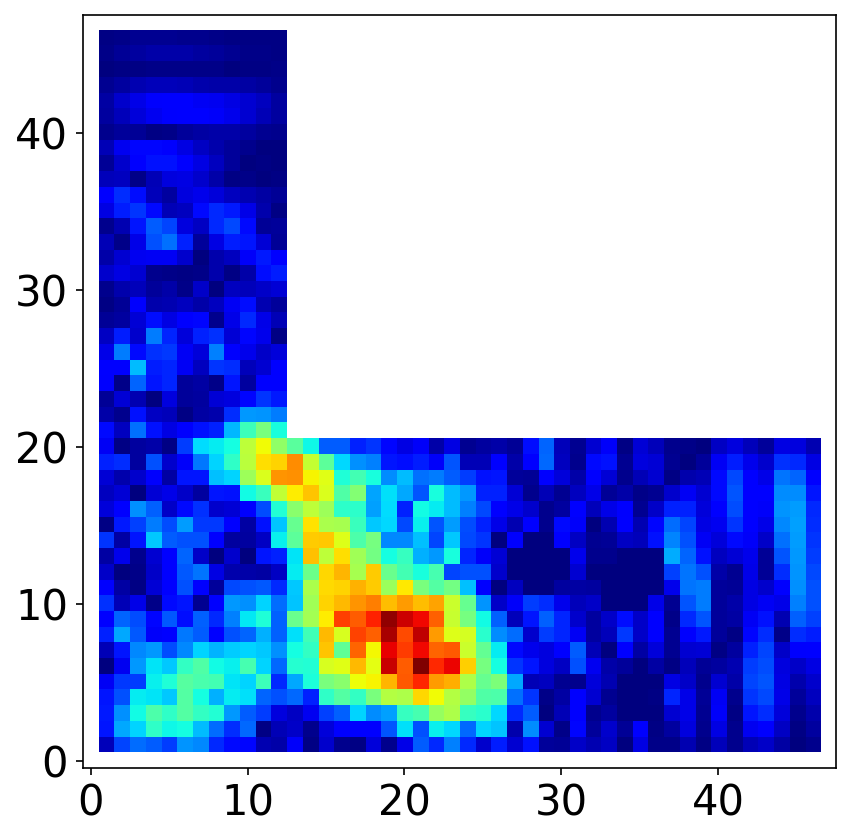}
    \subcaption*{MGN error}
  \end{subfigure}\hfill
  \begin{subfigure}[t]{0.24\textwidth}
    \centering\includegraphics[width=\linewidth]{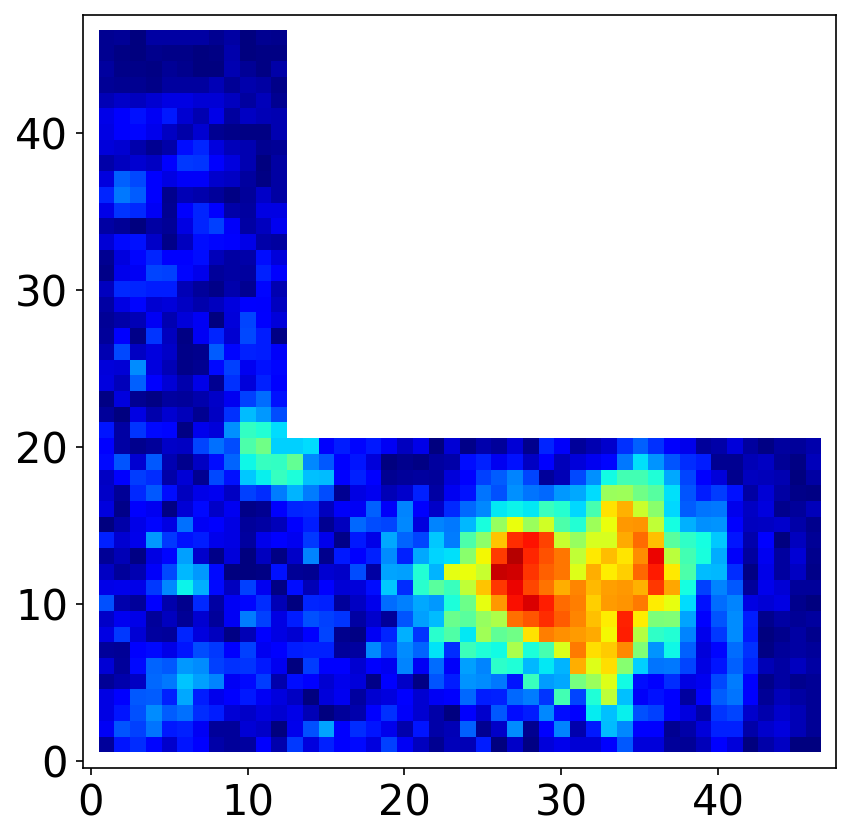}
    \subcaption*{GraphCON error}
  \end{subfigure}\hfill
  \begin{subfigure}[t]{0.24\textwidth}
    \centering\includegraphics[width=\linewidth]{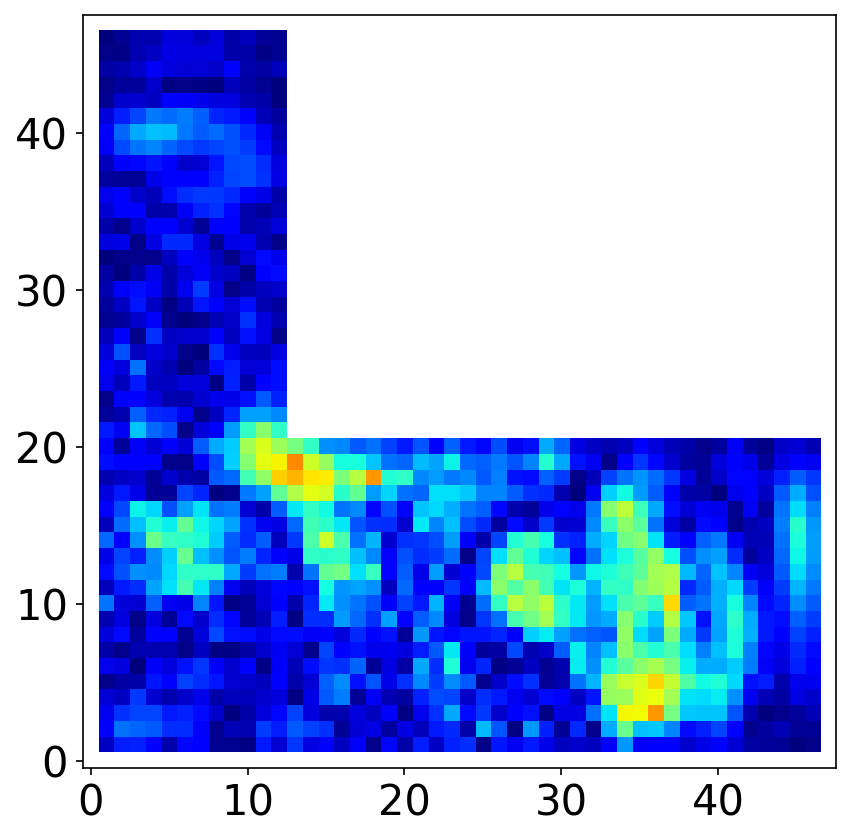}
    \subcaption*{A-DGN error}
  \end{subfigure}\hfill
  \begin{subfigure}[t]{0.24\textwidth}
    \centering\includegraphics[width=\linewidth]{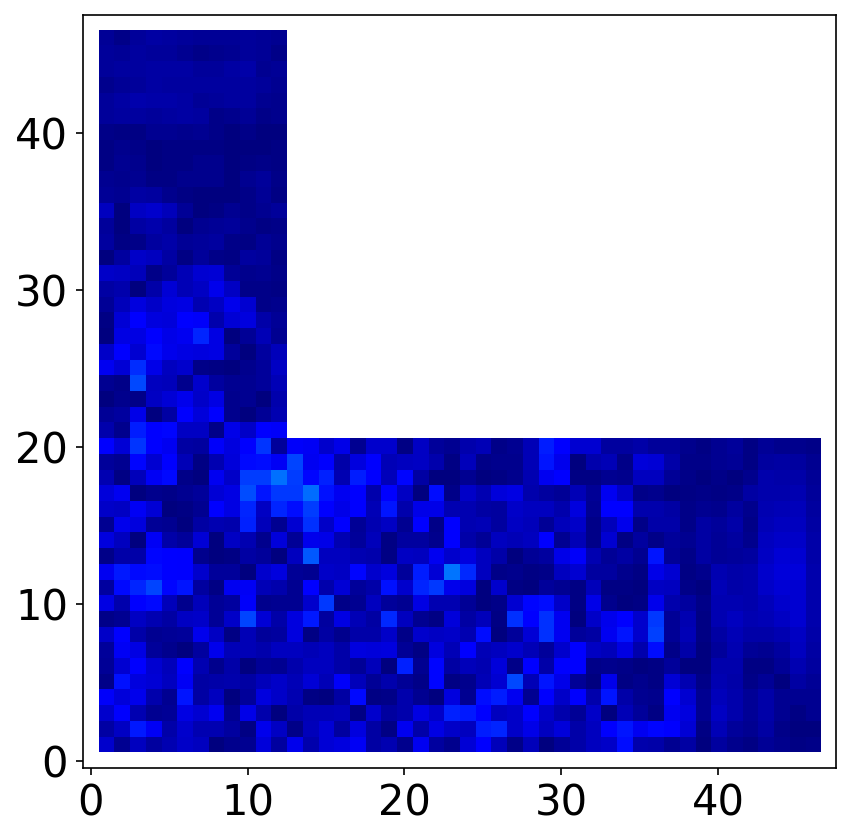}
    \subcaption*{\model{} error}
  \end{subfigure}

  \rowcbar{0.5\textwidth}{2mm}{0.000000}{0.187644}{Absolute Error}

  \caption{Wave Balls qualitative comparison on one test sample. Top: initial condition and ground-truth final state at time $t{=}T$. Middle: predictions at $t{=}T$ from MGN, GraphCON, A-DGN and \textbf{\model{}}. Bottom: absolute error maps, $\lvert u_{\text{pred}} - u_{\text{gt}}\rvert$. The horizontal colorbars indicate the error magnitude in the bottom row.}
  \label{fig:wb_qualitative_1}
\end{figure*}

\paragraph{Kuramoto-Sivashinsky.} To illustrate performance, we show predictions and absolute errors for FNO-RNN, MGN, GraphCON, and \model{} on two test samples (\cref{fig:ks_qualitative_1,fig:ks_qualitative_2}). The first sample reflects an early stage where local behavior dominates; the second shows a later, more chaotic regime. Two patterns emerge: FNO-RNN produces more globally distributed responses, while the graph-based models remain more localized. In both samples, MGN shows larger absolute errors than the second-order models (GraphCON and \model{}). In \cref{fig:ks_qualitative_1}, MGN also misses one mode, consistent with error accumulation under autoregressive rollout. Overall, GraphCON performs best on this task, with \model{} and \modelinvariant{} close competitors.

\begin{figure*}[t]
  \centering
  \begin{subfigure}[t]{0.24\textwidth}
    \centering\includegraphics[width=\linewidth]{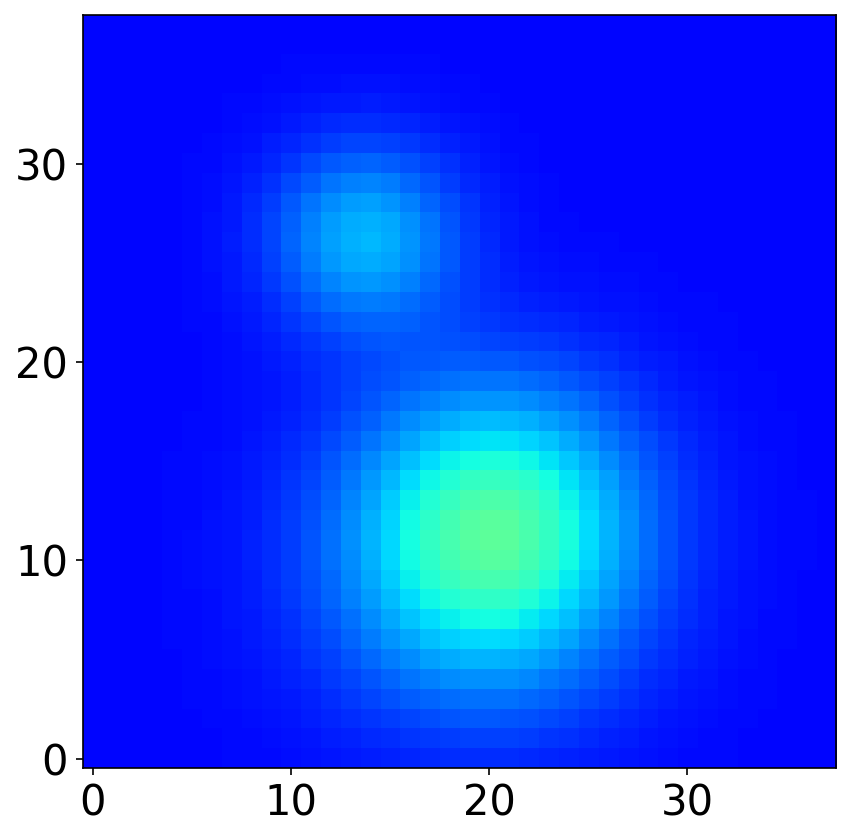}
    \subcaption*{Initial}
  \end{subfigure}
  \begin{subfigure}[t]{0.24\textwidth}
    \centering\includegraphics[width=\linewidth]{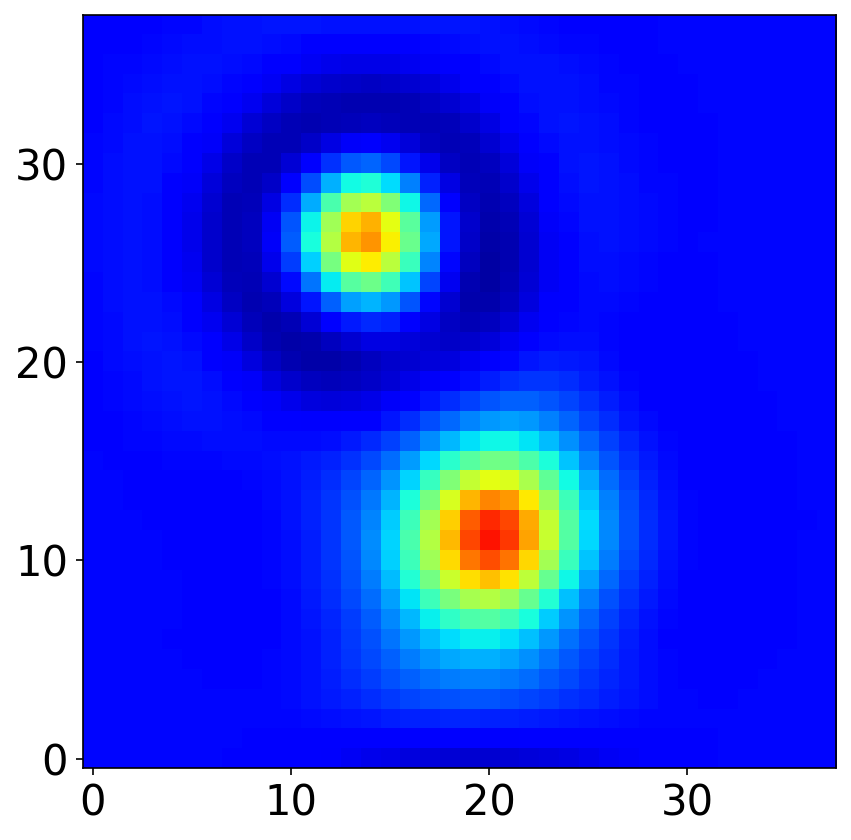}
    \subcaption*{Final ground-truth}
  \end{subfigure}

  \vspace{0.6em}
  \begin{subfigure}[t]{0.24\textwidth}
    \centering\includegraphics[width=\linewidth]{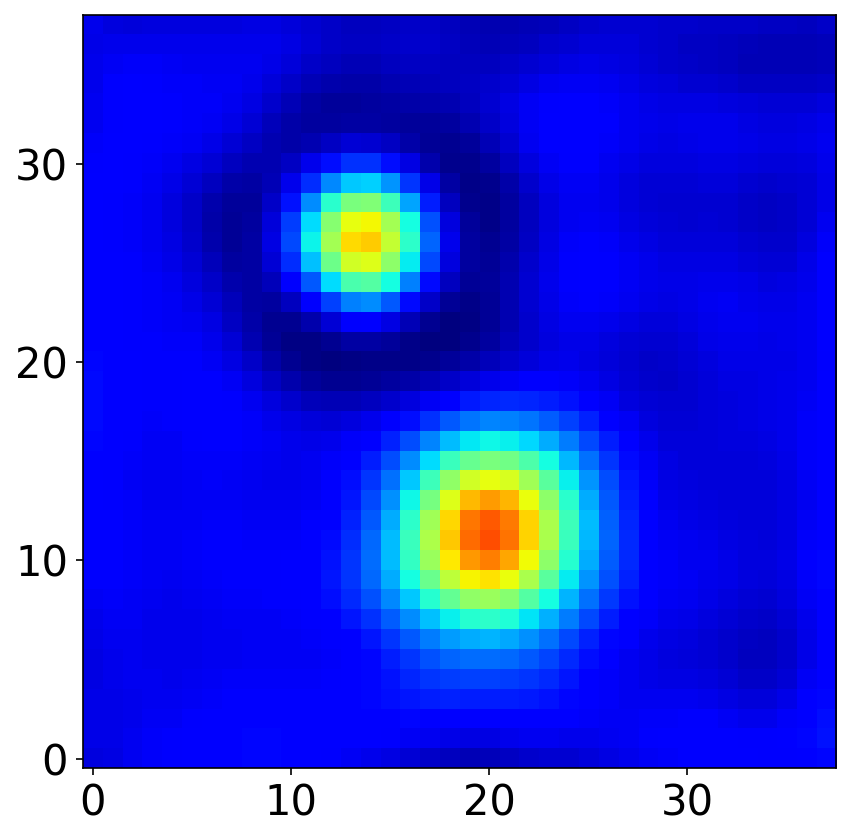}
    \subcaption*{FNO-RNN}
  \end{subfigure}\hfill
  \begin{subfigure}[t]{0.24\textwidth}
    \centering\includegraphics[width=\linewidth]{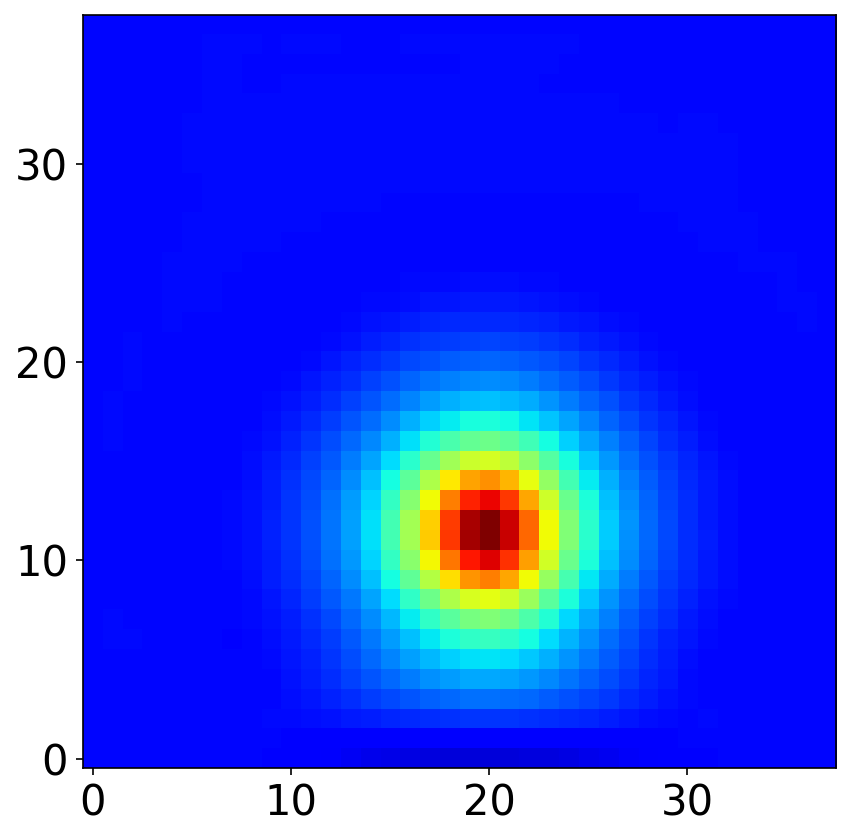}
    \subcaption*{MGN}
  \end{subfigure}\hfill
  \begin{subfigure}[t]{0.24\textwidth}
    \centering\includegraphics[width=\linewidth]{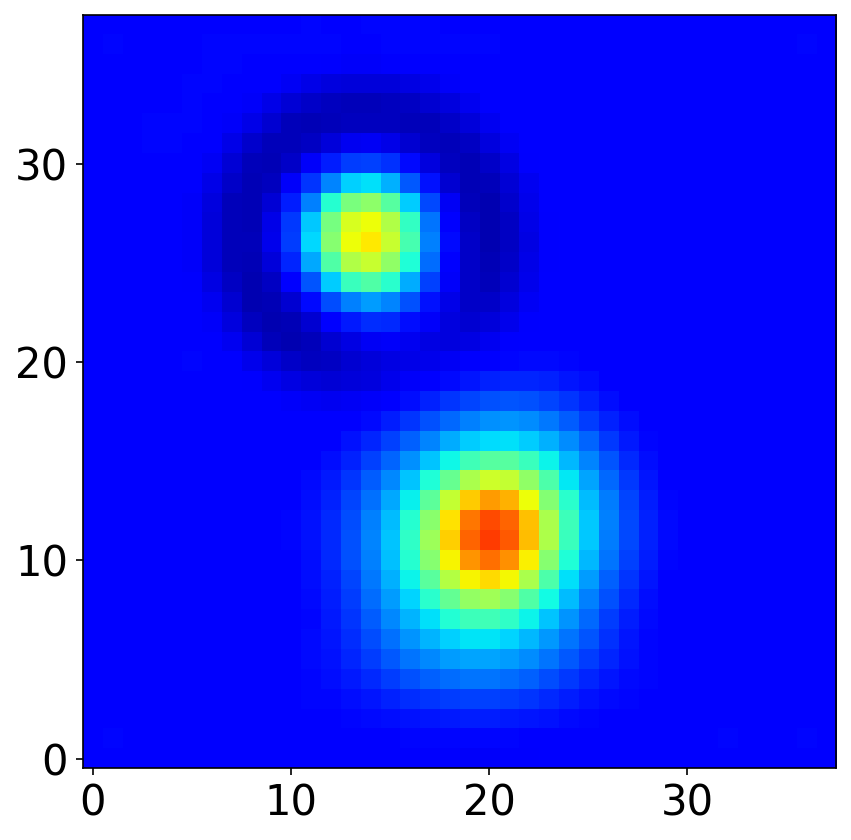}
    \subcaption*{GraphCON}
  \end{subfigure}\hfill
  \begin{subfigure}[t]{0.24\textwidth}
    \centering\includegraphics[width=\linewidth]{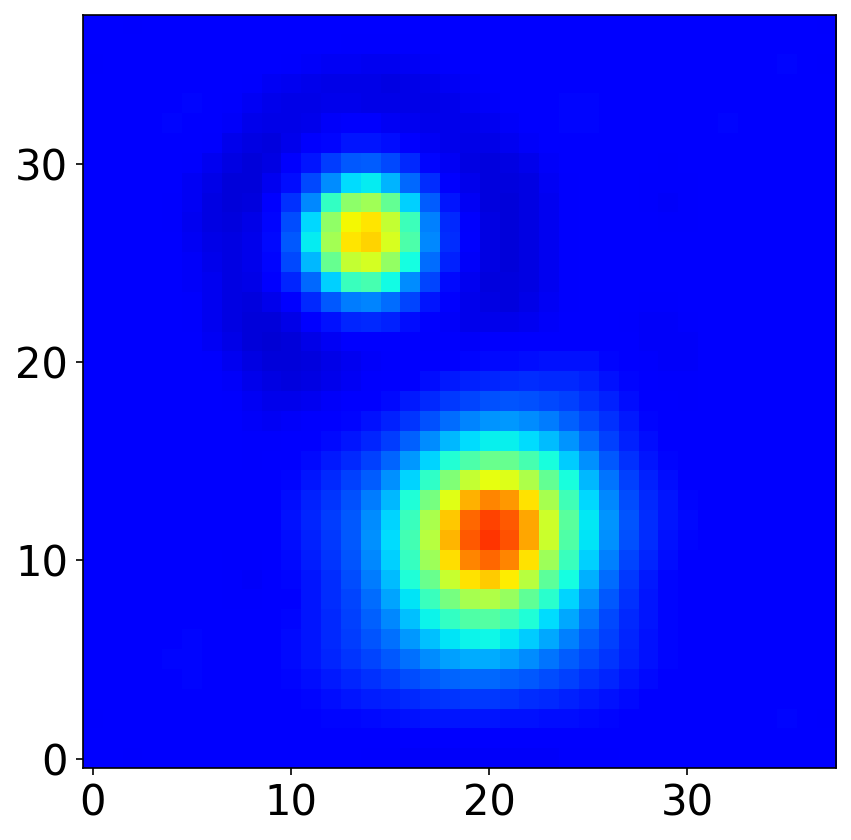}
    \subcaption*{\model{}}
  \end{subfigure}


  \vspace{0.6em}
  \begin{subfigure}[t]{0.24\textwidth}
    \centering\includegraphics[width=\linewidth]{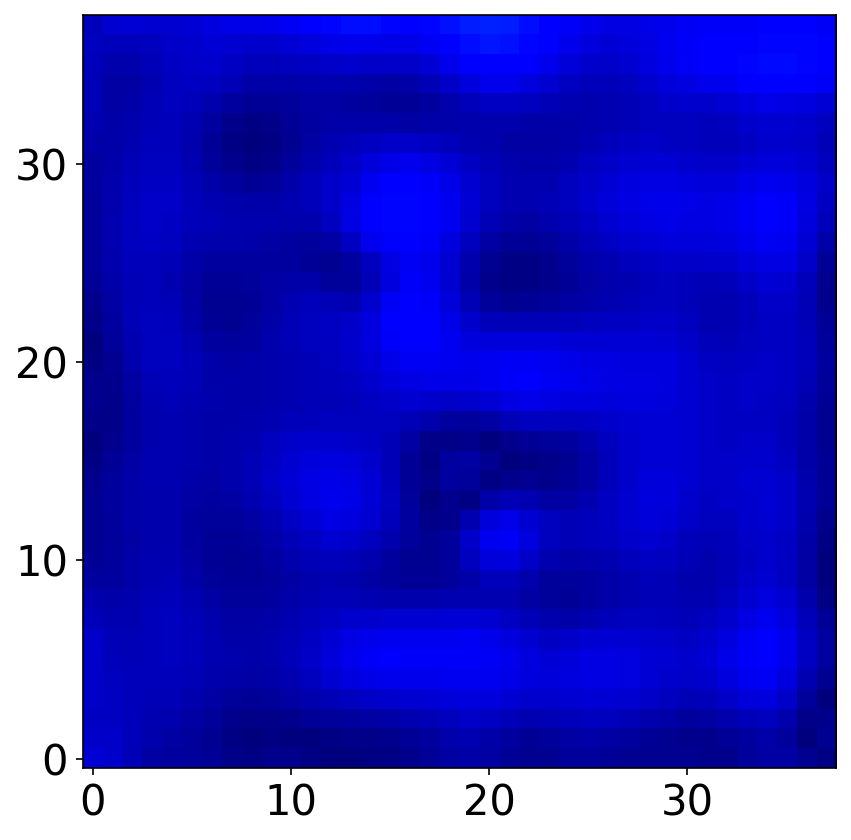}
    \subcaption*{FNO-RNN error}
  \end{subfigure}\hfill
  \begin{subfigure}[t]{0.24\textwidth}
    \centering\includegraphics[width=\linewidth]{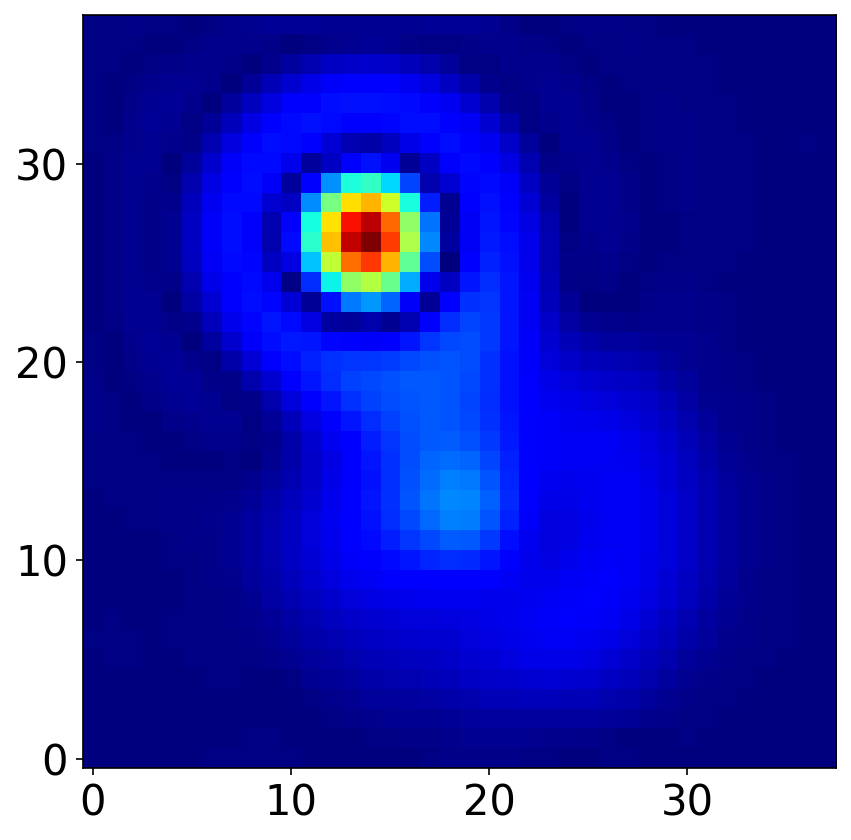}
    \subcaption*{MGN error}
  \end{subfigure}\hfill
  \begin{subfigure}[t]{0.24\textwidth}
    \centering\includegraphics[width=\linewidth]{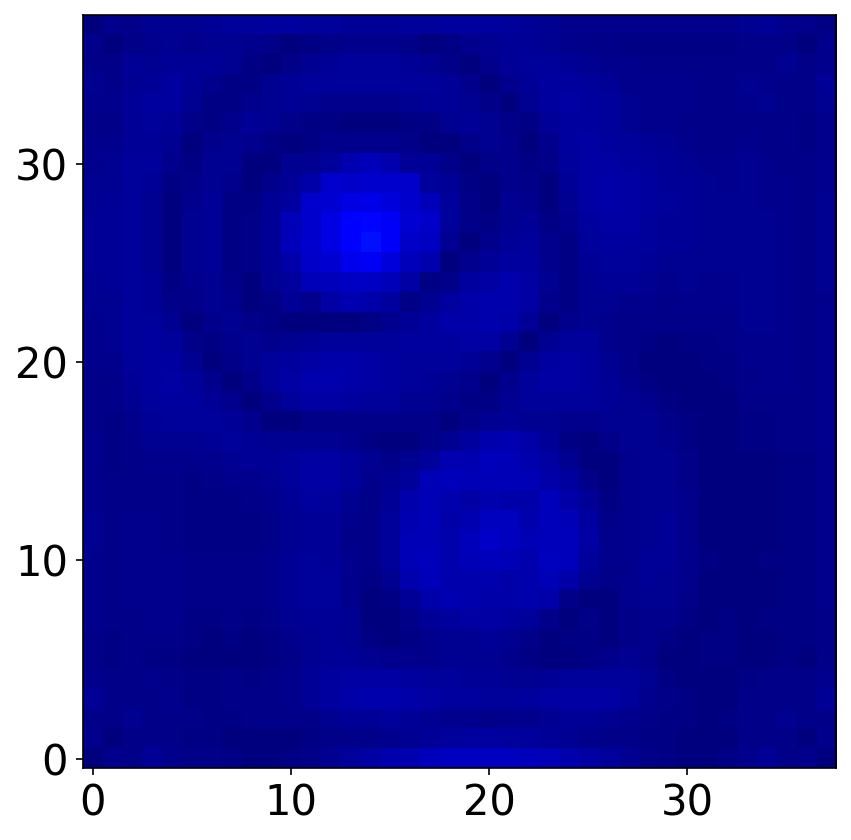}
    \subcaption*{GraphCON error}
  \end{subfigure}\hfill
  \begin{subfigure}[t]{0.24\textwidth}
    \centering\includegraphics[width=\linewidth]{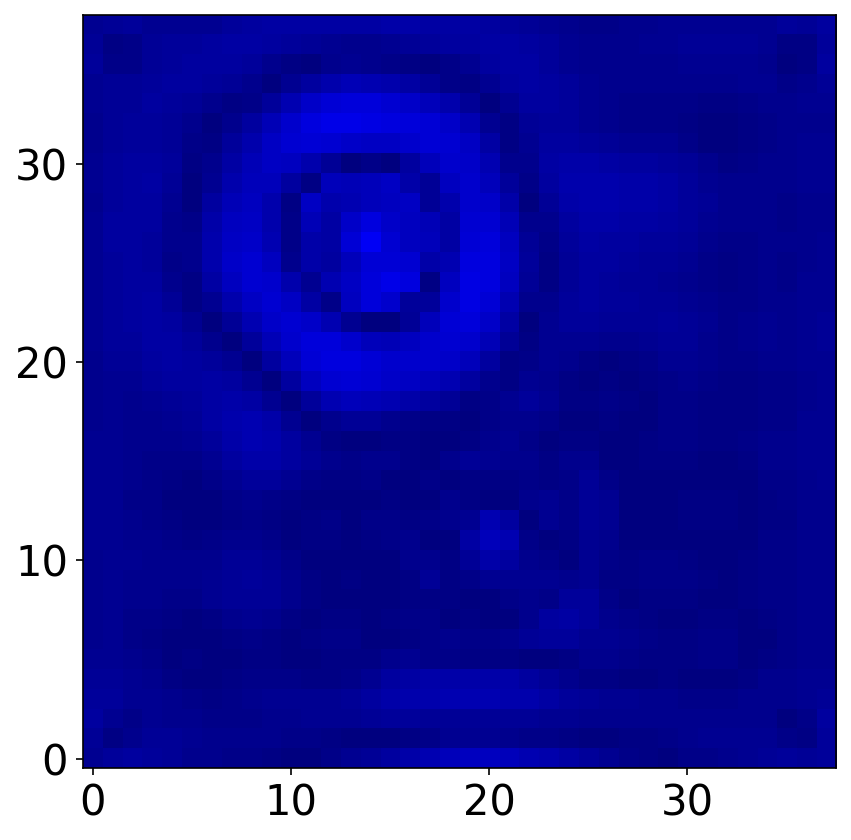}
    \subcaption*{\model{} error}
  \end{subfigure}

  \rowcbar{0.5\textwidth}{2mm}{0.000000}{0.187644}{Absolute Error}

  \caption{Kuramoto-Sivashinsky (KS) qualitative comparison on one test sample. Top: initial condition and ground-truth final state at time $t{=}T$. Middle: predictions at $t{=}T$ from FNO-RNN, MGN, GraphCON, and \textbf{\model{}}. Bottom: absolute error maps, $\lvert u_{\text{pred}} - u_{\text{gt}}\rvert$. The horizontal colorbars indicate the error magnitude in the bottom row.}
  \label{fig:ks_qualitative_1}
\end{figure*}

\begin{figure*}[t]
  \centering
  \begin{subfigure}[t]{0.24\textwidth}
    \centering\includegraphics[width=\linewidth]{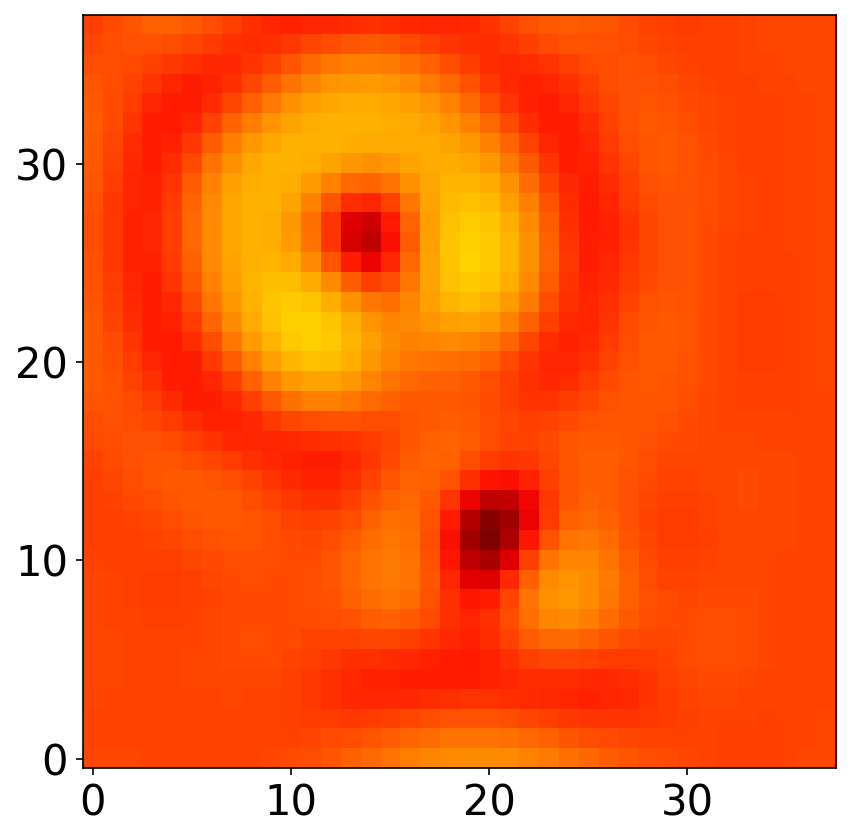}
    \subcaption*{Initial}
  \end{subfigure}
  \begin{subfigure}[t]{0.24\textwidth}
    \centering\includegraphics[width=\linewidth]{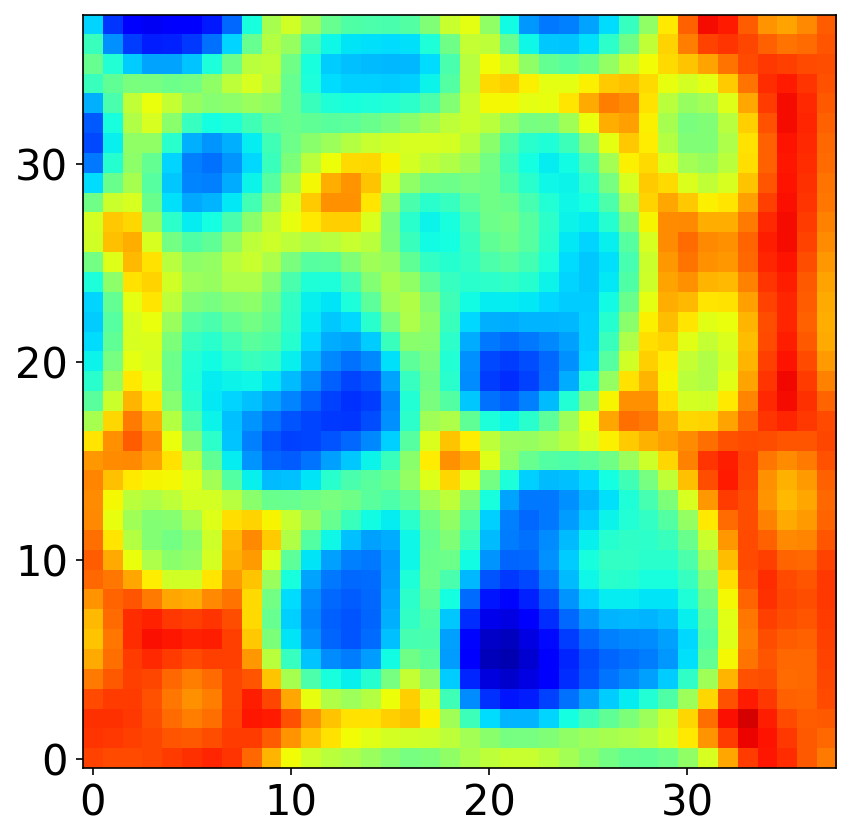}
    \subcaption*{Final ground-truth}
  \end{subfigure}

  \vspace{0.6em}
  \begin{subfigure}[t]{0.24\textwidth}
    \centering\includegraphics[width=\linewidth]{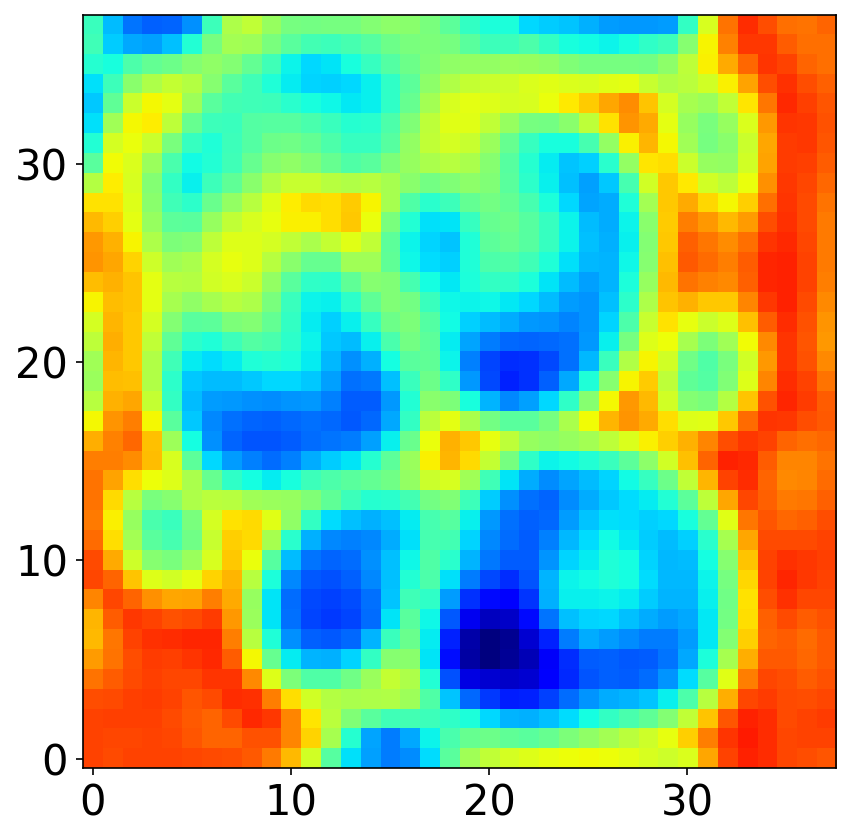}
    \subcaption*{FNO-RNN}
  \end{subfigure}\hfill
  \begin{subfigure}[t]{0.24\textwidth}
    \centering\includegraphics[width=\linewidth]{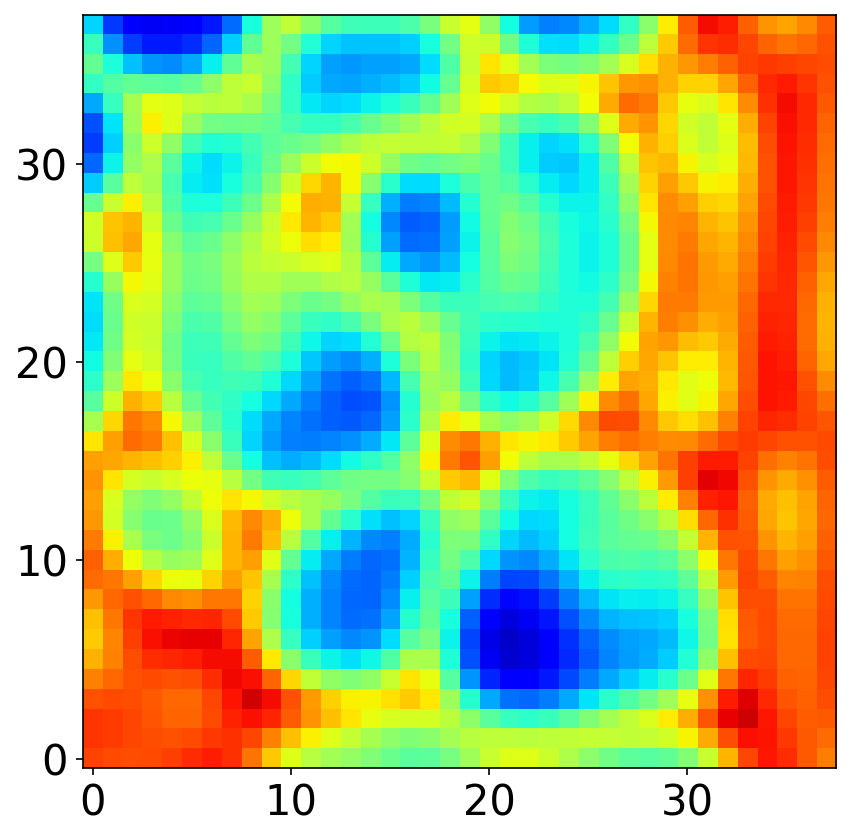}
    \subcaption*{MGN}
  \end{subfigure}\hfill
  \begin{subfigure}[t]{0.24\textwidth}
    \centering\includegraphics[width=\linewidth]{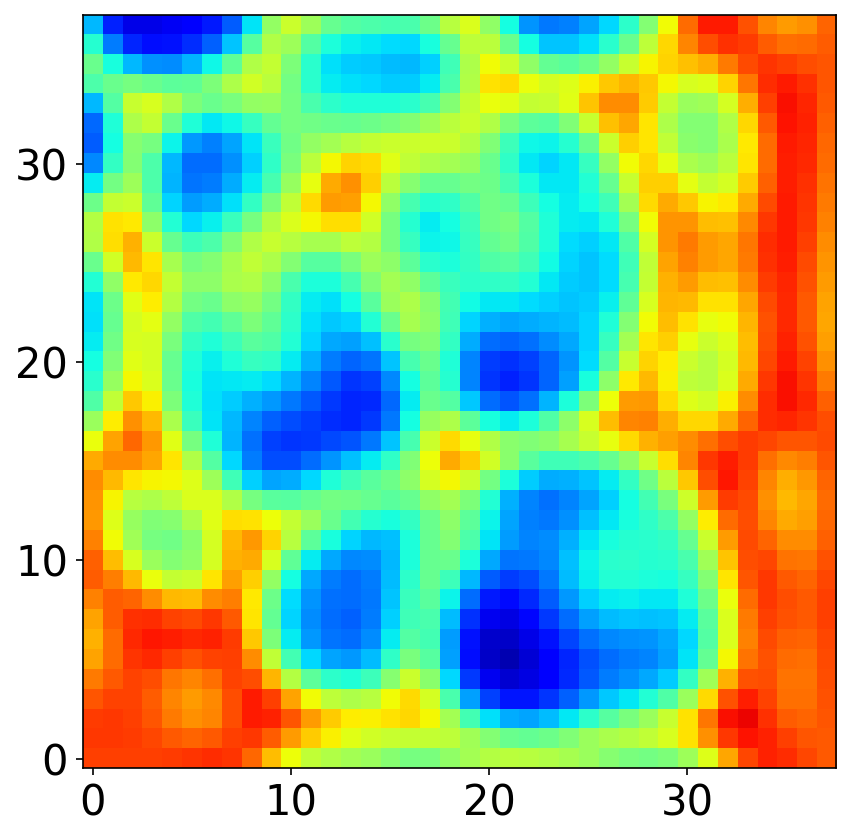}
    \subcaption*{GraphCON}
  \end{subfigure}\hfill
  \begin{subfigure}[t]{0.24\textwidth}
    \centering\includegraphics[width=\linewidth]{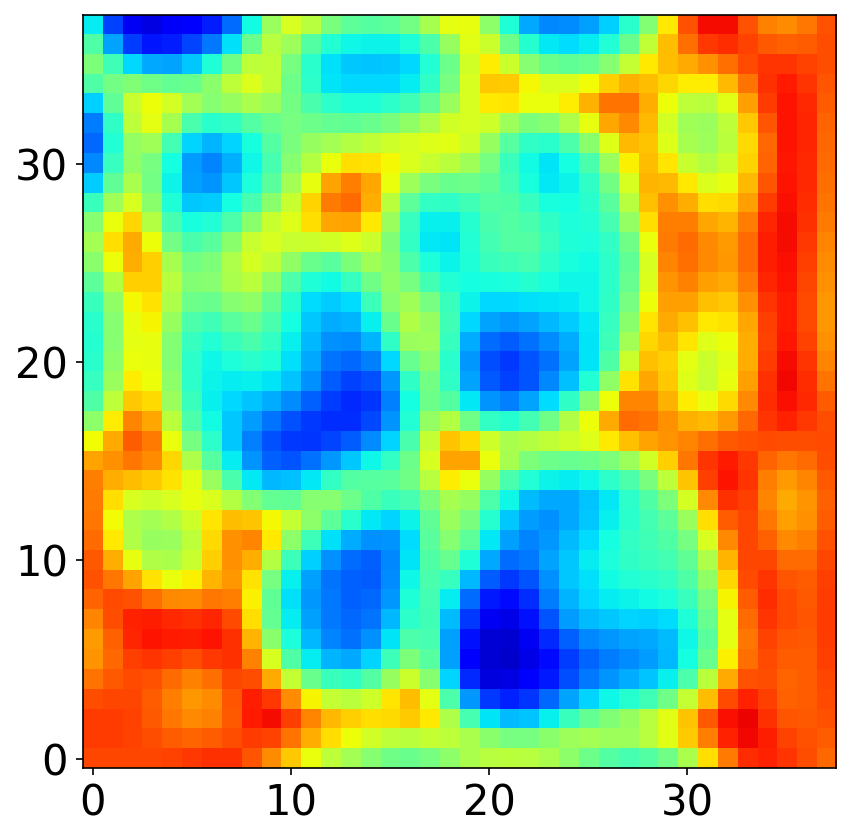}
    \subcaption*{\model{}}
  \end{subfigure}


  \vspace{0.6em}
  \begin{subfigure}[t]{0.24\textwidth}
    \centering\includegraphics[width=\linewidth]{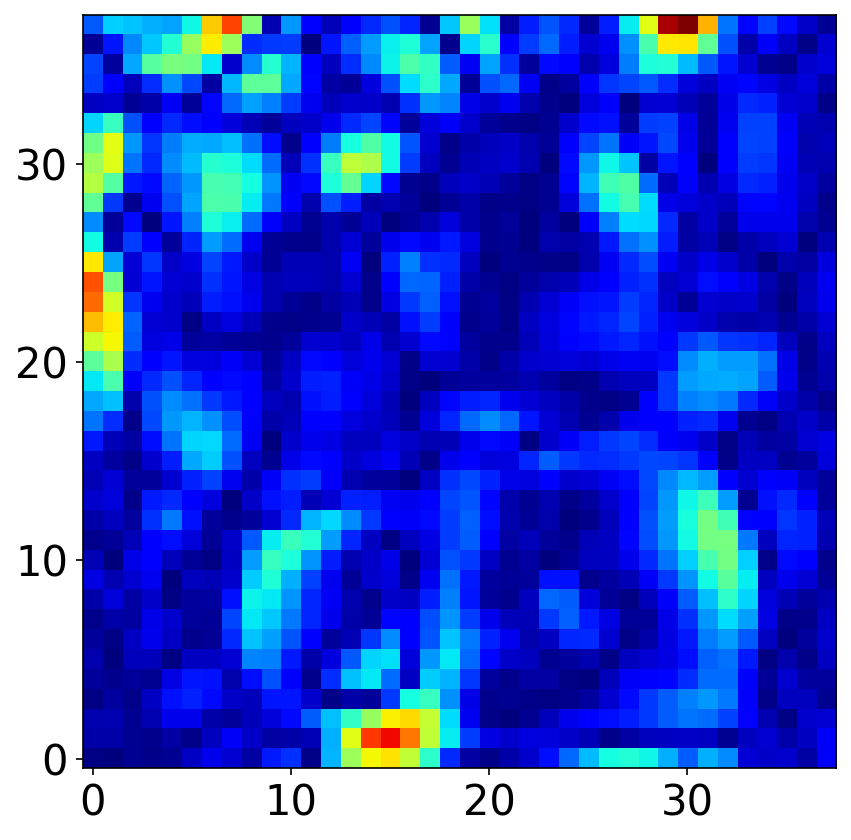}
    \subcaption*{FNO-RNN error}
  \end{subfigure}\hfill
  \begin{subfigure}[t]{0.24\textwidth}
    \centering\includegraphics[width=\linewidth]{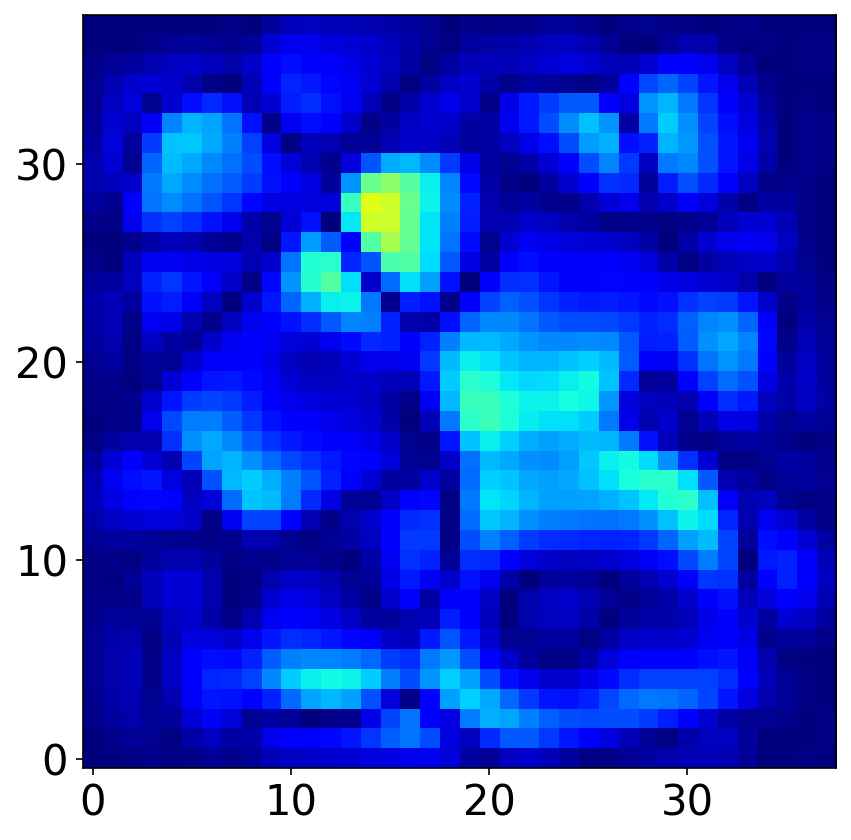}
    \subcaption*{MGN error}
  \end{subfigure}\hfill
  \begin{subfigure}[t]{0.24\textwidth}
    \centering\includegraphics[width=\linewidth]{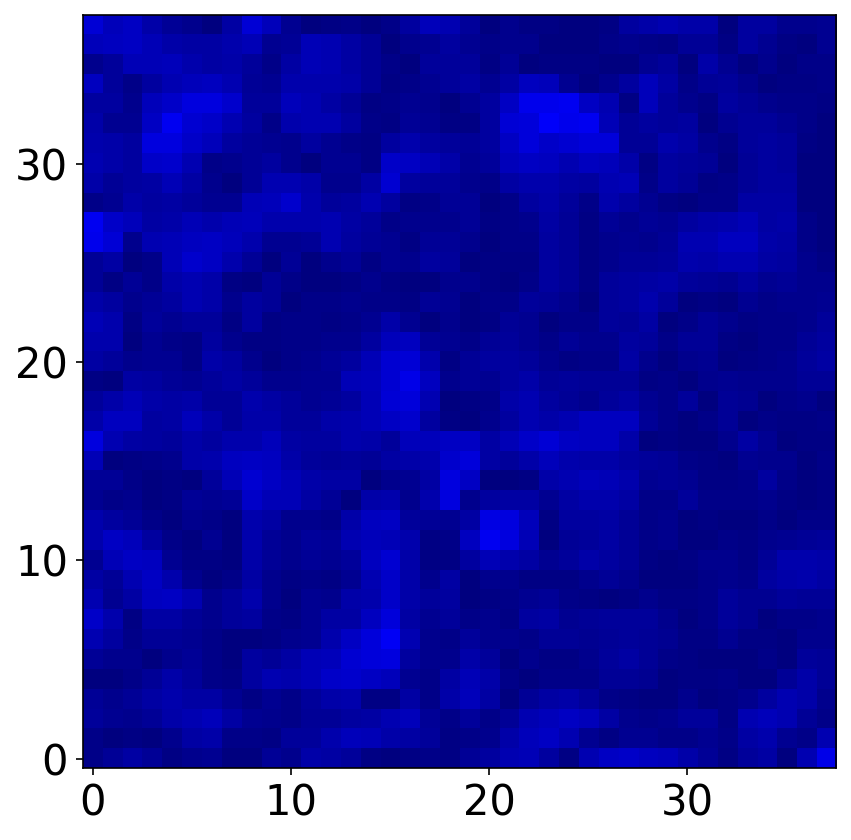}
    \subcaption*{GraphCON error}
  \end{subfigure}\hfill
  \begin{subfigure}[t]{0.24\textwidth}
    \centering\includegraphics[width=\linewidth]{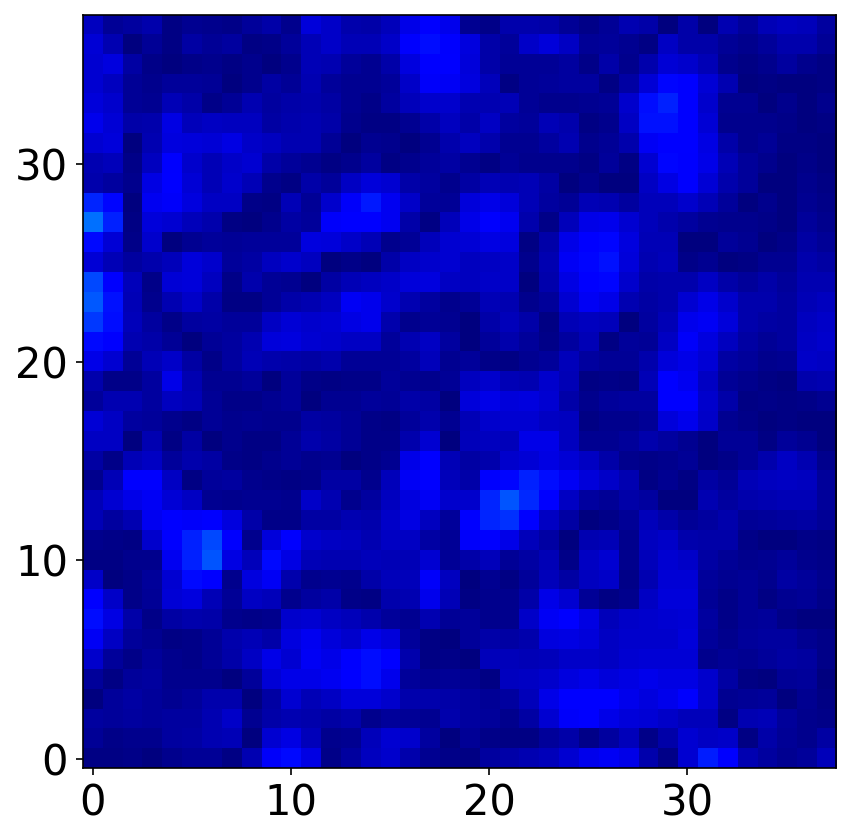}
    \subcaption*{\model{} error}
  \end{subfigure}

  \rowcbar{0.5\textwidth}{2mm}{0.000003}{0.786606}{Absolute Error}

  \caption{Kuramoto-Sivashinsky (KS) qualitative comparison on one test sample. Top: initial condition and ground-truth final state at time $t{=}T$. Middle: predictions at $t{=}T$ from FNO-RNN MGN, GraphCON, and \textbf{\model{}}. Bottom: absolute error maps, $\lvert u_{\text{pred}} - u_{\text{gt}}\rvert$. The horizontal colorbars indicate the error magnitude in the bottom row.}

  \label{fig:ks_qualitative_2}
\end{figure*}

\paragraph{Warmup Phase.} \Cref{fig:warmup_qualitative} illustrates the effect of different warmup steps $l$ on the Plate Deformation task over a full rollout. With almost no warmup ($l=1$), the model produces sharp local patterns that gradually recover towards the correct global shape, matching the ground truth only after about $t=10$. Increasing $l$ accelerates this recovery: with $l=5$, the deformation aligns well already by $t=5$, and with $l=10$ by $t=3$. For larger values, $l=30$ and $l=50$, the global shape is already close to the ground truth at $t=1$. These results highlight the role of the warmup phase in seeding long-range information before rollout, enabling the model to capture the dynamics more effectively from the very first step. 

\begin{figure*}[t]
  \centering
  \begin{subfigure}[t]{0.24\textwidth}
    \centering\includegraphics[width=\linewidth]{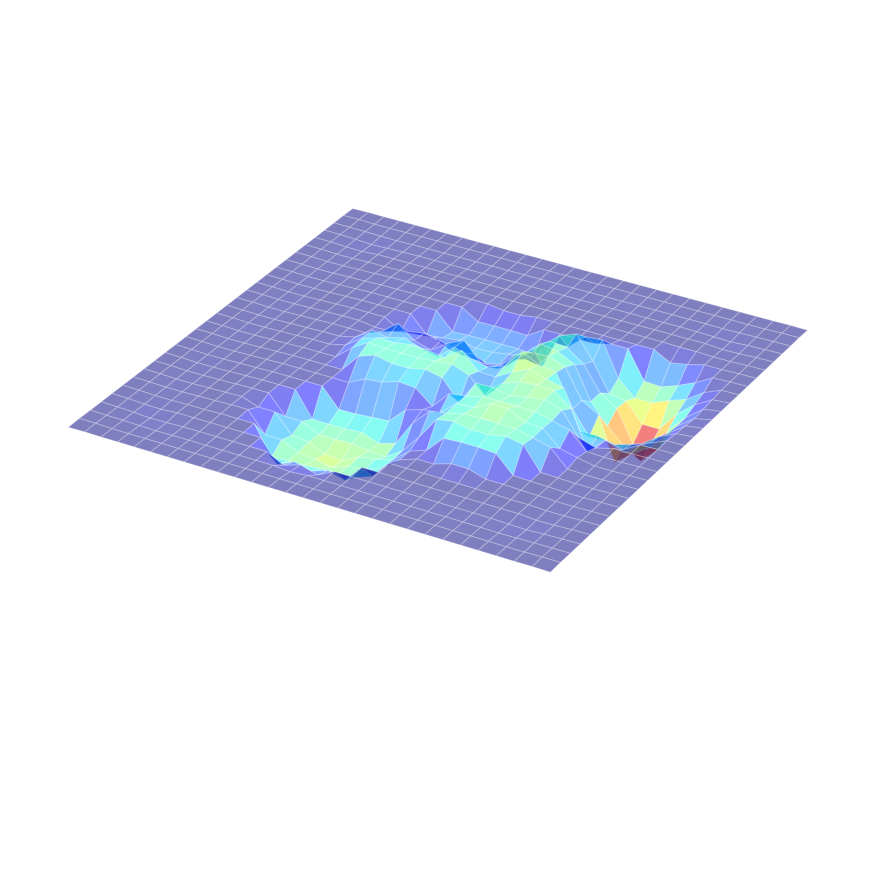}
  \end{subfigure}\hfill
  \begin{subfigure}[t]{0.24\textwidth}
    \centering\includegraphics[width=\linewidth]{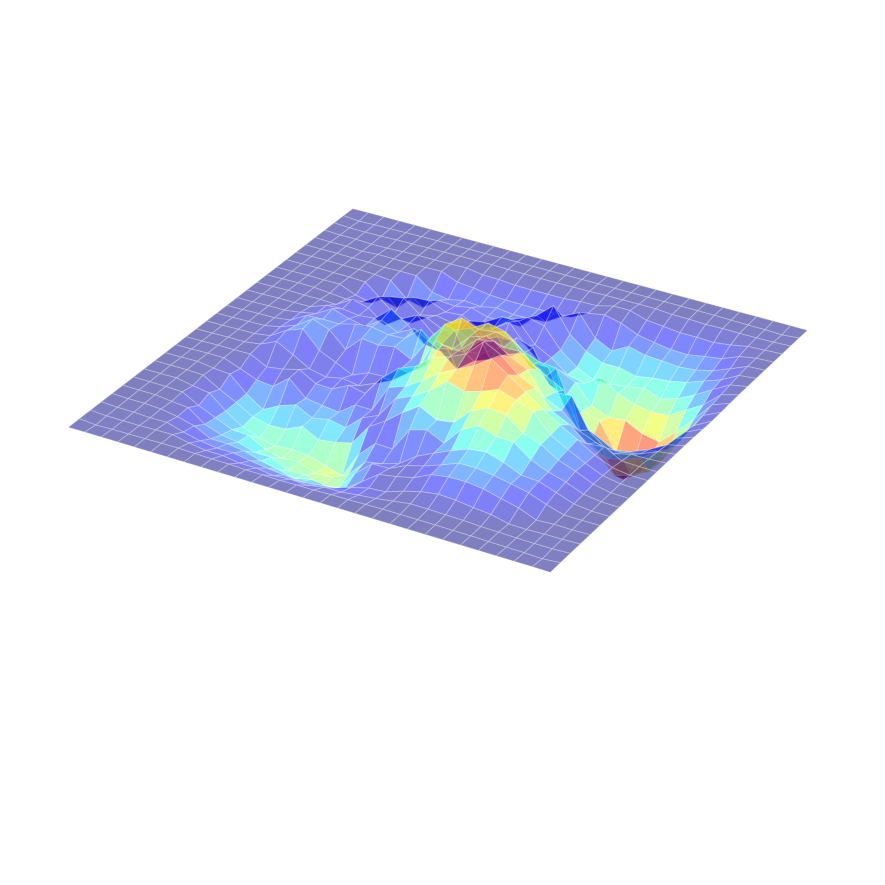}
  \end{subfigure}\hfill
  \begin{subfigure}[t]{0.24\textwidth}
    \centering\includegraphics[width=\linewidth]{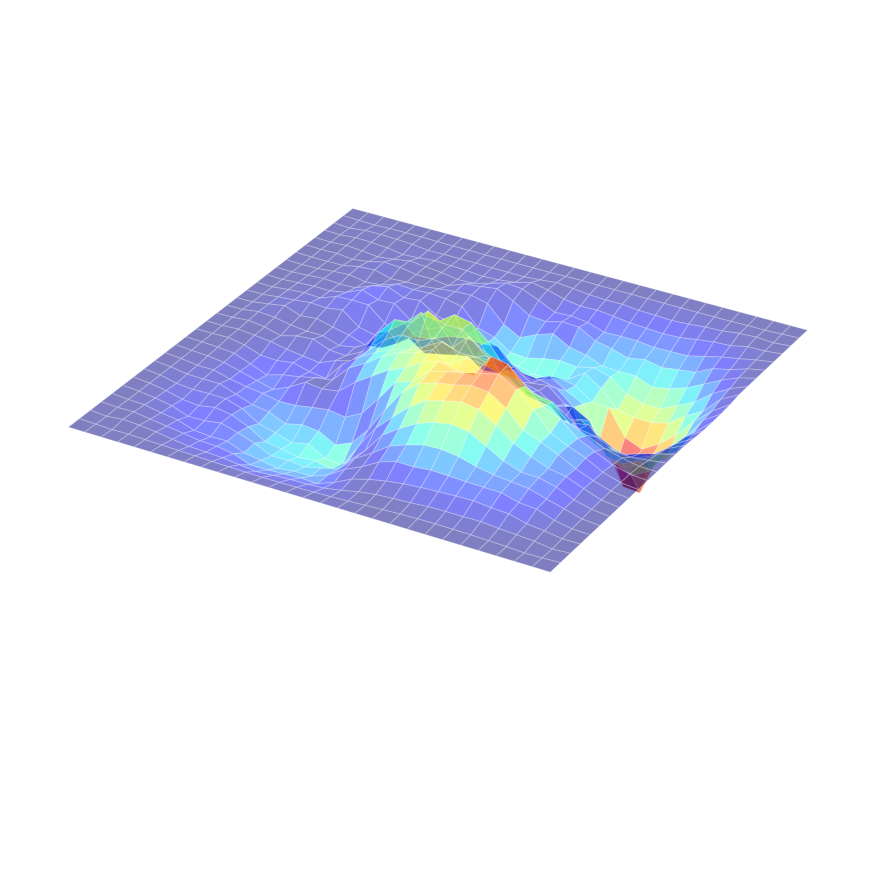}
  \end{subfigure}\hfill
  \begin{subfigure}[t]{0.24\textwidth}
    \centering\includegraphics[width=\linewidth]{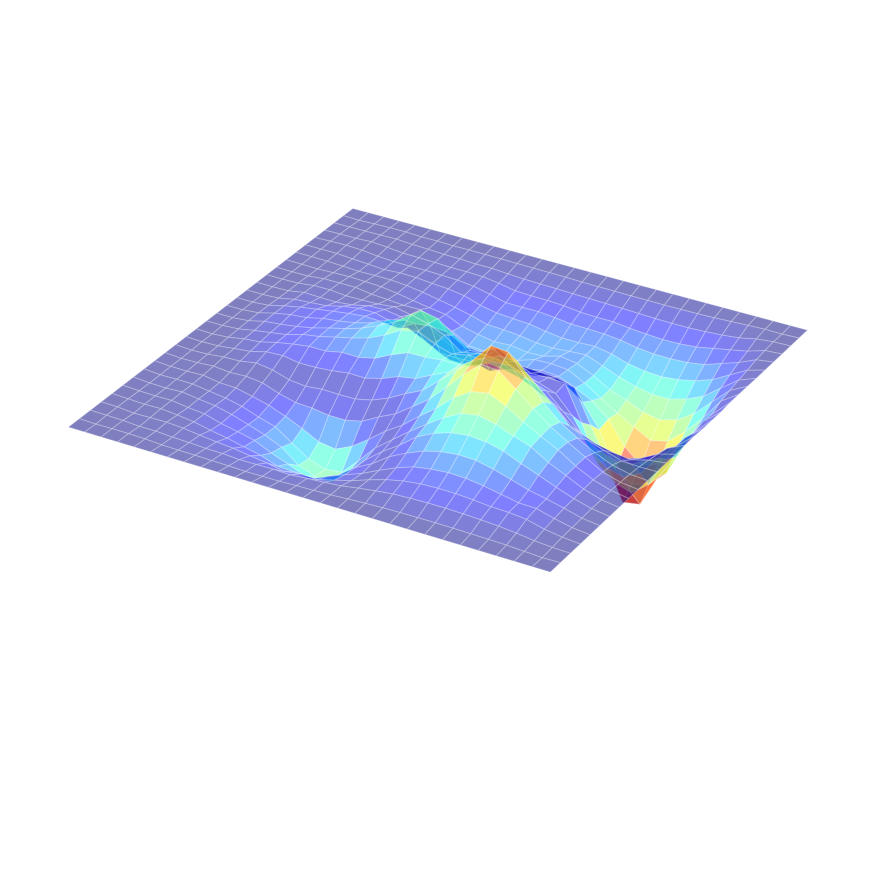}
  \end{subfigure}

  \begin{subfigure}[t]{0.24\textwidth}
    \centering\includegraphics[width=\linewidth]{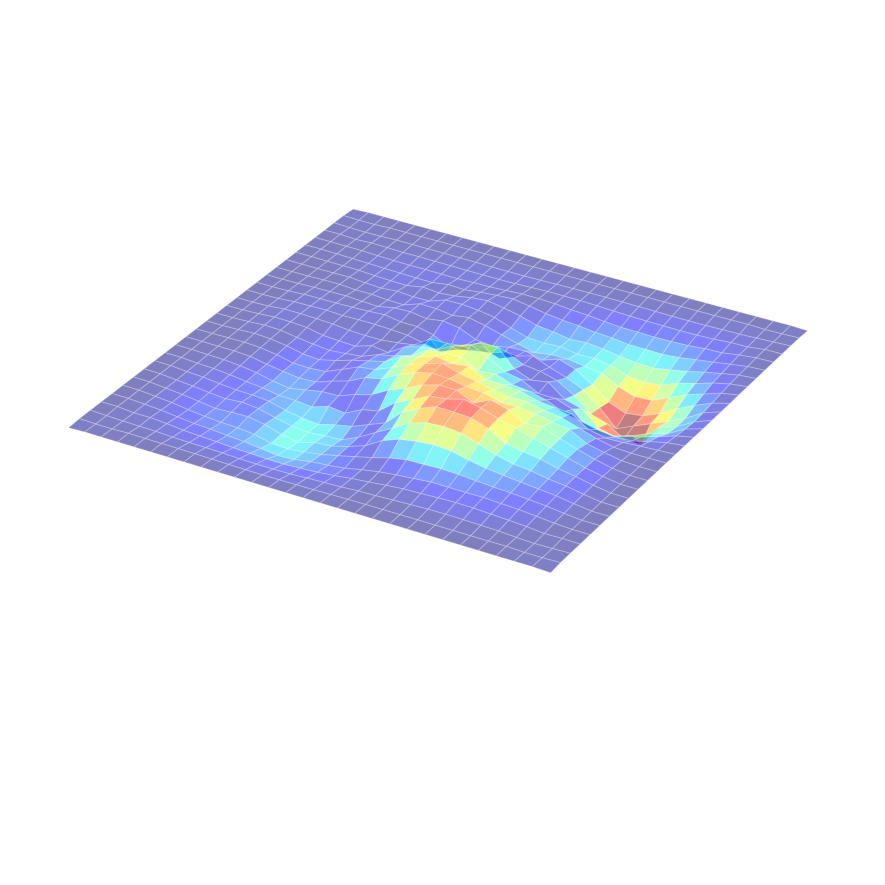}
  \end{subfigure}\hfill
  \begin{subfigure}[t]{0.24\textwidth}
    \centering\includegraphics[width=\linewidth]{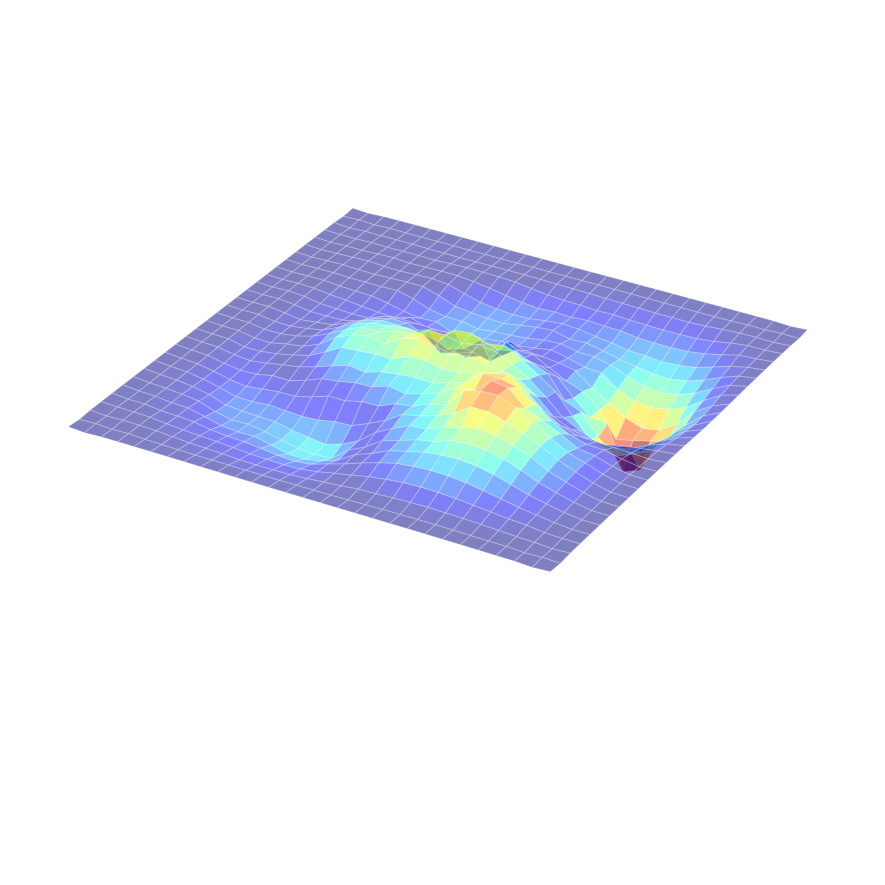}
  \end{subfigure}\hfill
  \begin{subfigure}[t]{0.24\textwidth}
    \centering\includegraphics[width=\linewidth]{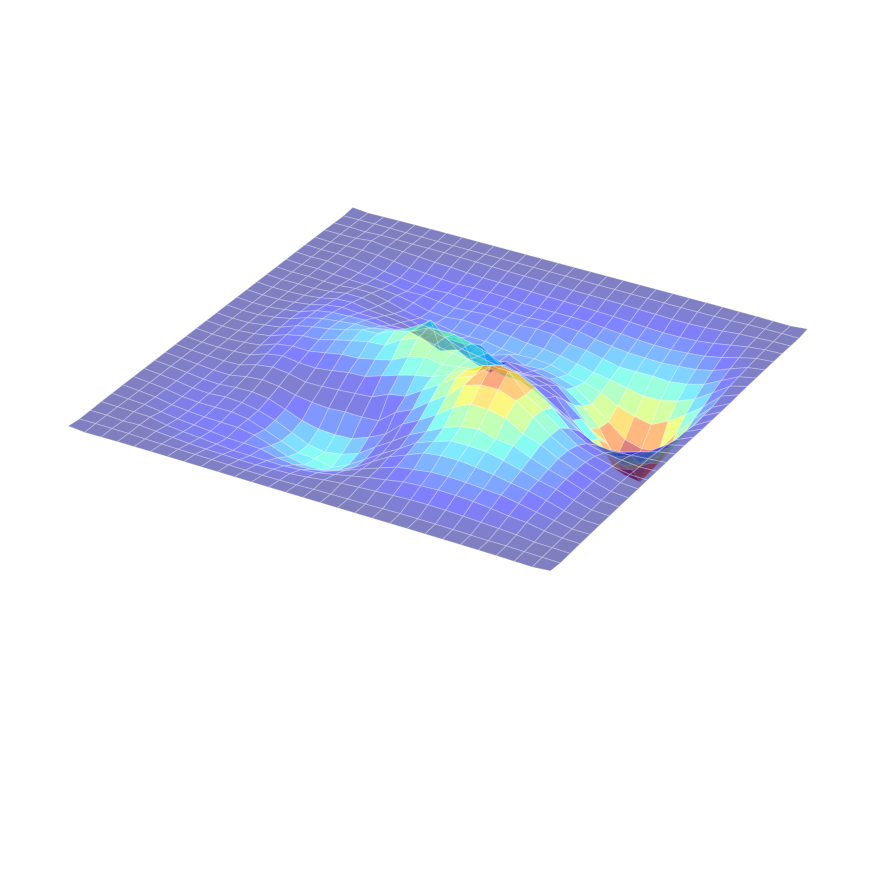}
  \end{subfigure}\hfill
  \begin{subfigure}[t]{0.24\textwidth}
    \centering\includegraphics[width=\linewidth]{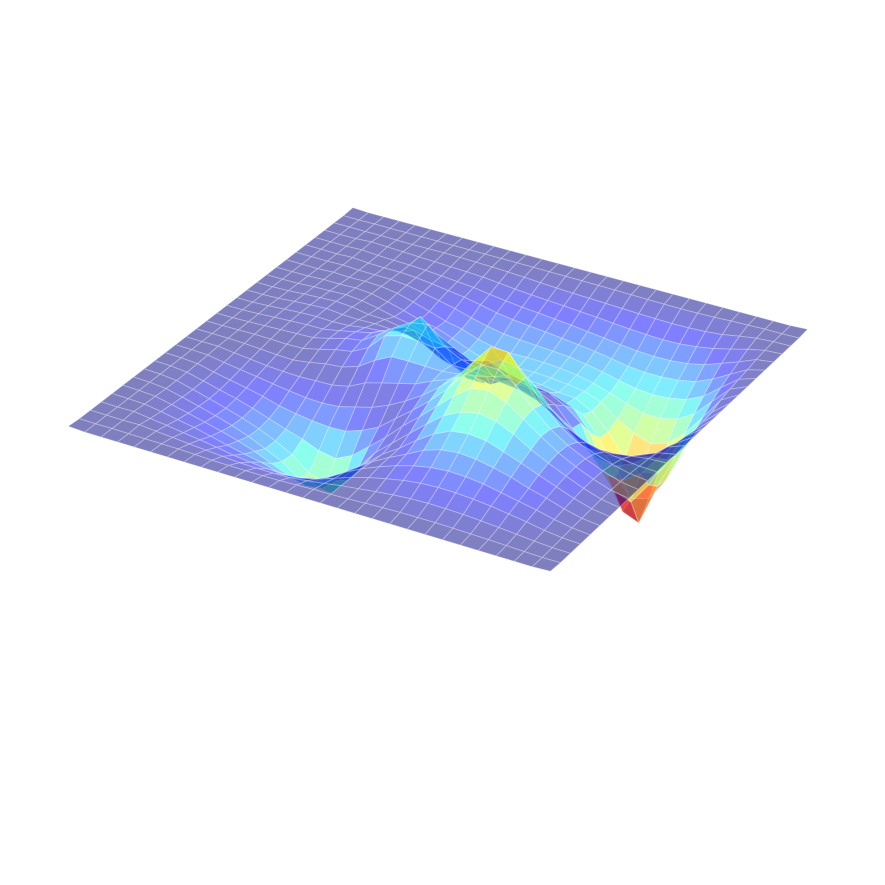}
  \end{subfigure}

  \begin{subfigure}[t]{0.24\textwidth}
    \centering\includegraphics[width=\linewidth]{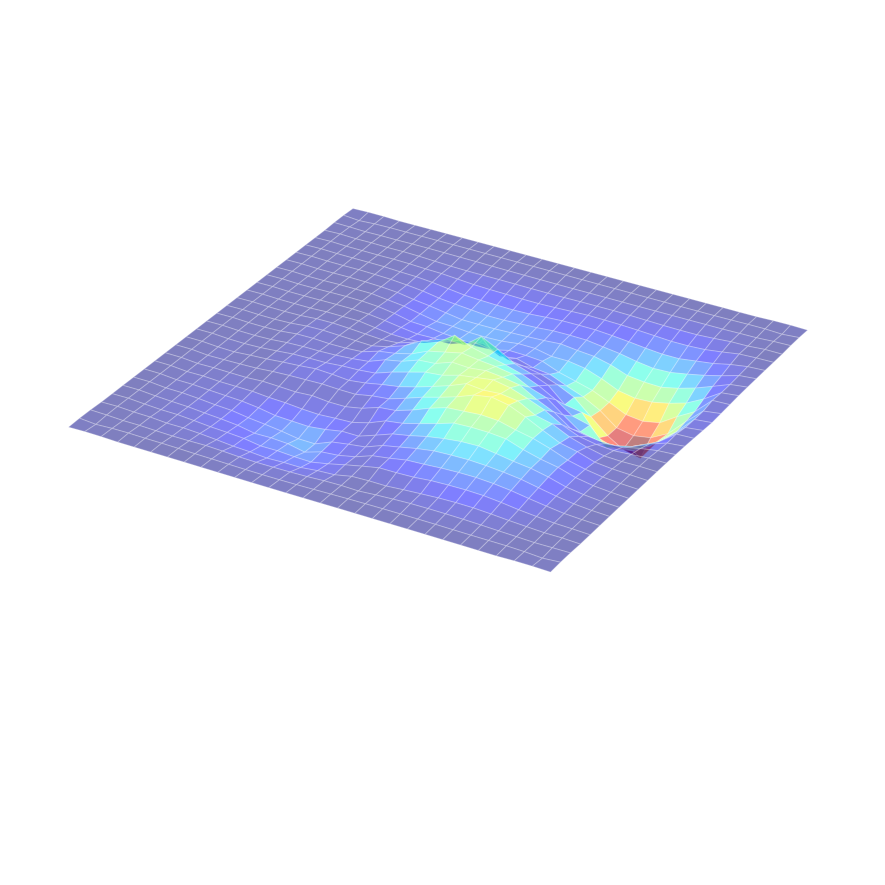}
  \end{subfigure}\hfill
  \begin{subfigure}[t]{0.24\textwidth}
    \centering\includegraphics[width=\linewidth]{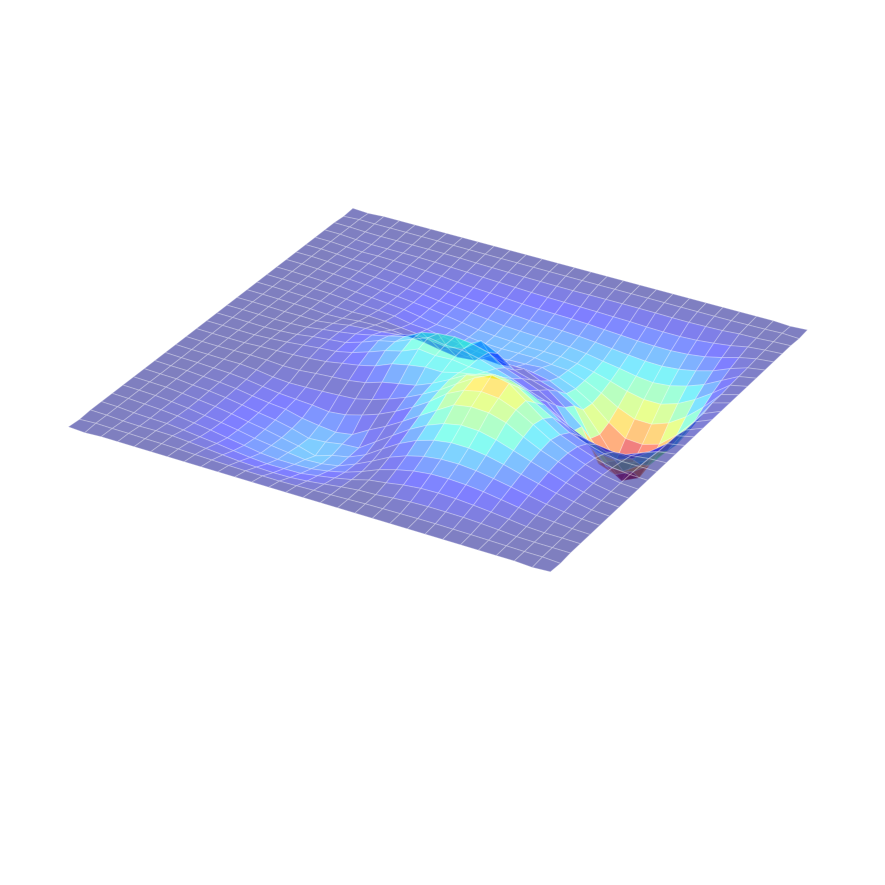}
  \end{subfigure}\hfill
  \begin{subfigure}[t]{0.24\textwidth}
    \centering\includegraphics[width=\linewidth]{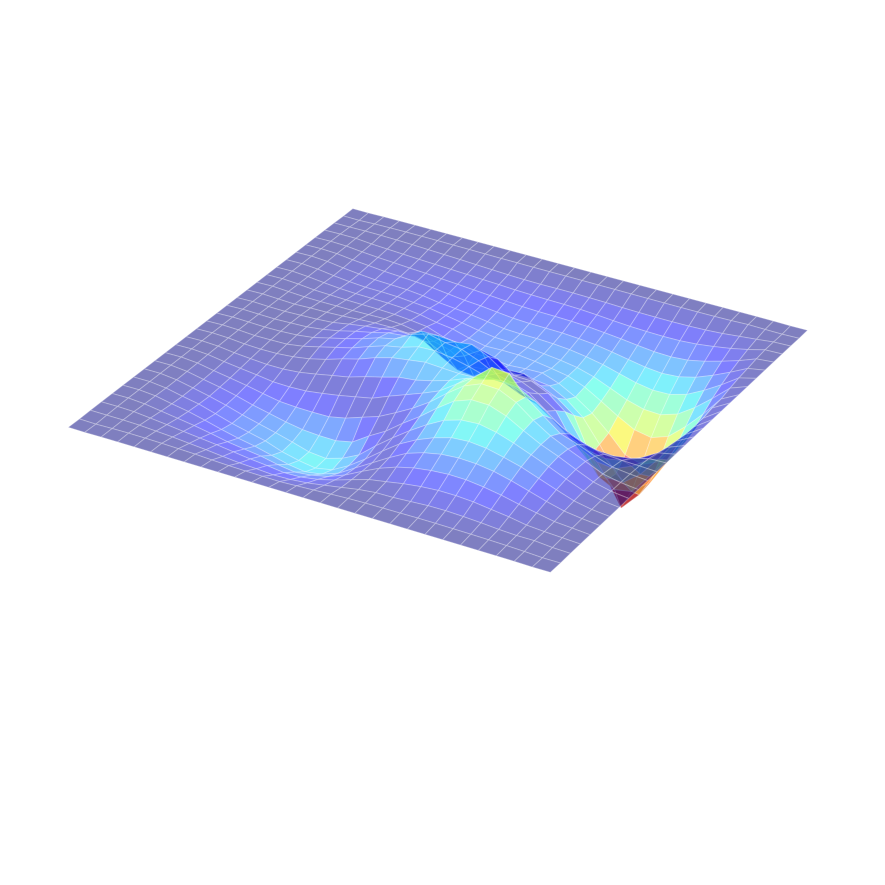}
  \end{subfigure}\hfill
  \begin{subfigure}[t]{0.24\textwidth}
    \centering\includegraphics[width=\linewidth]{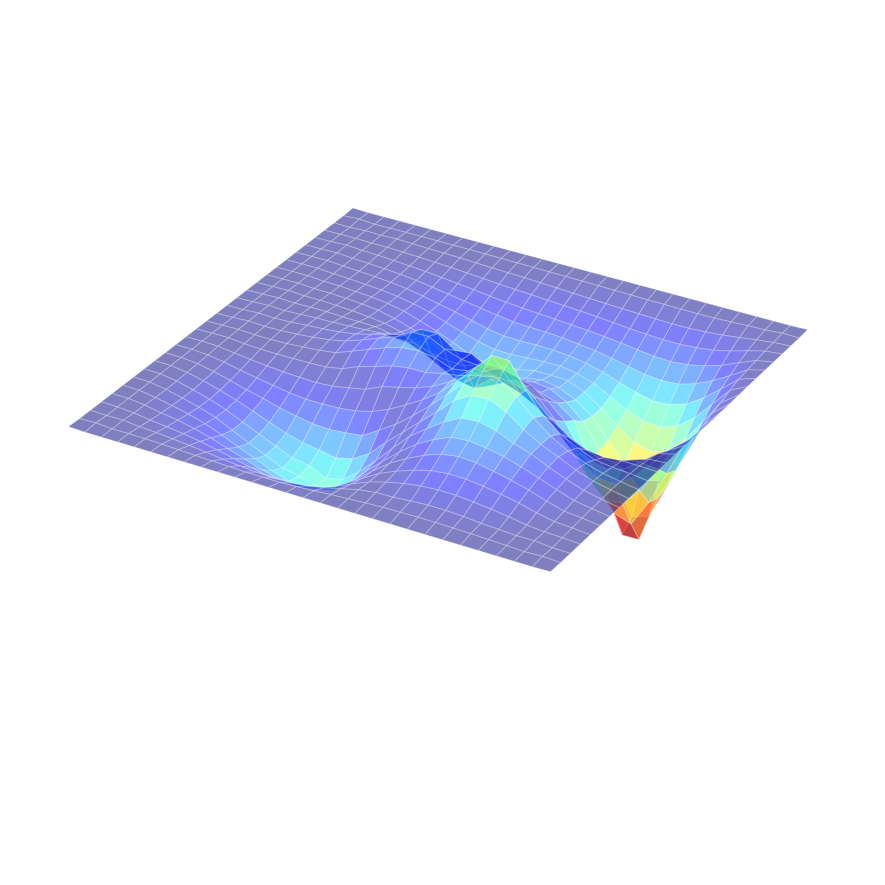}
  \end{subfigure}

  \begin{subfigure}[t]{0.24\textwidth}
    \centering\includegraphics[width=\linewidth]{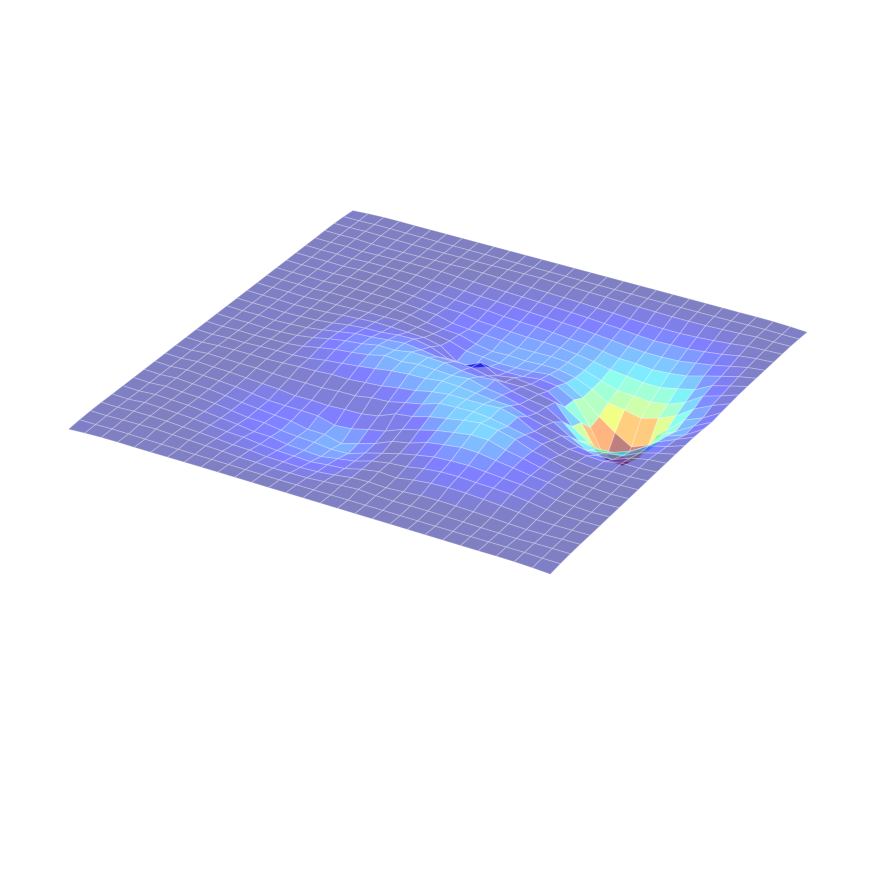}
  \end{subfigure}\hfill
  \begin{subfigure}[t]{0.24\textwidth}
    \centering\includegraphics[width=\linewidth]{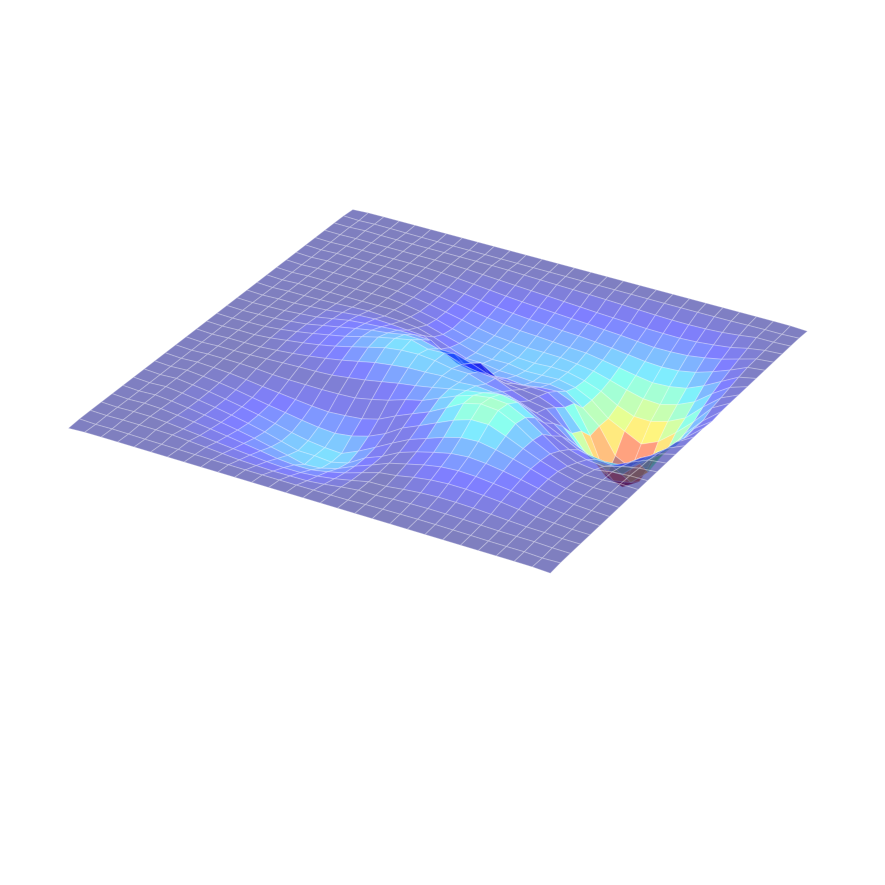}
  \end{subfigure}\hfill
  \begin{subfigure}[t]{0.24\textwidth}
    \centering\includegraphics[width=\linewidth]{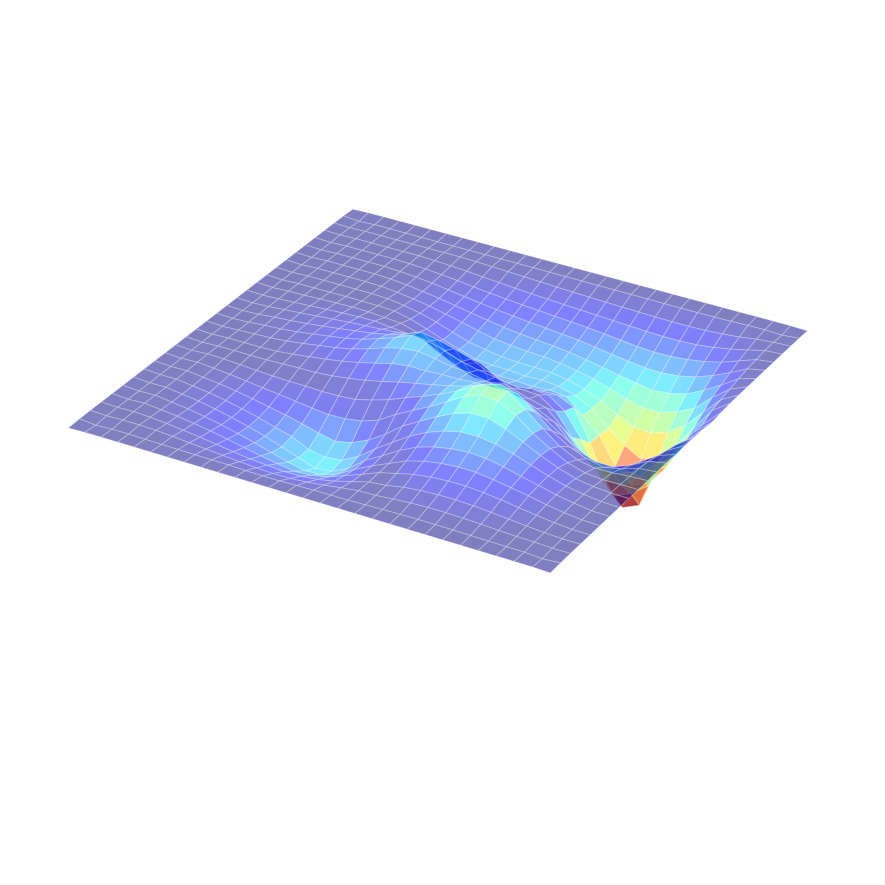}
  \end{subfigure}\hfill
  \begin{subfigure}[t]{0.24\textwidth}
    \centering\includegraphics[width=\linewidth]{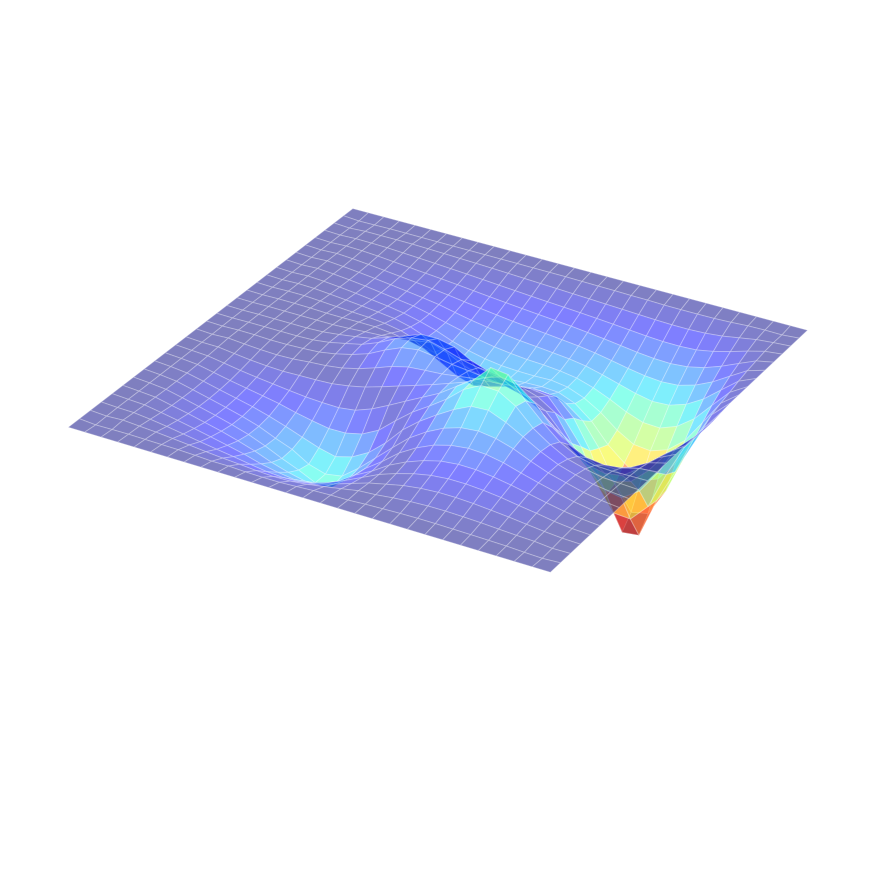}
  \end{subfigure}

  \begin{subfigure}[t]{0.24\textwidth}
    \centering\includegraphics[width=\linewidth]{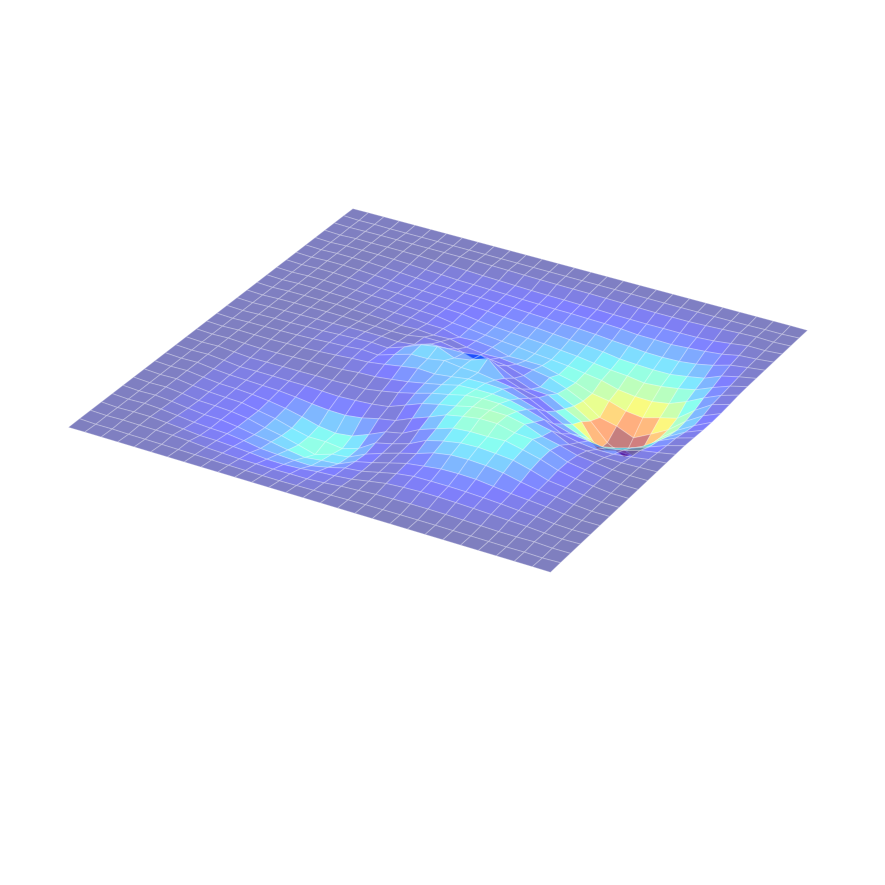}
  \end{subfigure}\hfill
  \begin{subfigure}[t]{0.24\textwidth}
    \centering\includegraphics[width=\linewidth]{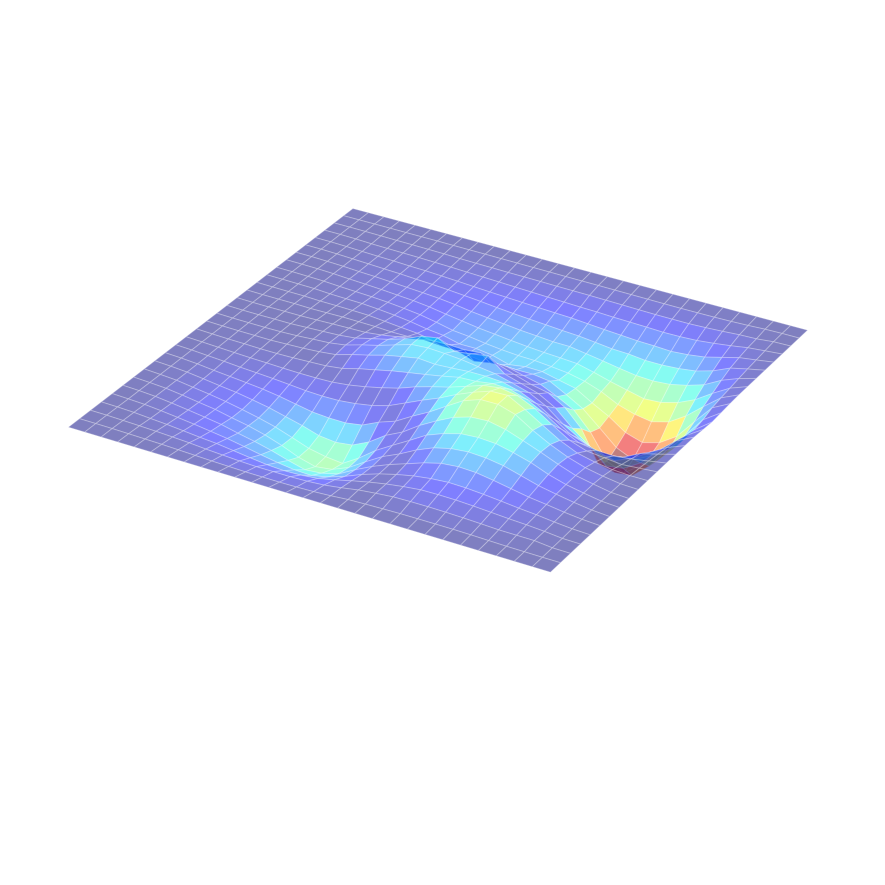}
  \end{subfigure}\hfill
  \begin{subfigure}[t]{0.24\textwidth}
    \centering\includegraphics[width=\linewidth]{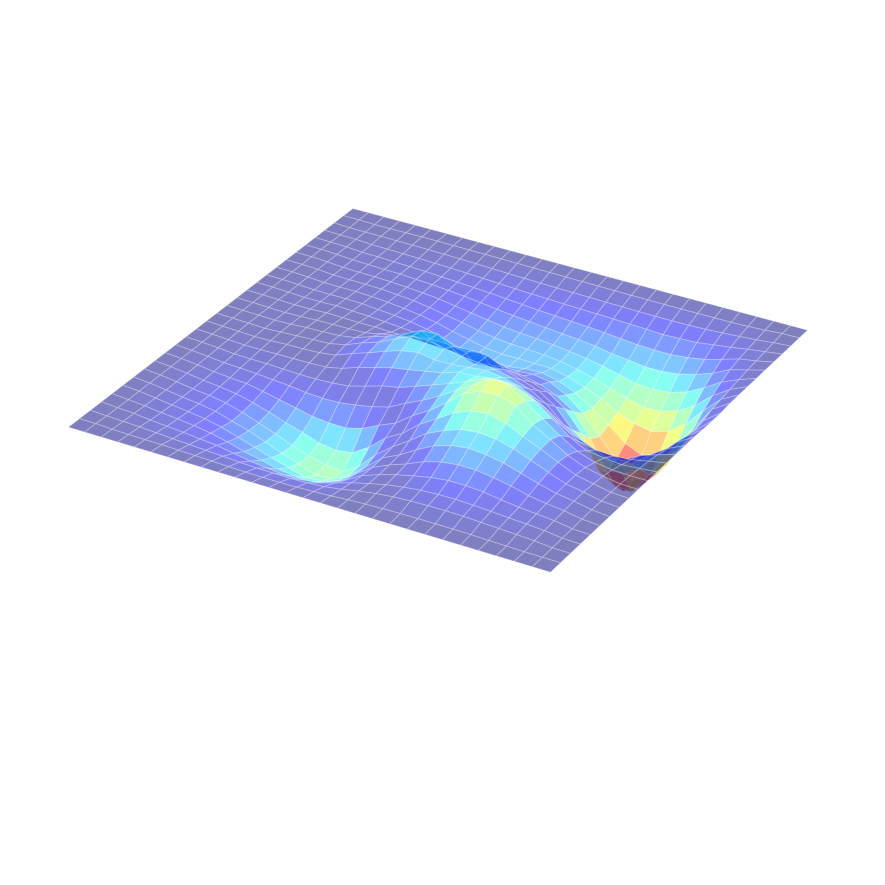}
  \end{subfigure}\hfill
  \begin{subfigure}[t]{0.24\textwidth}
    \centering\includegraphics[width=\linewidth]{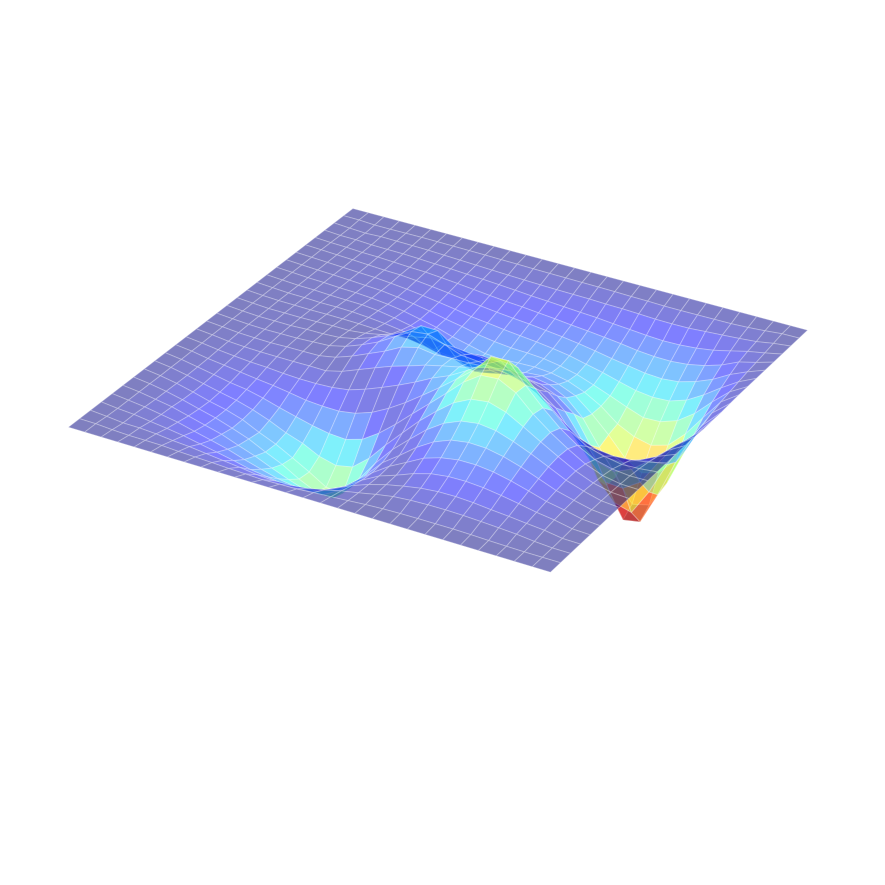}
  \end{subfigure}

  \vspace{0.5em}
  \noindent\makebox[\textwidth]{\rule{\textwidth}{0.4pt}}

  \begin{subfigure}[t]{0.24\textwidth}
    \centering\includegraphics[width=\linewidth]{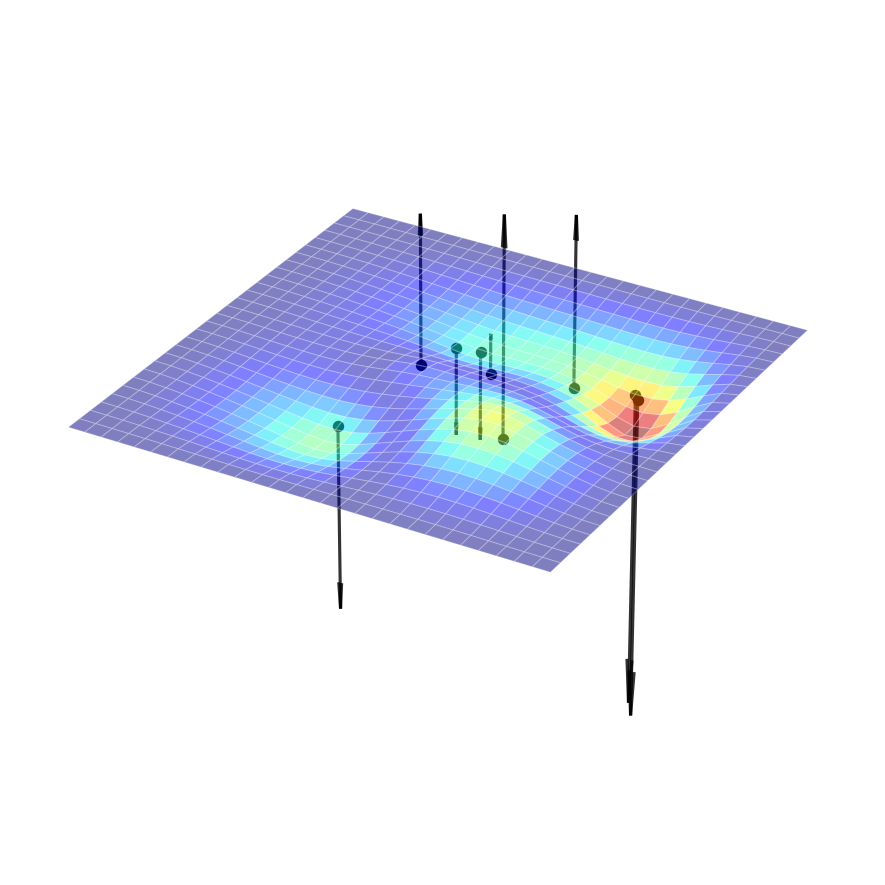}
    \subcaption*{$t=1$}
  \end{subfigure}\hfill
  \begin{subfigure}[t]{0.24\textwidth}
    \centering\includegraphics[width=\linewidth]{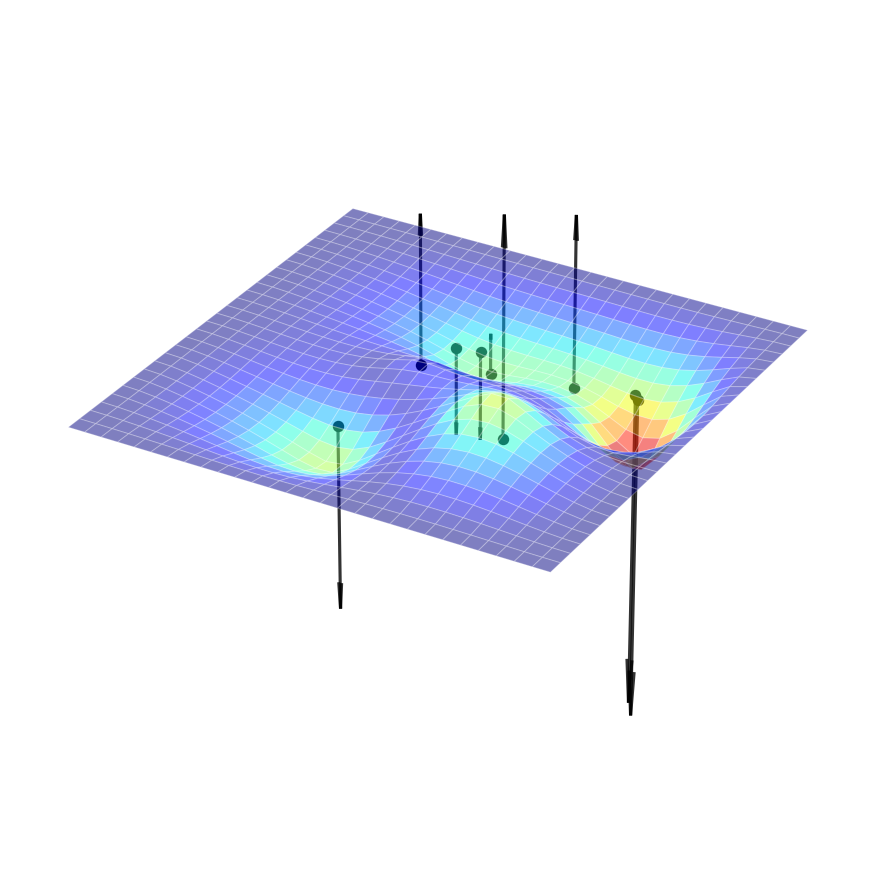}
    \subcaption*{$t=3$}
  \end{subfigure}\hfill
  \begin{subfigure}[t]{0.24\textwidth}
    \centering\includegraphics[width=\linewidth]{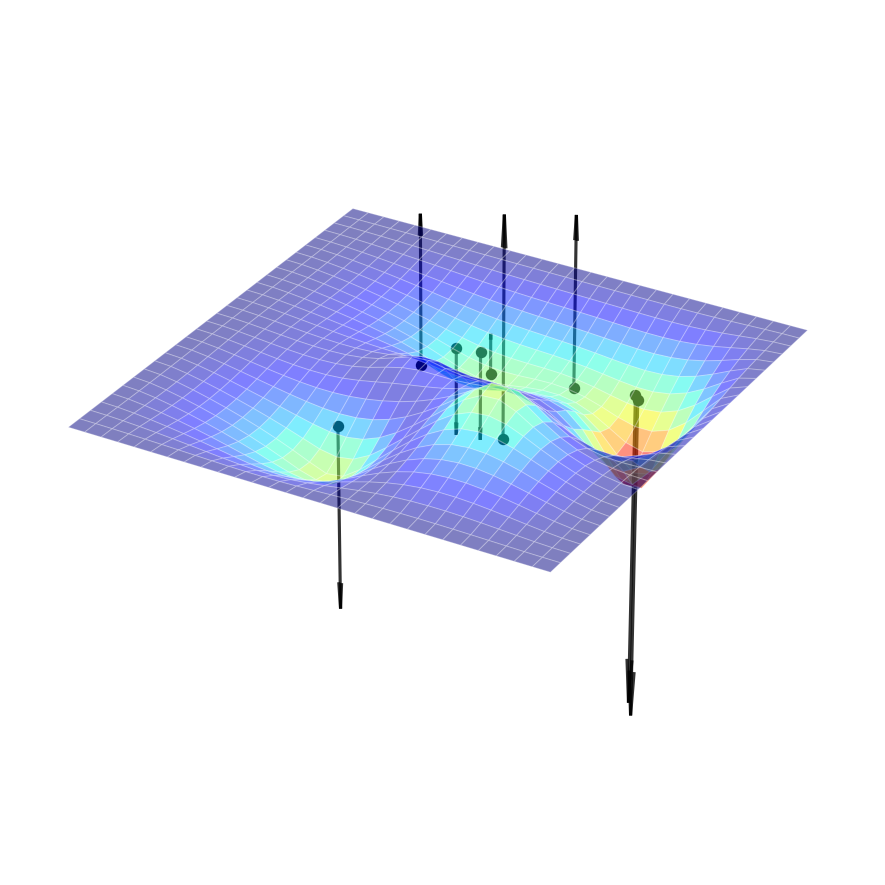}
    \subcaption*{$t=5$}
  \end{subfigure}\hfill
  \begin{subfigure}[t]{0.24\textwidth}
    \centering\includegraphics[width=\linewidth]{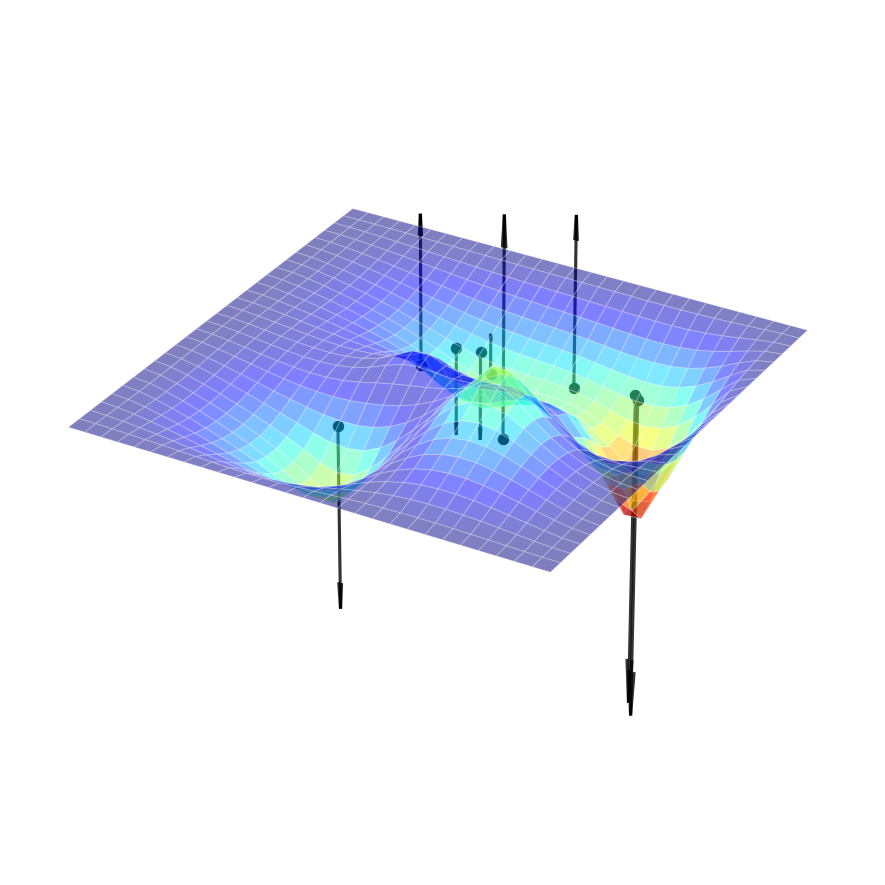}
    \subcaption*{$t=10$}
  \end{subfigure}

  \caption{Comparison between different warmup steps $l$ from top to bottom $l=\{1,5,10,30,50\}$ with ground truth in the last row, on Plate Deformation task.}
  \label{fig:warmup_qualitative}
\end{figure*}

\end{document}